\documentclass[11pt, a4paper]{utda}

\usepackage{microtype}
\usepackage{subcaption}
\usepackage{graphicx}
\graphicspath{./Figures}
\usepackage{amsmath,amssymb,amsfonts}
\usepackage{acronym}
\usepackage{enumitem}
\usepackage{caption}
\usepackage{booktabs}
\usepackage{balance}
\usepackage{xspace,setspace}
\usepackage[dvipsnames]{xcolor}
\usepackage{tabularx,colortbl,multirow,array,makecell}
\usepackage{mathtools}
\usepackage{amsthm}
\usepackage{hyperref}
\usepackage{todonotes}
\usepackage[misc]{ifsym}
\usepackage{bm}
\usepackage{amsmath}
\usepackage{amssymb}

\makeatletter
\renewcommand{\paragraph}{%
  \@startsection{paragraph}{4}%
  {\z@}{0ex \@plus 0ex \@minus 0ex}{-1em}%
  {\normalfont\normalsize\bfseries}%
}
\makeatother

\makeatletter
\DeclareRobustCommand\onedot{\futurelet\@let@token\@onedot}
\def\@onedot{\ifx\@let@token.\else.\null\fi\xspace}

\makeatother

\definecolor{gblue}{HTML}{4285F4}
\definecolor{gred}{HTML}{DB4437}

\theoremstyle{plain}
\newtheorem{theorem}{Theorem}[section]
\newtheorem{proposition}[theorem]{Proposition}

\newtheorem{corollary}[theorem]{Corollary}
\theoremstyle{definition}

\theoremstyle{remark}
\newtheorem{remark}[theorem]{Remark}

\definecolor{bigaired}{RGB}{156, 0, 0}
\definecolor{uclablue}{RGB}{39, 116, 174}

\definecolor{darkred}{RGB}{200, 0, 0}
\definecolor{darkblue}{RGB}{0, 0, 200}
\definecolor{blue}{RGB}{0, 0, 250}

\definecolor{light}{RGB}{225, 250, 250}
\definecolor{lightgray}{RGB}{0.9, 0.9, 0.9}
\definecolor{lightred}{RGB}{250, 200, 200}
\definecolor{lightblue}{RGB}{210, 220, 250}

\definecolor{doderblue}{RGB}{30, 144, 255}
\definecolor{select}{RGB}{222, 235, 247}
\definecolor{unselect}{RGB}{247, 207, 206}

\hypersetup{colorlinks=true, citecolor=uclablue, linkcolor=blue, urlcolor=darkblue}

\newcommand{\R}{\mathbb{R}}

\acrodef{llm}[LLM]{large language model}
\acrodef{lrm}[LRM]{large reasoning model}
\acrodef{ar}[AR]{autoregressive}
\acrodef{sd}[SD]{speculative decoding}

\usepackage[table]{xcolor} 
\usepackage{booktabs}
\usepackage{multirow}
\usepackage[utf8]{inputenc} 
\usepackage[T1]{fontenc}    
\usepackage{hyperref}       
\usepackage{url}            
\usepackage{booktabs}       
\usepackage{amsfonts}       
\usepackage{nicefrac}       
\usepackage{microtype}      
\usepackage{xcolor}         
\usepackage{fontawesome5}

\usepackage{amsmath} 
\DeclareMathOperator*{\argmin}{arg\,min}

\usepackage{algpseudocode}
\usepackage{wrapfig}
\usepackage{algorithm}
\usepackage{placeins}
\usepackage{graphicx}
\usepackage{subcaption}
\usepackage{pifont}

\usepackage[most]{tcolorbox}

\definecolor{gray94}{gray}{.94}
\definecolor{gray90}{gray}{.90}
\definecolor{basegray}{RGB}{245,245,245}
\definecolor{rlvrblue}{RGB}{230,245,255}

\definecolor{lightred}{RGB}{255,240,240}
\definecolor{lightgreen}{RGB}{240,255,240}
\definecolor{lightblue}{RGB}{240,245,255}
\definecolor{lightgray}{RGB}{245,245,245}
\definecolor{MyGreen}{rgb}{0.13, 0.55, 0.13}

\newtcolorbox{AIbox}[2][]{aibox,title=#2,#1}

\definecolor{rliableolive}{HTML}{BBCC33}
\definecolor{rliableblue}{HTML}{77AADD}
\definecolor{rliablered}{HTML}{EE8866}
\definecolor{SDEblue}{RGB}{28 58 88}
\definecolor{cc1}{rgb}{1.0, 0.44, 0.37}
\definecolor{cc2}{rgb}{0.0, 0.2, 0.6}
\definecolor{cc3}{RGB}{255, 191, 0}
\definecolor{cc4}{RGB}{0, 128, 128}

\providecommand{\R}{\mathbb{R}}

\providecommand{\merge}{\mathrm{merge}}
\providecommand{\TA}{\mathrm{TA}}
\providecommand{\TIES}{\mathrm{TIES}}
\providecommand{\TSV}{\mathrm{TSV}}
\providecommand{\RAM}{\mathrm{RAM}}

\providecommand{\TopKMask}{\operatorname{TopKMask}}
\providecommand{\sign}{\operatorname{sign}}

\providecommand{\blockdiag}{\operatorname{blockdiag}}

\definecolor{myblue}{RGB}{0,77,64} 
\definecolor{mybg}{RGB}{240,248,247} 

\newenvironment{isoblock}[1][1.0em]{%
  \begin{list}{}{%
    \setlength{\leftmargin}{#1}%
    \setlength{\rightmargin}{0pt}%
    \setlength{\topsep}{0.25em}%
    \setlength{\partopsep}{0pt}%
    \setlength{\parsep}{0.15em}%
    \setlength{\itemsep}{0pt}%
  }%
  \item[]\ignorespaces
}{%
  \end{list}
}

\newcommand\blfootnote[1]{%
  \begingroup
  \renewcommand\thefootnote{}\footnote{#1}%
  \addtocounter{footnote}{-1}%
  \endgroup
}

\newtcolorbox{takeawaybox}{
  enhanced,
  colback=mybg,
  colframe=myblue,
  coltitle=myblue,
  boxrule=0.6pt,
  arc=6pt,
  left=6pt,
  right=6pt,
  top=6pt,
  bottom=6pt,
  title=\textsc{Takeaway}
}

\DeclareRobustCommand{\isosymbol}{%
  \tikz[baseline=-0.52ex, x=1.08em, y=1.08em, line cap=round, line join=round]{%
    \begin{scope}[rotate=-34]
      \draw[orange, line width=1.05pt] (0,0) ellipse [x radius=0.58, y radius=0.18];%
    \end{scope}%
    \filldraw[fill=white, draw=orange, line width=1.08pt] (0,0) circle (0.29);%
    \begin{scope}[rotate=-34]
      \draw[orange, line width=1.08pt] (-0.58,0) arc[start angle=180,end angle=342,x radius=0.58,y radius=0.18];%
      \fill[orange] (0.55,-0.055) -- (0.66,-0.005) -- (0.565,0.055) -- cycle;%
    \end{scope}%
  }%
}
\title{\texorpdfstring{\isosymbol\ \textcolor{orange}{ISO}: An RLVR-Native Optimization Stack}{ISO: An RLVR-Native Optimization Stack}}
\renewcommand{\today}{}

\author{
\textbf{Hanqing Zhu}\textsuperscript{1,\ding{171}}\blfootnote{Work completed when Hanqing Zhu was at UT Austin.
This work builds on our prior analysis of RL optimization dynamics
(\href{https://arxiv.org/abs/2511.08567}{\textit{The Path Not Taken}})
and develops a concrete RL-native optimization stack.},
\textbf{Wenyan Cong}\textsuperscript{1,\ding{171}},
\textbf{Zhizhou Sha}\textsuperscript{1},
\textbf{Sagnik Mukherjee}\textsuperscript{2},
\textbf{Xinyuan Song}\textsuperscript{3},
\textbf{David González-Martínez}\textsuperscript{6},
\textbf{Xiaoxia Wu}\textsuperscript{4},
\textbf{Yuandong Tian}\textsuperscript{5},
\textbf{Shiwei Liu}\textsuperscript{6},
\textbf{David Z. Pan}\textsuperscript{1},
\textbf{Zhangyang "Atlas" Wang}\textsuperscript{1,\textdagger} \\[0.5ex]
\textsuperscript{1}The University of Texas at Austin 
\textsuperscript{2}UIUC,
\textsuperscript{3}Emory University,
\textsuperscript{4}Together AI,\\
\textsuperscript{5}Recursive Superintelligence Inc, 
\textsuperscript{6}ELLIS Institute Tübingen
\\[1.65ex]
\centering
\href{https://github.com/zhuhanqing/ISO}{\faGithub\ Code}
\qquad
\href{https://iso-rlvr.github.io}{\faGlobe\ Website}\vspace{-1em}
}

\correspondingauthor{ \textsuperscript{\ding{171}}Equal contribution.
\textsuperscript{\textdagger}Corresponding author.
\quad \href{mailto:hqzhu@utexas.edu}{\faEnvelope\ \texttt{hqzhu@utexas.edu}}, \href{mailto:wycong@utexas.edu}{\faEnvelope\ \texttt{wycong@utexas.edu}}
}

\begin{document}

\begin{abstract}
Reinforcement learning with verifiable rewards (RLVR) is rapidly advancing the reasoning capabilities of language models, yet the optimization layer that converts reward feedback into weight-space updates remains poorly understood.
Building on our prior analysis \cite{zhu2025path},
we study this missing layer through the singular structure of model
weights and identify \emph{spectral inheritance}: RLVR can reuse the base model's weight spectra while acquiring new behavior through changes in the
associated input and output singular frames.
We further confirm that close reconstruction of learned endpoints requires both singular frames to remain adaptable: remixing only within the incoming input and output spans, or keeping either incoming subspace fixed, leaves substantially more of the checkpoint
change unexplained.

We operationalize spectral inheritance as \textbf{Isospectral Optimization (ISO)}, an RLVR-native, fixed-spectrum optimization framework with complementary offline and online instantiations.
\textit{Offline}, \textbf{ISO-Merger} combines the frame changes of shared-base specialists into a single fixed-spectrum model, requiring no post-merge data, rollouts, gradient updates, or on-policy distillation (OPD). 
It recovers complementary specialist capabilities and achieves the strongest aggregate performance
among the compared data-free merging methods.
\textit{Online}, \textbf{ISO-Optimizer} applies a chosen base optimizer,
including AdamW and Muon, to the frame variables while keeping the base spectra
fixed. Across reasoning and coding tasks ranging from 1.5B to 8B parameters,
ISO-Optimizer improves accuracy in the reported runs and reaches matched scores
with substantially fewer training steps. On \texttt{Qwen3-8B-Base}, AdamW
reaches an aggregate accuracy of $0.495$ after $270$ training steps.
ISO-AdamW reaches the same accuracy after only $100$ training steps and improves
further to $0.509$ at $210$ training steps.
Together, ISO offers a concrete answer to RLVR's missing optimization layer:
rather than inheriting pre-training optimization wholesale, design
post-training around the structure of reward-driven adaptation:
\emph{inherit the spectrum, optimize the frames}.

\end{abstract}

\maketitle


\section{Introduction}
\label{sec:intro}

Reinforcement learning with verifiable rewards (RLVR) has become a major
scaling axis for modern reasoning models~\cite{grok2025}, with rapid progress
across data and environments~\cite{scale2026rlenvironments}, learning
objectives~\cite{deepseek_r1,yu2025dapo,hu2025reinforce++,tajwar2026maximum},
and systems~\cite{slime_github,sheng2025hybridflow}.
Yet the optimizers and parameterizations that translate reward feedback into
weight-space motion remain largely inherited from pre-training, despite the
recent proliferation of optimizers designed specifically for that
regime~\cite{adamw,liu2025muon,zhu2025apollo,wen2025fantastic}.
This default is not yet obviously natural: pre-training learns from dense
token-level supervision, whereas RLVR adapts an already capable policy using
comparatively sparse outcome-level rewards.

\noindent We study this missing optimization layer and uncover a separation between \textit{what RLVR reuses and what it changes}. We call this phenomenon \textbf{spectral inheritance}: \textit{RLVR can reuse the base model's weight spectra while acquiring new behavior by changing the associated input and output singular frames}.

\begin{figure}[tb!]
\centering
\includegraphics[width=\linewidth]{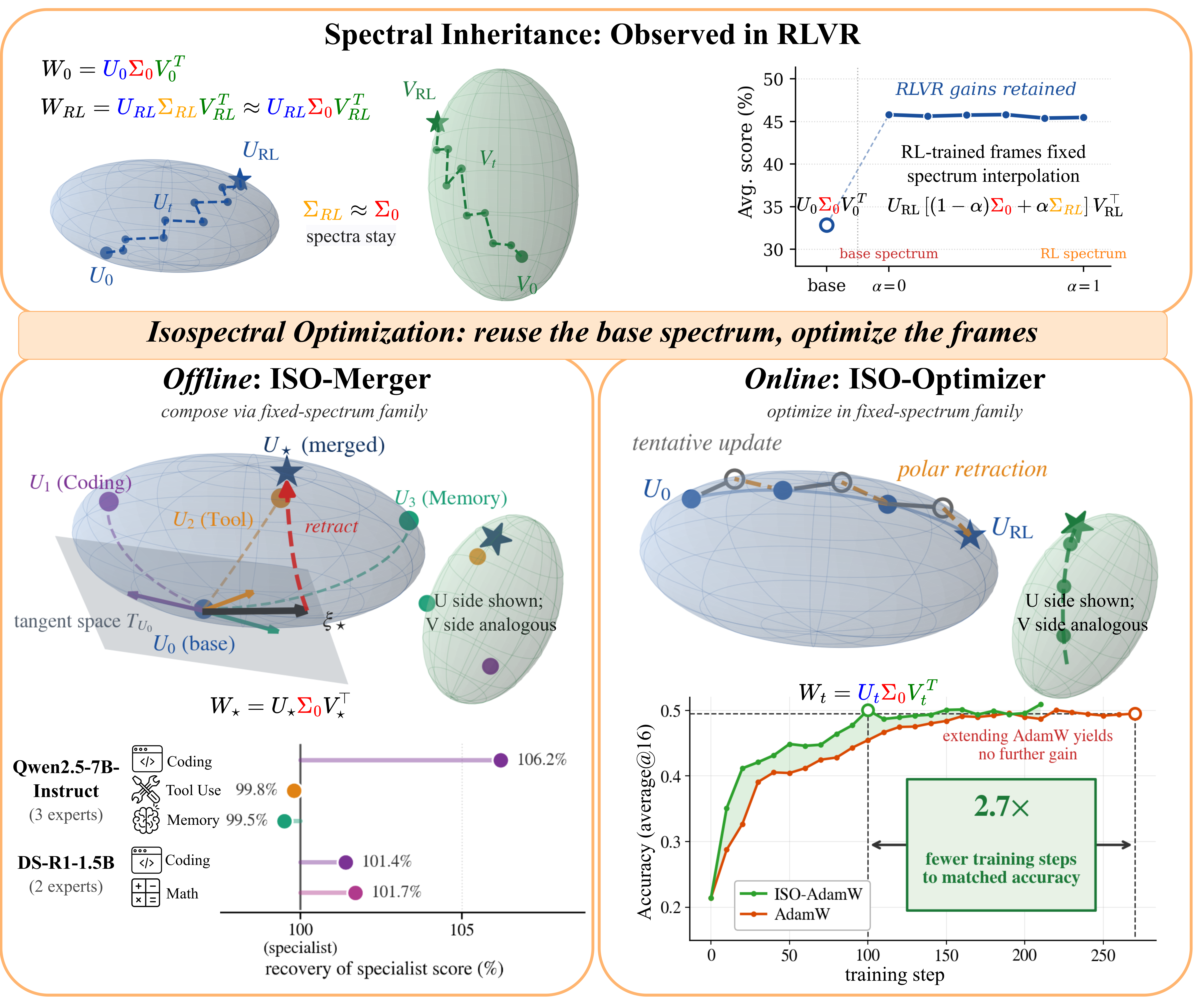}
\caption{\small
\textbf{From spectral inheritance to Isospectral Optimization.}
\textbf{Top.}
Unconstrained RLVR changes both singular frames while its spectra remain close
to their base values. Restoring $\Sigma_0$ with the RL-trained frames fixed leaves performance flat at every $\alpha$. ISO turns this regularity into an
inductive bias: reuse $\Sigma_0$ and optimize $(U,V)$.
\textbf{Left.}
ISO-Merger (offline) applies project--mask--merge--retract to shared-base specialists'
frame changes and reconstructs a fixed-spectrum model that recovers their
specialist capabilities.
\textbf{Right.}
ISO-Optimizer (online) applies a conventional optimizer (e.g., AdamW) to $(U,V)$ under fixed
$\Sigma_0$. On Qwen3-8B-Base, ISO-AdamW reaches AdamW's end-of-run accuracy with
$2.7\times$ fewer training steps.
}
\vspace{-10pt}
\label{fig:teaser}
\end{figure}

\noindent Existing accounts do not expose this separation, but they do suggest
that RLVR differs structurally from pre-training and supervised fine-tuning
(SFT). At the policy level, RLVR has been described through reverse-KL and
KL-proximal views~\cite{zhu2025path,shenfeld2025rl}. At the parameter level,
its Euclidean updates have been reported to be sparse and off-principal, with
strongly overlapping footprints across independent runs initialized from the
same base model~\cite{mukherjee2025reinforcement,zhu2025path}. These findings
identify distinctive structure in RLVR, but not a coordinate system that
separates what is reused from what changes.

\paragraph{Observation: spectra stay.}
Motivated by our prior observation of limited spectral drift in
RLVR~\cite{zhu2025path}, we move from Euclidean weight coordinates to a
spectral view. For each weight matrix, write
$W_t=U_t\Sigma_tV_t^\top$: $\Sigma_t$ specifies the scales of the singular
modes, while $(U_t,V_t)$ specify their output and input directions. Across the unconstrained RLVR runs studied here, the learned checkpoints remain close to
the fixed-spectrum families of their base weights, a pattern we call \emph{near-isospectrality}.
Because unconstrained RLVR optimization imposes no spectral constraint, we further ask
whether this proximity reflects a genuine optimization preference. A
dimension-aware calibration finds no strong additional preference for
spectrum-preserving motion beyond what high dimensionality alone predicts,
although the contrast with SFT remains pronounced. 
The calibration therefore refines rather than weakens the observation: RLVR
remains near-isospectral, but proximity alone does not establish an intrinsic optimization preference. This leaves the more
consequential functional question:
\emph{are the small spectral changes that do occur necessary for the acquired
behavior?}

\paragraph{Functional regularity: spectral inheritance.}
We next test a stronger question than spectral proximity:
are the small spectral changes produced by unconstrained RLVR functionally
necessary?
Restoring the base spectra of RLVR checkpoints, while retaining their learned
frames, preserves most acquired gains.
More stringently, keeping the base spectra fixed throughout training and
updating only the associated frames still supports strong RLVR learning.
Conversely, a restricted spectrum-only control, which updates the singular
values while freezing the base frames, yields only limited improvement.

\noindent We refer to this functional reuse as \textbf{spectral inheritance}:
\textbf{\textit{RLVR can reuse the base model's weight spectra rather than
having to rewrite them to acquire new capabilities}}.
\paragraph{Structure: both frames must remain adaptable.}
Spectral inheritance identifies what can be reused, but not which variables must remain adaptable. Could the learned endpoint instead be explained by a simpler transformation that either remixes the model only within the incoming input and output spans, or retains one incoming singular subspace while allowing only the other side to change? We find that both alternatives leave a
substantially larger portion of the checkpoint change unexplained than retaining the incoming spectrum while allowing both singular frames to adapt.
Thus, among the structural choices tested, the spectrum can remain fixed, but both frames must remain adaptable. The same spectral-inheritance and two-frame-adaptability pattern recurs across sequential RL stages with distinct
objectives.

\paragraph{ISO: Isospectral Optimization for RLVR.}
These findings suggest a simple design principle:
\emph{inherit the spectrum, optimize the frames}.
We operationalize this principle through \textbf{Isospectral Optimization
(ISO)}, a framework that represents post-training change within the
fixed-spectrum families of the base weights while keeping both singular frames
adaptable.
ISO is not a single numerical optimizer; it is an RLVR-native optimization
stack with two complementary instantiations spanning the RLVR workflow:
ISO-Merger for checkpoint-only composition of shared-base specialists and
ISO-Optimizer for online learning with a chosen base optimizer such as AdamW
or Muon.

\begin{isoblock}[0.9em]
\textbf{\textit{Offline: ISO-Merger.}}
A modular RLVR workflow trains domain specialists from a shared base and later
consolidates them, often through an additional on-policy distillation
stage~\cite{lu2025onpolicydistillation,zeng2026glm}.
ISO-Merger instead consolidates the specialists directly from their
checkpoints.
Guided by spectral inheritance, it reuses the shared base spectra and directly
combines the experts' singular-frame changes into a single fixed-spectrum
model.
Without post-merge data, additional rollout generation, gradient updates, or distillation, 
ISO-Merger recovers complementary specialist capabilities and achieves the
strongest aggregate performance among the compared data-free methods.

\textbf{\textit{Online: ISO-Optimizer.}}
ISO-Optimizer applies the same principle during RLVR training. Given a
conventional base optimizer such as AdamW or Muon, it channels reward feedback
into weight-space motion within the corresponding fixed-spectrum families,
updating the singular-frame variables while keeping the base spectra fixed.
Across reasoning and coding settings and multiple model scales, ISO-Optimizer
improves final accuracy and reaches matched accuracy in fewer training steps
than the corresponding weight-space optimizers.
\end{isoblock}

Our contributions are summarized as follows:
\begin{itemize}
    \item \textbf{Spectral inheritance in RLVR.}
    We formalize near-isospectrality as distance to a fixed-spectrum family,
    calibrate it against the geometry of high-dimensional weight spaces, and
    show through endpoint and training-time interventions that the base model's
    weight spectra remain functionally reusable.
    We further show that, among the transformation classes tested, a
    low-residual reconstruction of learned endpoints requires both singular
    frames to remain adaptable.

    \item \textbf{RLVR-Native Isospectral Optimization Framework.}
    We then introduce ISO, a fixed-spectrum framework that reuses the base spectra
    and represents reward-driven post-training change through the associated
    singular-frame coordinates.

    \item \textbf{Data-free offline RL expert composition.}
    We develop ISO-Merger, which directly composes shared-base RLVR specialists
    in fixed-spectrum coordinates without post-merge data, rollout generation,
    gradient updates, or distillation.

    \item \textbf{Fixed-spectrum online RLVR.}
    We develop ISO-Optimizer, which applies a chosen base optimizer, including
    AdamW or Muon, to the singular-frame variables while preserving the base
    spectra.
    Across reasoning and coding tasks ranging from 1.5B to 8B parameters,
    ISO-Optimizer improves accuracy in the reported runs and reaches matched
    scores with substantially fewer training steps. On
    \texttt{Qwen3-8B-Base}, AdamW reaches an aggregate accuracy of $0.495$
    after $270$ training steps. ISO-AdamW reaches the same accuracy after only
    $100$ training steps and improves further to $0.509$ at $210$ training
    steps.
\end{itemize}

\section{Spectra Stay: From Near-Isospectrality to Spectral Inheritance}
\label{sec:dynamics}

\paragraph{From sparse Euclidean motion to spectral structure.}
Recent analyses suggest that RLVR differs from pre-training and SFT in its
weight-space motion. At the policy level, RLVR has been described as a
conservative, KL-proximal improvement of the current
policy~\cite{wu2025invisible,shenfeld2025rl,zhu2025path}. At the parameter
level, its Euclidean updates have been reported to be sparse and off-principal,
with strongly overlapping patterns across runs initialized from the same base
model~\cite{mukherjee2025reinforcement,zhu2025path}. This raises a puzzle:
large behavioral gains arise from apparently sparse and off-principal weight
motion.

These regularities suggest that RLVR motion is structured, but raw Euclidean
coordinates do not reveal what is reused and what is rewritten. Building on our
prior observation of limited spectral drift in RLVR~\cite{zhu2025path}, we ask
two questions. First, does the observed spectral proximity reflect a genuine
preference for spectrum-preserving motion? A dimension-aware calibration finds
no strong preference beyond what ambient dimensionality already predicts.
Second, are the small spectral changes that remain functionally necessary? We
test this through complementary interventions that restore the base spectra
after training or keep them fixed throughout training. We call the resulting
functional reuse \emph{spectral inheritance}. Section~\ref{sec:transport} then asks which variables must remain adaptable once the spectrum is reused, and
whether the same requirement recurs across sequential RL stages.

\paragraph{Notation.}
For a weight matrix
$W\in\mathbb{R}^{d_{\mathrm{out}}\times d_{\mathrm{in}}}$, let
$q=\min\{d_{\mathrm{out}},d_{\mathrm{in}}\}$ and let
$\sigma(W)=(\sigma_1(W),\ldots,\sigma_q(W))$ denote all singular values in
nonincreasing order. We write
$W=U\Sigma V^\top$ for a thin SVD, with
$U\in\mathrm{St}(d_{\mathrm{out}},q)$,
$V\in\mathrm{St}(d_{\mathrm{in}},q)$, and
$\Sigma=\operatorname{Diag}(\sigma(W))$, where
\[
    \mathrm{St}(d,r)
    =
    \{X\in\mathbb{R}^{d\times r}:X^\top X=I_r\}.
\]
The spectrum $\Sigma$ specifies the scales of the singular modes, while
$(U,V)$ specify their output and input directions. Throughout this section,
the \emph{base model} means the checkpoint that initializes the RL stage under
study: it may itself be pretrained, SFT-trained, or already
RL-trained. Accordingly, $W_0$ and $\Sigma_0$ denote a base weight matrix and its
spectrum for the transition being analyzed. Section~\ref{sec:transport}
reserves $r<q$ for an informative top-$r$ truncation.

\paragraph{The fixed-spectrum family.}
For a base matrix $W_0=U_0\Sigma_0V_0^\top$, define its
\emph{fixed-spectrum family}
\begin{equation}
\label{eq:fixed_spectrum_family}
\small
    \mathcal F(W_0)
    :=
    \left\{
        Z\in\mathbb{R}^{d_{\mathrm{out}}\times d_{\mathrm{in}}}
        :
        \sigma(Z)=\sigma(W_0)
    \right\}.
\end{equation}
Equivalently,
\begin{equation}
\label{eq:fixed_spectrum_parameterization}
\small
    \mathcal F(W_0)
    =
    \left\{
        U\Sigma_0V^\top:
        U\in\mathrm{St}(d_{\mathrm{out}},q),
        \;
        V\in\mathrm{St}(d_{\mathrm{in}},q)
    \right\}.
\end{equation}
Thus, $\mathcal F(W_0)$ is simply the set of matrices of the same shape that
share the singular values of $W_0$ while allowing the associated left and
right singular frames to change.

\subsection{A Precise Observation: Spectra Stay}
\label{sec:spectral_stability}

We first make ``\textbf{spectra stay}'' precise by measuring the distance from a
learned weight matrix $W$ to the fixed-spectrum family
$\mathcal F(W_0)$ of its base matrix $W_0$.

\begin{proposition}[Exact distance to the fixed-spectrum family]
\label{prop:base_spectrum_projection}
For any
$W,W_0\in\mathbb{R}^{d_{\mathrm{out}}\times d_{\mathrm{in}}}$,
\begin{equation}
\label{eq:exact_spectrum_distance}
\small
    \operatorname{dist}_F
    \bigl(
        W,\mathcal F(W_0)
    \bigr)
    =
    \|\sigma(W)-\sigma(W_0)\|_2.
\end{equation}
Moreover, if
$W=U\Sigma V^\top$ and
$\Sigma_0=\operatorname{Diag}(\sigma(W_0))$, then
\begin{equation}
\label{eq:nearest_base_spectrum_point}
\small
    U\Sigma_0V^\top
    \in
    \operatorname*{arg\,min}_{Z\in\mathcal F(W_0)}
    \|W-Z\|_F.
\end{equation}
\end{proposition}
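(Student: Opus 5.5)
The plan is to prove a lower bound and then exhibit a point that attains it. For the lower bound, I will invoke the von Neumann trace inequality: for any $A,B\in\mathbb{R}^{d_{\mathrm{out}}\times d_{\mathrm{in}}}$ we have $\langle A,B\rangle_F=\operatorname{tr}(A^\top B)\le\sum_{i=1}^q\sigma_i(A)\sigma_i(B)$, with both singular value lists in nonincreasing order. Let $Z\in\mathcal F(W_0)$, so that $\sigma(Z)=\sigma(W_0)$. Expanding the norm gives
\[
\|W-Z\|_F^2=\|W\|_F^2+\|Z\|_F^2-2\langle W,Z\rangle_F .
\]
I then use $\|W\|_F^2=\sum_i\sigma_i(W)^2$ and $\|Z\|_F^2=\sum_i\sigma_i(W_0)^2$, and apply von Neumann to the cross term. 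This yields
\[
\|W-Z\|_F^2\ge\sum_{i}\bigl(\sigma_i(W)-\sigma_i(W_0)\bigr)^2=\|\sigma(W)-\sigma(W_0)\|_2^2 .
\]
Taking the infimum over $Z$ gives $\operatorname{dist}_F(W,\mathcal F(W_0))\ge\|\sigma(W)-\sigma(W_0)\|_2$.

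For attainment, I take a thin SVD $W=U\Sigma V^\top$ with $U\in\mathrm{St}(d_{\mathrm{out}},q)$, $V\in\mathrm{St}(d_{\mathrm{in}},q)$ and $\Sigma=\operatorname{Diag}(\sigma(W))$ ordered nonincreasingly. Set $Z^\star=U\Sigma_0V^\top$. By the parameterization \eqref{eq:fixed_spectrum_parameterization}, $Z^\star\in\mathcal F(W_0)$. Directly, its singular values are the diagonal of $\Sigma_0$: they are nonnegative and in nonincreasing order, and $U,V$ have orthonormal columns. Then
\[
\|W-Z^\star\|_F=\|U(\Sigma-\Sigma_0)V^\top\|_F=\|\Sigma-\Sigma_0\|_F=\|\sigma(W)-\sigma(W_0)\|_2 ,
\]
using invariance of the Frobenius norm under multiplication by matrices with orthonormal columns. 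This matches the lower bound. So the distance equals $\|\sigma(W)-\sigma(W_0)\|_2$ and the infimum is attained at $Z^\star$, which proves both \eqref{eq:exact_spectrum_distance} and \eqref{eq:nearest_base_spectrum_point}. Attainment also follows from compactness of $\mathcal F(W_0)$ as a continuous image of a product of Stiefel manifolds, but the explicit minimizer makes that argument unnecessary.

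The only substantive ingredient is the von Neumann trace inequality in the rectangular case, and I expect that to be the main obstacle if it must be justified rather than cited. My first approach would be to cite it as standard (Mirsky's theorem gives the same conclusion directly). If a self-contained argument is wanted, I would reduce to the square case by zero-padding and then use Birkhoff's theorem: $\operatorname{tr}(A^\top B)$ over the orbit of $B$ becomes a linear function of a doubly substochastic matrix built from products of orthogonal entries, and its maximum is attained at a permutation. The rearrangement inequality then picks the sorted pairing.

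A minor point to check is that when singular values are repeated the thin SVD of $W$ is not unique. This does not matter: every choice of $(U,V)$ gives a minimizer, which is why the statement uses $\in\arg\min$ rather than equality.
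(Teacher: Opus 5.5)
Your proposal is correct and matches the paper's proof. Both arguments expand $\|W-Z\|_F^2=\|W\|_F^2+\|W_0\|_F^2-2\langle W,Z\rangle_F$ for $Z\in\mathcal F(W_0)$, bound the cross term with von Neumann's trace inequality (the paper cites it rather than proving it), and observe that $Z=U\Sigma_0V^\top$ attains the bound; your direct computation $\|W-U\Sigma_0V^\top\|_F=\|\Sigma-\Sigma_0\|_F$ is just an equivalent way of checking that attainment.
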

\noindent
Proposition~\ref{prop:base_spectrum_projection} gives singular-value drift a
precise checkpoint-level meaning: it is the distance from the learned matrix
to $\mathcal F(W_0)$, the family of matrices sharing the corresponding base
spectrum. The proposition also identifies a closest representative with that spectrum,
which motivates the functional intervention in
Section~\ref{sec:spectral_inheritance_tests}.
The proof and the treatment of repeated singular values are given in
Appendix~\ref{app:fixed_spectrum_distance}.

\paragraph{Measurements.}
For each matrix $\ell$, let
$\Delta W^{(\ell)}=W_1^{(\ell)}-W_0^{(\ell)}$. We measure
\begin{equation}
\label{eq:spectrum_distance_metrics}
\small
\begin{aligned}
    \delta_{\Sigma}^{(\ell)}
    &:=
    \frac{
        \|\sigma(W_1^{(\ell)})-\sigma(W_0^{(\ell)})\|_2
    }{
        \|W_0^{(\ell)}\|_F
    },
    \\
    \rho_{\Sigma}^{(\ell)}
    &:=
    \frac{
        \|\sigma(W_1^{(\ell)})-\sigma(W_0^{(\ell)})\|_2
    }{
        \|\Delta W^{(\ell)}\|_F
    }.
\end{aligned}
\end{equation}
The first measures spectral drift relative to the base-weight scale. The
second compares the exact distance to $\mathcal F(W_0)$ with the complete
checkpoint displacement.

\begin{figure}[!t]
  \centering
  \includegraphics[width=\linewidth]{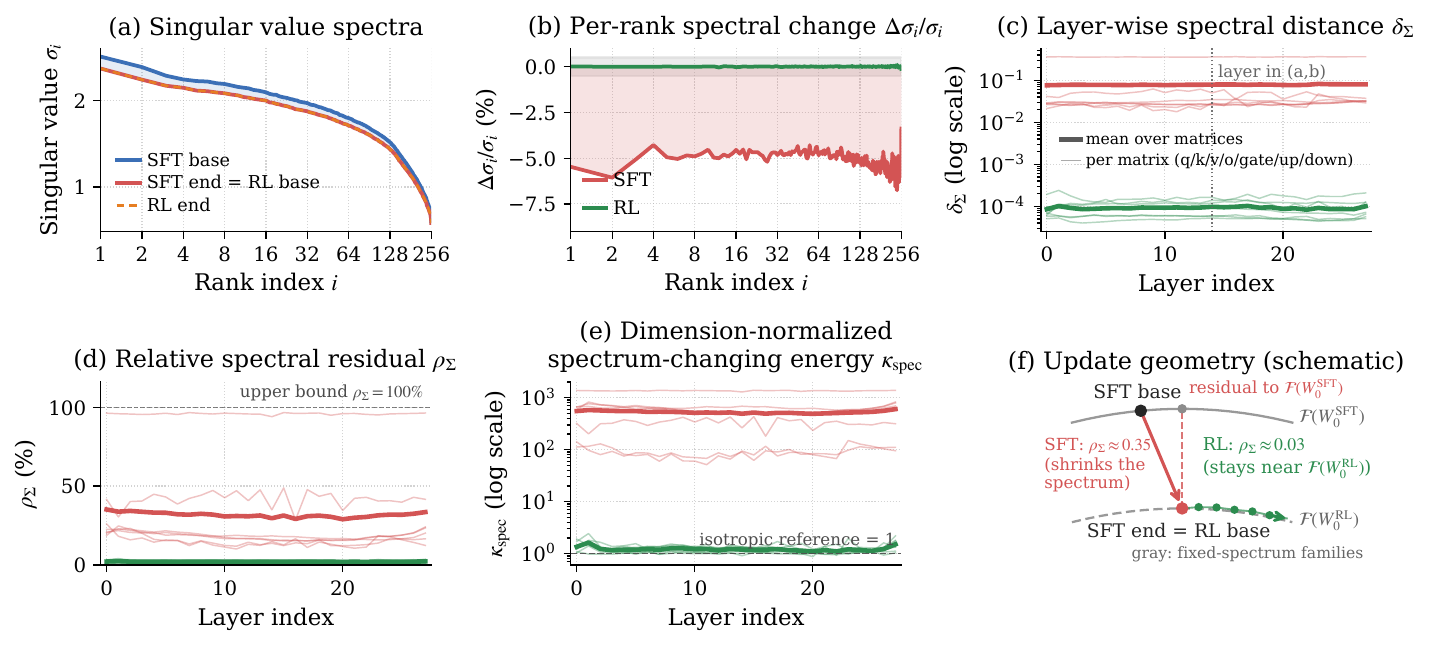}
  \caption{
  \small
  \textbf{Spectral dynamics under SFT and RLVR post-training.}
  \textbf{(a)} Representative singular-value profiles
  (\texttt{v\_proj}, layer 14). The SFT endpoint is also the RLVR base model.
  \textbf{(b)} Rank-wise relative change: SFT reshapes the spectrum, whereas
  RLVR stays within a fraction of a percent of its pre-RL values.
  \textbf{(c)} Layer-wise spectral distance $\delta_{\Sigma}$: the RLVR
endpoint is two to three orders of magnitude closer to its fixed-spectrum
family than the illustrative SFT endpoint.
\textbf{(d)} Relative spectral residual $\rho_{\Sigma}$: the mean RLVR
residual is approximately $3\%$ of the checkpoint displacement, compared
with approximately $35\%$ for SFT.
  \textbf{(e)} Dimension-normalized spectrum-changing energy
  $\kappa_{\mathrm{spec}}$ (Eq.~\ref{eq:kappa_spec}): RLVR remains order-one
  ($\approx1.0$--$1.4$), whereas SFT lies two to three orders of magnitude
  above the dimensional reference.
  \textbf{(f)} Schematic endpoint view: SFT substantially rewrites the base
spectrum, whereas RLVR checkpoints (dots) remain close to the
  fixed-spectrum family of the pre-RL base. The straight arrow denotes the
SFT endpoint displacement, not a continuous optimization path.
  }
  \vspace{-12pt}
  \label{fig:spectrum_stability}
\end{figure}

\paragraph{Long-horizon endpoint evidence.}
To rule out a short-horizon artifact, we analyze the released endpoint of a reasoning RLVR run on
\texttt{DeepSeek-R1-Distill-Qwen-1.5B} (DS-1.5B) trained for over 3,000 updates~\cite{liu2025prorl,hu2025brorl}.
The checkpoint sequence
\[
    \texttt{Qwen2.5-Math-1.5B}
    \rightarrow
    \texttt{DS-1.5B}
    \rightarrow
    \texttt{Nemotron-Research-Reasoning-Qwen-1.5B}
\]
contains an SFT transition followed by an RLVR transition. For the RLVR stage,
\texttt{DS-1.5B} is the base model. Broader evidence across models, datasets,
RL objectives, and training horizons was established in our prior
work~\cite{zhu2025path}.

Figure~\ref{fig:spectrum_stability} shows that the post-RL spectrum nearly
overlaps that of its pre-RL base, whereas the illustrative SFT transition
produces substantial spectral contraction. Across RLVR layers,
$\delta_{\Sigma}^{(\ell)}$ is approximately $10^{-2}\%$, and the relative
spectral residual $\rho_{\Sigma}^{(\ell)}$ averages approximately $3\%$
across the analyzed matrices (Figure~\ref{fig:spectrum_stability}d). Thus,
for each analyzed matrix, the distance from the learned endpoint to a closest
checkpoint in $\mathcal F(W_0)$ is only a few percent of its base-to-RL
displacement.

\paragraph{Evidence across sampled training checkpoints.}
An endpoint comparison cannot exclude a large intermediate spectral excursion
that later cancels. We therefore track an independent unconstrained
Qwen3-8B-Base AdamW run using checkpoints saved every 10 training steps. At each
saved checkpoint, we measure its distance to the same fixed-spectrum family
$\mathcal F(W_0)$.
As shown in Figure~\ref{fig:trajectory}, both
$\delta_{\Sigma}(t)$ and $\rho_{\Sigma}(t)$ remain small throughout the
observed checkpoint sequence. Thus, at the resolution of the saved
checkpoints, spectral stability persists during training rather than appearing
only at the final endpoint. 

Together, these results establish a descriptive fact: \textbf{for the analyzed
matrices, unconstrained RLVR checkpoints remain close to the fixed-spectrum
families of their base weights. We call this property near-isospectrality}.

\begin{figure}[t]
    \centering
    \includegraphics[width=0.85\linewidth]{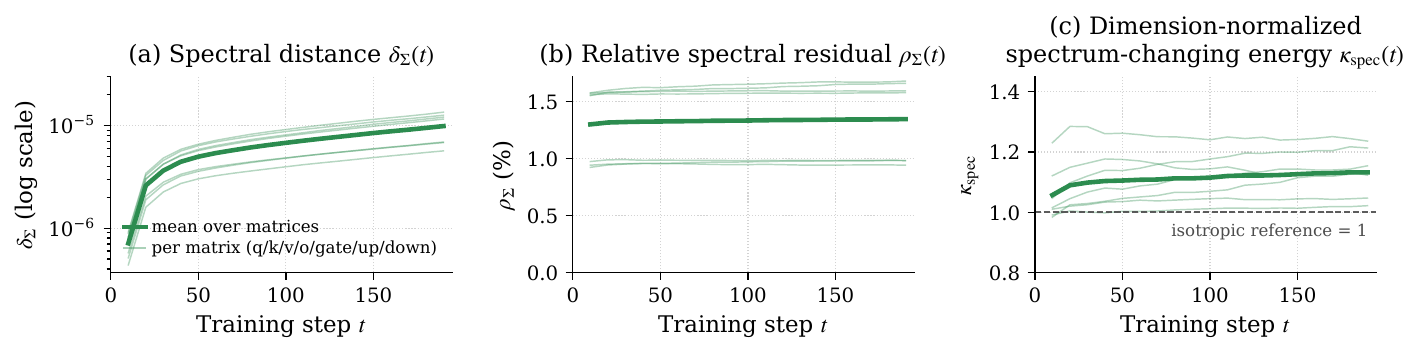}
    \caption{\textbf{Near-isospectrality along an RLVR training trajectory}
  (Qwen3-8B-Base, checkpoints every 10 steps up to step 190).
  \textbf{(a)} The mean spectral distance $\delta_{\Sigma}(t)$ stays below
  $10^{-5}$.
  \textbf{(b)} The mean relative spectral residual $\rho_{\Sigma}(t)$  stays at
  ${\approx}1.3\%$ of the total displacement.
  \textbf{(c)} The dimension-normalized spectrum-changing energy
$\kappa_{\mathrm{spec}}(t)$ remains order-one relative to the isotropic
dimensional reference.}
    \label{fig:trajectory}
\end{figure}

\paragraph{Does near-isospectrality reflect a preference?}
A natural hypothesis of the preceding results is that RLVR preferentially suppresses
spectrum-changing directions. High dimensionality, however, makes this
interpretation nontrivial. In a
$d_{\mathrm{out}}d_{\mathrm{in}}$-dimensional matrix space, only
$q=\min\{d_{\mathrm{out}},d_{\mathrm{in}}\}$ independent first-order
coordinates change the singular values in the generic full-rank,
simple-spectrum case. At
$W_0=U_0\Sigma_0V_0^\top$, the first-order spectrum-changing coordinates of a
displacement $\Delta W$ are
$\operatorname{diag}(U_0^\top\Delta W V_0)$. Thus, even a generically oriented
high-dimensional displacement places only a small fraction of its squared norm
in spectrum-changing coordinates.

We calibrate this dimensional effect with
\begin{equation}
\label{eq:kappa_spec}
\small
    \kappa_{\mathrm{spec}}^{(\ell)}
    :=
    \frac{d_{\mathrm{out}}d_{\mathrm{in}}}{q}
    \frac{
        \left\|
            \operatorname{diag}
            \left(
                U_0^{(\ell)\top}
                \Delta W^{(\ell)}
                V_0^{(\ell)}
            \right)
        \right\|_2^2
    }{
        \|\Delta W^{(\ell)}\|_F^2
    }.
\end{equation}
An isotropically oriented displacement has expected value one under this
normalization. Hence, $\kappa_{\mathrm{spec}}\ll1$ would indicate additional
suppression of spectrum-changing motion. Order-one values do not reveal such a
strong preference. Values of $\kappa_{\mathrm{spec}}\gg1$ indicate
concentration in spectrum-changing coordinates.

Across matrix types in the public reasoning run, RLVR yields mean
$\kappa_{\mathrm{spec}}$ values between $1.02$ and $1.35$. Thus, after
accounting for the small number of spectrum-changing coordinates, RLVR does
not exhibit a strong additional suppression of spectral change. The same
order-one pattern persists along the sampled Qwen3-8B-Base trajectory in
Figure~\ref{fig:trajectory}(c).
Crucially, this calibration \textit{does not erase the contrast with SFT}. The
illustrative SFT transition yields $\kappa_{\mathrm{spec}}$ values between
$89$ and $1364$, two to three orders of magnitude above the dimensional
reference. Thus, even after correcting for ambient dimensionality, RLVR and
SFT remain sharply different in how strongly their updates concentrate in
spectrum-changing coordinates.

\subsection{Spectral Inheritance: Reusing the Base Model's Spectrum}
\label{sec:spectral_inheritance_tests}

The dimensional calibration changes the interpretation of the initial
observation without overturning it. RLVR remains close to
$\mathcal F(W_0)$, but this proximity alone does not reveal a strong preference
for spectrum-preserving updates. The key functional question is whether the
small spectral changes that do occur are needed for the acquired behavior.

We address this question through two complementary interventions. First, we
restore the base spectrum after unconstrained RLVR while retaining the learned
frames, testing whether the endpoint still requires its acquired spectral
change. Second, we keep the base spectrum fixed throughout training, testing
whether strong gains can be acquired without allowing that change at all.

\begin{figure}[t]
    \centering
    \begin{subfigure}[t]{0.48\linewidth}
        \centering
        \includegraphics[width=\linewidth]
        {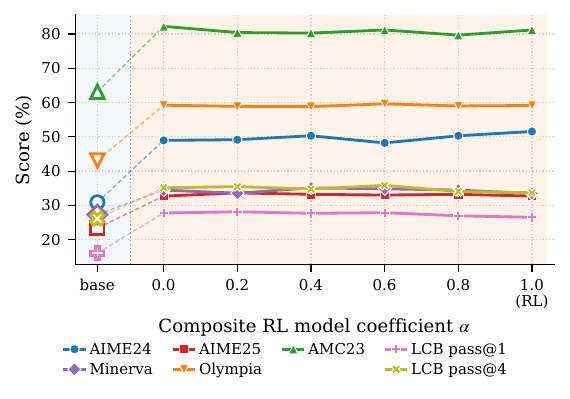}
        \caption{Spectrum restoration after training.}
    \end{subfigure}
    \hfill
    \begin{subfigure}[t]{0.48\linewidth}
        \centering
        \includegraphics[width=\linewidth]
        {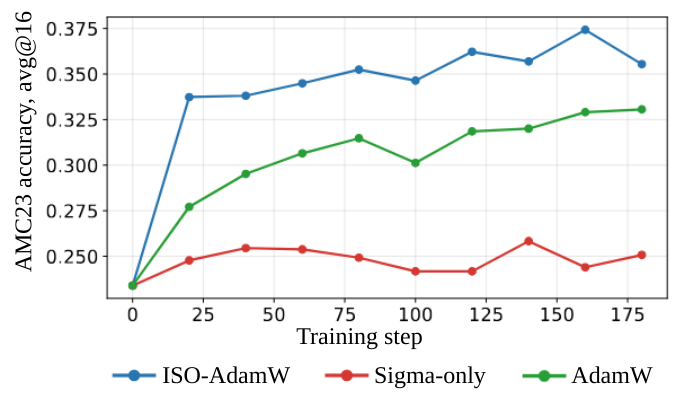}
        \caption{Fixed-spectrum and spectrum-only control.}
    \end{subfigure}
    \caption{
    \small
    \textbf{Functional evidence for spectral inheritance.}
    \textbf{(a)} Restoring the base spectrum while retaining the RL-trained
    frames preserves most endpoint performance.
    \textbf{(b)} Keeping the base spectrum fixed throughout training still permits
    strong gains, whereas the restricted spectrum-only parameterization does not
    recover comparable performance under the studied recipe.
    }
    \label{fig:spectrum_functional_tests}
\end{figure}

\paragraph{Restoring the base spectrum after training.}
Let
$W_0=U_0\Sigma_0V_0^\top,
    \
    W_{\mathrm{RL}}
    =
    U_{\mathrm{RL}}
    \Sigma_{\mathrm{RL}}
    V_{\mathrm{RL}}^\top$. We interpolate only the spectrum:
\begin{equation}
\label{eq:spectrum_interpolation}
\small
    \widetilde W(\alpha)
    =
    U_{\mathrm{RL}}
    \left[
        (1-\alpha)\Sigma_0
        +
        \alpha\Sigma_{\mathrm{RL}}
    \right]
    V_{\mathrm{RL}}^\top,
    \qquad
    \alpha\in[0,1].
\end{equation}
At $\alpha=0$, the base spectrum is restored while the RL-trained frames are
retained. At $\alpha=1$, the original RL checkpoint is recovered.
Proposition~\ref{prop:base_spectrum_projection} shows that
$\widetilde W(0)$ is a closest point in $\mathcal F(W_0)$.

Figure~\ref{fig:spectrum_functional_tests}(a) shows that restoring the base
spectrum preserves most acquired performance. 
Conversely, replacing the base spectrum with the RL-trained spectrum while
retaining the base frames does not improve the base model
(Figure~\ref{fig:sft_rebase}).

\paragraph{Learning with the base spectrum fixed.}
A stronger constructive test is to keep $\Sigma_0$ fixed from the first RL
update onward and optimize only the singular frames.
For contrast, we freeze the base frames and optimize only the diagonal
spectrum $\Sigma$.

Figure~\ref{fig:spectrum_functional_tests}(b) shows that the fixed-spectrum
parameterization acquires strong reasoning gains and, in this run, outperforms
the AdamW baseline, whereas spectrum-only training does not achieve comparable
gains.
Because the latter control has restricted
capacity,\footnote{This family has only $q$ active degrees of freedom per
matrix.}
its failure shows only that spectral rescaling alone is insufficient in this
configuration.

Together, these interventions support \emph{spectral inheritance}: RLVR can
functionally reuse the base model's spectral structure rather than having to
rewrite it.

\begin{tcolorbox}[
    colback=orange!10!white,
    colframe=black,
    boxrule=0.9pt,
    boxsep=2pt,
    top=3pt,
    bottom=3pt,
    left=3pt,
    right=3pt
]
\textit{\textbf{Take-away 1: Spectral Inheritance.}}
Unconstrained RLVR checkpoints remain close to the fixed-spectrum families of
their base weights. More importantly, the interventions show that RLVR can
reuse this spectral structure rather than having to rewrite it, a property we
call \textbf{\emph{spectral inheritance}}.
\end{tcolorbox}

\section{Frames Move: What Changes Under Spectral Inheritance?}
\label{sec:transport}

\paragraph{Structure: both frames must remain adaptable.}
Spectral inheritance identifies what can remain fixed, but not which variables
must remain adaptable. \textit{Could the endpoint be explained by a simpler
transformation that either remixes only within the incoming input and output
spans or retains one incoming singular subspace while leaving the other side free?}
We find that these simpler alternatives leave substantially more of the
checkpoint update unexplained than reusing the incoming spectrum while adapting
both frames. Thus, among the transformation classes tested, a low-residual
fixed-spectrum description requires both singular frames to remain adaptable.

\paragraph{Q1: Can a more conservative transformation explain the endpoint?}
For a transition $W_j\rightarrow W_i$, we refer to $W_j$ as the
\emph{incoming checkpoint} and to $W_i$ as the \emph{learned endpoint}.
Let $q=\min\{d_{\mathrm{out}},d_{\mathrm{in}}\}$ and consider an admissible
top-$r$ truncation
\begin{equation}
\small
\label{eq:transport_truncation}
    W_t^{(r)}
    =
    U_t^{(r)}\Sigma_t^{(r)}
    \bigl(V_t^{(r)}\bigr)^\top,
    \qquad
    P_t^{(r)}
    =
    U_t^{(r)}\bigl(U_t^{(r)}\bigr)^\top,
    \qquad
    Q_t^{(r)}
    =
    V_t^{(r)}\bigl(V_t^{(r)}\bigr)^\top .
\end{equation}
where $t\in\{i,j\}$.
Admissibility requires a positive rank-$r$ boundary gap at both endpoints, so
that the retained projectors are well-defined. We use
$r=\lfloor0.9q\rfloor$ in the main text and report rank sensitivity in
Appendix~\ref{app:rank}.

Define the rank-$r$ fixed-spectrum family associated with the incoming
checkpoint:
\begin{equation}
\label{eq:truncated_fixed_spectrum_family}
\small
    \mathcal F_j^{(r)}
    :=
    \left\{
        U\Sigma_j^{(r)}V^\top:
        U\in\mathrm{St}(d_{\mathrm{out}},r),
        \;
        V\in\mathrm{St}(d_{\mathrm{in}},r)
    \right\}.
\end{equation}
This family retains the incoming singular values $\Sigma_j^{(r)}$ while
allowing both singular frames to vary.

\begin{proposition}[Optimal reconstructions and unexplained-update ratios]
\label{prop:checkpoint_transport_test}
For an admissible transition $W_j\rightarrow W_i$, the Frobenius-optimal
reconstructions under the four structural restrictions are
\begin{equation}
\label{eq:transport_reconstructions}
\small
\begin{array}{lll}
\widehat W_{\mathrm{mix}}
&=
P_j^{(r)}W_i^{(r)}Q_j^{(r)},
&\text{remix within both incoming subspaces},\\[2pt]
\widehat W_L
&=
P_j^{(r)}W_i^{(r)},
&\text{retain the incoming output subspace},\\[2pt]
\widehat W_R
&=
W_i^{(r)}Q_j^{(r)},
&\text{retain the incoming input subspace},\\[2pt]
\widehat W_{\mathrm{iso}}
&=
U_i^{(r)}
\Sigma_j^{(r)}
\bigl(V_i^{(r)}\bigr)^\top,
&\text{retain the incoming spectrum and adapt both frames}.
\end{array}
\end{equation}

For
$W_i^{(r)}\neq W_j^{(r)}$, define the
\emph{unexplained-update ratio}
\begin{equation}
\label{eq:unexplained_update_ratio}
\small
    u_h
    :=
    \frac{
        \|W_i^{(r)}-\widehat W_h\|_F
    }{
        \|W_i^{(r)}-W_j^{(r)}\|_F
    },
    \qquad
    h\in\{\mathrm{mix},L,R,\mathrm{iso}\}.
\end{equation}
Because $W_j^{(r)}$ is feasible under all four restrictions, optimality gives
$0\leq u_h\leq1$.

Thus, $u_h=0$ denotes exact reconstruction, whereas $u_h=1$ means that the
best reconstruction under restriction $h$ is no closer to the endpoint than
the unchanged incoming checkpoint.

Moreover,
\begin{equation}
\label{eq:iso_transport_residual}
\small
    \widehat W_{\mathrm{iso}}
    \in
    \operatorname*{arg\,min}_{Z\in\mathcal F_j^{(r)}}
    \|W_i^{(r)}-Z\|_F,
    \qquad
    u_{\mathrm{iso}}
    =
    \frac{
        \|\Sigma_i^{(r)}-\Sigma_j^{(r)}\|_F
    }{
        \|W_i^{(r)}-W_j^{(r)}\|_F
    }.
\end{equation}
\end{proposition}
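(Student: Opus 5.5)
The plan is to treat each of the four restrictions as a Frobenius projection onto an explicitly described feasible set, and to derive the iso case from Proposition~\ref{prop:base_spectrum_projection}. First I will make the four classes precise. The mix class is $\{P_j^{(r)}XQ_j^{(r)}:X\in\mathbb{R}^{d_{\mathrm{out}}\times d_{\mathrm{in}}}\}$, the $L$ class is $\{P_j^{(r)}X\}$, the $R$ class is $\{XQ_j^{(r)}\}$, and the iso class is $\mathcal F_j^{(r)}$. Admissibility (a positive gap $\sigma_r>\sigma_{r+1}$ at the incoming checkpoint) makes $P_j^{(r)},Q_j^{(r)}$ well-defined orthogonal projectors.

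For the three linear classes, the map $X\mapsto P X Q$ (respectively $PX$, $XQ$) is an orthogonal projection on $\mathbb{R}^{d_{\mathrm{out}}\times d_{\mathrm{in}}}$ with the trace inner product. It is idempotent, and it is self-adjoint because $\langle PXQ,Y\rangle=\operatorname{tr}(Q X^\top P Y)=\langle X,PYQ\rangle$ when $P,Q$ are symmetric. Its range is exactly the corresponding class. The best approximation of $W_i^{(r)}$ is therefore its image, $P_j^{(r)}W_i^{(r)}Q_j^{(r)}$, $P_j^{(r)}W_i^{(r)}$, or $W_i^{(r)}Q_j^{(r)}$, which gives the first three lines of \eqref{eq:transport_reconstructions}.

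For the iso class, I will identify $\mathcal F_j^{(r)}$ with the full fixed-spectrum family $\mathcal F(W_j^{(r)})$ of Section~\ref{sec:dynamics}. Here $\sigma(W_j^{(r)})=(\sigma_1,\dots,\sigma_r,0,\dots,0)$. Any $U\Sigma_j^{(r)}V^\top$ with $r$-frames becomes a $q$-frame representation after completing $U,V$ to elements of $\mathrm{St}(\cdot,q)$ and padding the spectrum with zeros. Conversely, in a $q$-frame representation the trailing zero singular values simply drop out. With this identification, Proposition~\ref{prop:base_spectrum_projection} applied with $W=W_i^{(r)}$ and $W_0=W_j^{(r)}$ gives the closest point $U\,\mathrm{Diag}(\sigma(W_j^{(r)}))V^\top$, where $U,V$ are full SVD frames of $W_i^{(r)}$. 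Only the first $r$ columns contribute, so this point equals $U_i^{(r)}\Sigma_j^{(r)}(V_i^{(r)})^\top=\widehat W_{\mathrm{iso}}$, which proves \eqref{eq:iso_transport_residual}. The same proposition gives the distance $\|\sigma(W_i^{(r)})-\sigma(W_j^{(r)})\|_2$. Both spectra vanish past index $r$, so this equals $\|\Sigma_i^{(r)}-\Sigma_j^{(r)}\|_F$, and dividing by $\|W_i^{(r)}-W_j^{(r)}\|_F$ yields the formula for $u_{\mathrm{iso}}$.

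For the bounds $0\le u_h\le 1$, nonnegativity is trivial. For the upper bound I will check that $W_j^{(r)}$ lies in every class:
\[
P_j^{(r)}W_j^{(r)}Q_j^{(r)}=W_j^{(r)},\qquad P_j^{(r)}W_j^{(r)}=W_j^{(r)},\qquad W_j^{(r)}Q_j^{(r)}=W_j^{(r)},
\]
since $(U_j^{(r)})^\top U_j^{(r)}=I_r$, and $W_j^{(r)}\in\mathcal F_j^{(r)}$ by definition. Optimality then gives $\|W_i^{(r)}-\widehat W_h\|_F\le\|W_i^{(r)}-W_j^{(r)}\|_F$, and this denominator is positive by the hypothesis $W_i^{(r)}\neq W_j^{(r)}$.

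I expect the main obstacle to be the identification $\mathcal F_j^{(r)}=\mathcal F(W_j^{(r)})$, that is, the passage between $r$-frames and $q$-frames with zero padding. This step matters because Proposition~\ref{prop:base_spectrum_projection} is stated with $q$-frames. A second point to handle carefully is non-uniqueness of the SVD of $W_i^{(r)}$ when singular values repeat. I would handle it by invoking the repeated-value treatment already given for Proposition~\ref{prop:base_spectrum_projection}, which shows that every choice of singular frames attains the minimum; this is why the statement uses $\in\arg\min$ rather than equality.
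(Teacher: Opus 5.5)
Your proposal is correct and follows the paper's proof. The paper treats the three subspace classes as orthogonal projections, obtains the iso case by applying Proposition~\ref{prop:base_spectrum_projection} to the rank-$r$ matrices $W_i^{(r)}$ and $W_j^{(r)}$, and derives $u_h\le 1$ from the feasibility of $W_j^{(r)}$ in all four classes; your explicit zero-padding identification $\mathcal F_j^{(r)}=\mathcal F(W_j^{(r)})$ correctly fills in a step the paper leaves implicit.
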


The remix reconstruction can be written as
\begin{equation}
\label{eq:incoming_subspace_remix}
\small
    \widehat W_{\mathrm{mix}}
    =
    U_j^{(r)}
    B_{ij}^{(r)}
    \bigl(V_j^{(r)}\bigr)^\top,
    \qquad
    B_{ij}^{(r)}
    :=
    \bigl(U_j^{(r)}\bigr)^\top
    W_i^{(r)}
    V_j^{(r)}.
\end{equation}
The core $B_{ij}^{(r)}$ is unconstrained and may rotate, mix, rescale, and
change the represented spectrum. Thus, this class fixes only the incoming input
and output spans. The one-sided classes are similarly permissive: they retain
one incoming span while allowing the remaining mapping and spectrum to refit
freely. Their $u_h$ values are therefore optimistic lower bounds on the
reconstruction error incurred by freezing the corresponding frame structure.
We use $u_h$ to test parameterization sufficiency. Formal class definitions and
proofs are provided in Appendix~\ref{app:frame_adaptability}.

\paragraph{Q2: Does the same structure recur under a new RL objective?}
A second question is whether this structure is specific to a single RL stage
and objective, or whether it reappears when a new RL stage starts from an
already RL-trained checkpoint and targets a distinct capability.

\paragraph{A sequential objective-shift stress test.}
We evaluate the parameterization test on a two-stage vision-language-action RL
pipeline initialized from
\texttt{Qwen2.5-VL-3B-Instruct}~\cite{yuan2025embodied}.
Stage~I optimizes spatial reasoning and produces $W_1$.
Stage~II starts from this already RL-trained checkpoint, optimizes an
embodied-manipulation objective, and produces $W_2$.
The resulting sequence provides three linked transitions:
\begin{equation}
\label{eq:vla_transitions}
\small
    W_0\rightarrow W_1,
    \qquad
    W_1\rightarrow W_2,
    \qquad
    W_0\rightarrow W_2.
\end{equation}
The transition $W_0\rightarrow W_1$ provides the first-stage reference.
The decisive objective-shift test is $W_1\rightarrow W_2$: it asks whether the
same frame-adaptability requirement reappears after re-anchoring at an already
RL-trained checkpoint and changing the RL objective.
The cumulative transition $W_0\rightarrow W_2$ tests consistency across the
complete two-stage sequence. We analyze both language and vision modules.

\begin{figure}[!t]
  \centering
    \includegraphics[width=\linewidth]
        {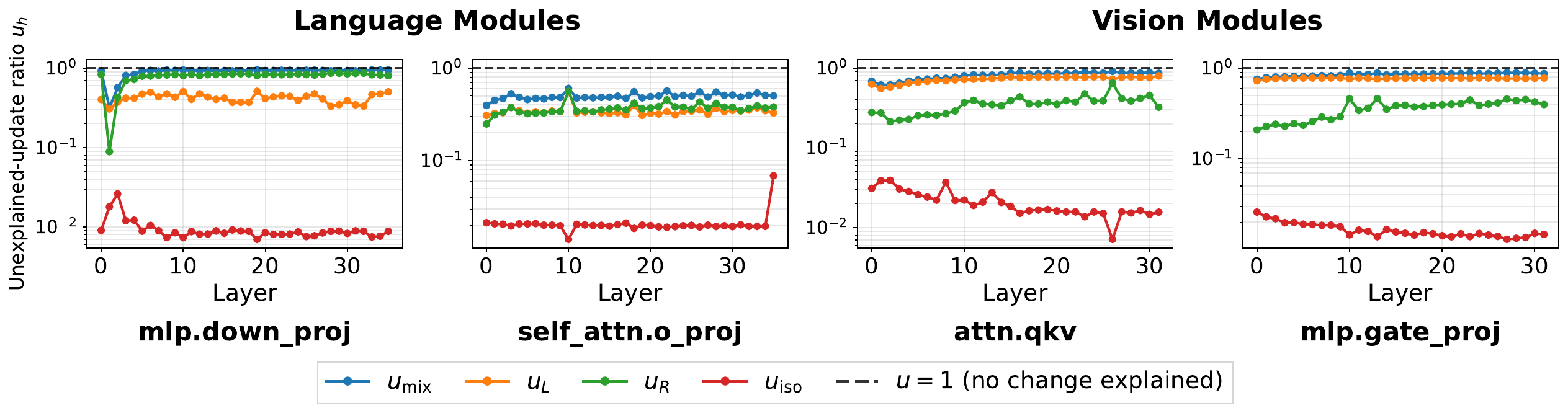}
  \caption{
\small
\textbf{Both incoming frames must remain adaptable.}
Unexplained-update ratios $u_h$ for $W_0\!\rightarrow W_2$ at
$r=\lfloor0.9q\rfloor$, for representative language and vision modules.
The dashed line at $u=1$ corresponds to leaving the incoming checkpoint
unchanged. 
Restrictions that freeze one or both incoming singular subspaces
$(u_{\mathrm{mix}},u_L,u_R)$ leave a large fraction of the update
unexplained at every layer, whereas fixing only the incoming spectrum
($u_{\mathrm{iso}}$) leaves a few percent. All transitions and
truncation levels are reported in Appendix~\ref{app:rank}.
}
  \label{fig:frame_adaptability}
  \vspace{-12pt}
\end{figure}

\paragraph{Results: both frames must remain adaptable.}
For the cumulative transition $W_0\rightarrow W_2$,
Figure~\ref{fig:frame_adaptability} shows that an optimal remix within both
incoming subspaces leaves a median $87\%$ of the checkpoint update
unexplained. Retaining only the incoming output or input subspace still leaves
$45\%$ and $42\%$, respectively. These are optimistic residuals because the
corresponding classes may freely refit their remaining core and spectrum.

By contrast, retaining the incoming spectrum while adapting both frames leaves
a median residual of only $1.8\%$. Thus, among the tested structural
restrictions, neither incoming singular subspace can be frozen while retaining
a low-residual endpoint description: the spectrum can remain fixed, but both
frame variables must remain adaptable. 

The same qualitative conclusion holds for the key transition
$W_1\rightarrow W_2$, where the incoming checkpoint has already undergone RL
and Stage~II optimizes a distinct embodied-manipulation objective.
Thus, the frame-adaptability requirement is not confined to the first RL
stage. It reappears after an objective shift and re-anchoring at an RL-trained
checkpoint. Results for all transitions and truncation levels are reported in
Appendix~\ref{app:rank}.

\begin{tcolorbox}[
    colback=orange!10!white,
    colframe=black,
    boxrule=0.9pt,
    boxsep=2pt,
    top=3pt,
    bottom=3pt,
    left=3pt,
    right=3pt
]
\textit{\textbf{Take-away 2: fix the spectrum, but keep both frames
adaptable.}}
Even permissive remix and one-sided reconstruction classes leave much of the
checkpoint update unexplained. Thus, among the tested restrictions, the
spectrum can remain fixed but both frame variables must remain adaptable,
directly motivating ISO's parameterization
$W(U,V)=U\Sigma_0V^\top$.
\end{tcolorbox}

\section{From Spectral Inheritance to Isospectral Optimization (ISO)}
\label{sec:iso}

Sections~\ref{sec:dynamics}--\ref{sec:transport} identify complementary
constraints for algorithm design. Section~\ref{sec:dynamics} shows that the
base spectra can be reused while RLVR gains are retained and acquired.
Section~\ref{sec:transport} then shows that both singular frames must remain adaptable.

We use this separation as an inductive bias rather than as a claim that
unconstrained RLVR follows an exact fixed-spectrum update law. Motivated by
this evidence, we introduce \emph{Isospectral Optimization} (ISO), an
RLVR-native optimization stack that reuses the base spectra and exposes both singular frames as adaptable coordinates for offline expert composition and online RLVR training.

\subsection{The ISO Principle: Reuse the Spectrum, Optimize the Frames}
\label{sec:iso_primitive}

\paragraph{Inherit the spectrum, optimize the frames.}
For a base matrix
$W_0=U_0\Sigma_0V_0^\top$,
ISO represents the weight matrix through the fixed-spectrum
parameterization
\begin{equation}
\label{eq:iso_parameterization}
\small
    W(U,V)
    =
    U\Sigma_0V^\top,
    \qquad
    U\in\mathrm{St}(d_{\mathrm{out}},q),
    \quad
    V\in\mathrm{St}(d_{\mathrm{in}},q), \quad q=\min\{d_{\mathrm{out}},d_{\mathrm{in}}\}.
\end{equation}
Equation~\ref{eq:iso_parameterization} parameterizes the fixed-spectrum family
$\mathcal F(W_0)$ defined in Section~\ref{sec:dynamics}. 
Thus, every represented
matrix shares the base spectrum $\Sigma_0$. In exact arithmetic,
$W(U,V)\in\mathcal F(W_0)$. In implementation, this property holds up to
floating-point error.

This parameterization directly implements the separation established in the
preceding sections: it reuses the base spectrum $\Sigma_0$ while leaving both
frame variables $(U,V)$ adaptable for optimization or composition.

\paragraph{An RLVR-native optimization stack.}
ISO is a framework rather than a single optimizer. The same
fixed-spectrum frame parameterization supports two complementary stages of the
RLVR workflow. Offline, ISO-Merger consolidates shared-base RL specialists
directly from their checkpoints, without post-merge data, additional rollouts,
gradient updates, or an on-policy distillation stage. Online, ISO-Optimizer
applies a conventional base optimizer, such as AdamW or Muon, to the frame
variables $(U,V)$ during RLVR training. Both instantiations reuse the base
spectrum and restore frame feasibility after composition or optimization.

\begin{proposition}[First-order spectrum preservation]
\label{prop:iso_tangent}
Let
$W=U\Sigma_0V^\top\in\mathcal F(W_0)$,
where the singular values in $\Sigma_0$ are positive and simple.
For any differentiable curve
$W(t)\in\mathcal F(W_0)$
with $W(0)=W$ and tangent
\[
    \dot W
    :=
    \left.
    \frac{d}{dt}W(t)
    \right|_{t=0},
\]
we have 
\begin{equation}
\label{eq:fixed_spectrum_first_order_condition}
\small
    \operatorname{diag}
    \bigl(
        U^\top\dot W V
    \bigr)
    =
    0.
\end{equation}
By contrast, for an arbitrary perturbation $H$,
\begin{equation}
\label{eq:singular_value_directional_derivative}
\small
    \left.
    \frac{d}{d\varepsilon}
    \sigma_k(W+\varepsilon H)
    \right|_{\varepsilon=0}
    =
    u_k^\top H v_k,
    \qquad
    k=1,\ldots,q.
\end{equation}
\end{proposition}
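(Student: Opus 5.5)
We first establish the directional-derivative formula \eqref{eq:singular_value_directional_derivative}, since the first-order condition \eqref{eq:fixed_spectrum_first_order_condition} then follows from it.

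\textbf{Step 1: the directional derivative.} Because $\sigma_k(W)>0$ is simple, standard perturbation theory for the symmetric matrix $W^\top W$ (whose eigenvalue $\sigma_k^2$ is simple) shows that $\sigma_k$ and a choice of singular pair $(u_k,v_k)$ are differentiable in a neighborhood of $W$. For $\varepsilon$ near $0$, write $W(\varepsilon)=W+\varepsilon H$ with singular triple $(\sigma_k(\varepsilon),u_k(\varepsilon),v_k(\varepsilon))$ satisfying
\[
\sigma_k(\varepsilon)=u_k(\varepsilon)^\top W(\varepsilon)v_k(\varepsilon).
\]
Differentiating at $\varepsilon=0$ gives
\[
\dot\sigma_k=\dot u_k^\top W v_k+u_k^\top H v_k+u_k^\top W\dot v_k .
\]
Next we use $Wv_k=\sigma_k u_k$ and $W^\top u_k=\sigma_k v_k$. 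The first and third terms then become $\sigma_k\dot u_k^\top u_k$ and $\sigma_k v_k^\top \dot v_k$. Both vanish, because differentiating the unit-norm constraints $u_k^\top u_k=v_k^\top v_k=1$ gives $u_k^\top\dot u_k=v_k^\top\dot v_k=0$. This leaves $\dot\sigma_k=u_k^\top Hv_k$.

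\textbf{Step 2: the first-order condition.} Let $W(t)\in\mathcal F(W_0)$ be a differentiable curve with $W(0)=W$. By the definition \eqref{eq:fixed_spectrum_family}, $\sigma_k(W(t))=\sigma_k(W_0)$ is constant in $t$. The chain rule combined with Step 1, applied with $H=\dot W$, gives
\[
0=\frac{d}{dt}\sigma_k(W(t))\Big|_{t=0}=u_k^\top\dot W v_k .
\]
Since $W=U\Sigma_0V^\top$ is a thin SVD with simple spectrum, the columns $u_k$ and $v_k$ of $U$ and $V$ are singular vectors of $W$, determined up to a joint sign. The quantity $u_k^\top\dot W v_k$ is invariant under that sign flip, so it is exactly the $k$-th diagonal entry of $U^\top\dot W V$. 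Hence $\operatorname{diag}(U^\top\dot W V)=0$.

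\textbf{Alternative via the tangent space.} One could instead write $\dot W=\dot U\Sigma_0V^\top+U\Sigma_0\dot V^\top$ for a lift of the curve to the Stiefel manifolds. Then
\[
U^\top\dot W V=(U^\top\dot U)\Sigma_0+\Sigma_0(\dot V^\top V).
\]
Here $U^\top\dot U$ is skew-symmetric, so its diagonal vanishes, and the same holds for $\dot V^\top V$. The catch is that a differentiable curve in $\mathcal F(W_0)$ need not admit a differentiable lift $(U(t),V(t))$ unless the spectrum is simple. The main obstacle is therefore the differentiability of the singular triple, and simplicity together with positivity of $\Sigma_0$ is exactly what secures it. The Step 1 route handles this cleanly by invoking analytic perturbation of a simple eigenvalue, so we use it as the main argument.
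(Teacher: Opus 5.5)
Your proof is correct, but it reaches the first claim by a different route from the paper. Your Step~1 coincides with the paper's proof of the second claim. For the first claim, the paper uses what you call your ``alternative'': it lifts the curve to differentiable Stiefel representatives with $U(0)=U$, $V(0)=V$, writes $U^\top\dot W V=(U^\top\dot U)\Sigma_0+\Sigma_0(\dot V^\top V)$, and reads off the zero diagonal from skew-symmetry.

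You instead make the first claim a corollary of the second. Since $t\mapsto\sigma_k(W(t))$ is constant, the chain rule gives $u_k^\top\dot W v_k=0$, and joint-sign invariance identifies this with the $k$-th diagonal entry for the given $U,V$. The chain rule needs $\sigma_k$ to be differentiable on matrix space, not only along lines. Your appeal to analytic perturbation of the simple eigenvalue $\sigma_k^2>0$ of $W^\top W$ supplies this, so nothing is missing.

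What each route buys: yours is economical, since one perturbation fact yields both claims, and it reads the first claim directly as ``first-order singular-value changes vanish along $\mathcal F(W_0)$''. The paper's route gives the frame-level identity it reuses for the ISO tangent $\xi_U\Sigma_0V^\top+U\Sigma_0\xi_V^\top$, which needs no perturbation theory at all.

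Your closing remark, however, has the dependence on the hypotheses backwards. $\mathcal F(W_0)$ is an orbit of the compact group $\mathrm O(d_{\mathrm{out}})\times\mathrm O(d_{\mathrm{in}})$, by the paper's two-sided characterization. It is therefore an embedded submanifold, and the orbit map admits smooth local sections. So any differentiable curve through $W$ can be written $W(t)=Q_L(t)WQ_R(t)^\top$ with differentiable orthogonal $Q_L,Q_R$ and $Q_L(0)=Q_R(0)=I$. Then $U(t)=Q_L(t)U$, $V(t)=Q_R(t)V$ is a differentiable lift even when singular values are repeated or zero. The lift-and-skew argument thus proves the first claim without simplicity or positivity. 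Your chain-rule route is the one that genuinely needs both, because $\sigma_k$ is not differentiable at repeated or zero singular values. Under the stated hypotheses, both arguments are valid.
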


For Stiefel tangent directions
    $\xi_U
    \in
    T_U\mathrm{St}(d_{\mathrm{out}},q),
    \
    \xi_V
    \in
    T_V\mathrm{St}(d_{\mathrm{in}},q)$,
differentiating Equation~\ref{eq:iso_parameterization} gives the induced
weight-space direction
\begin{equation}
\label{eq:iso_weight_tangent}
\small
    \dot W
    =
    \xi_U\Sigma_0V^\top
    +
    U\Sigma_0\xi_V^\top.
\end{equation}
Because
$U^\top\xi_U$
and
$V^\top\xi_V$
are skew-symmetric,
Equation~\ref{eq:iso_weight_tangent} satisfies
Equation~\ref{eq:fixed_spectrum_first_order_condition}.
Thus, feasible frame motion preserves the spectrum to first order, while the
finite reconstruction
$U\Sigma_0V^\top$
preserves it exactly.
The proof is given in Appendix~\ref{app:proof_iso_tangent}.

\subsection{ISO-Merger: Merging RL Experts Without Rollouts}
\label{sec:iso_merger}

\paragraph{Data-free composition of shared-base RL experts.}
Training a single RLVR policy across heterogeneous domains can be expensive
and unstable. A modular alternative is to train several specialists from a
shared base and then consolidate their capabilities into a single policy.
One common consolidation strategy is an additional on-policy distillation
stage driven by online rollouts~\cite{lu2025onpolicydistillation,zeng2026glm}.
ISO-Merger instead addresses the strictly checkpoint-only setting: it requires
no post-merge data, rollout generation, gradient updates, or distillation. It
pursues the same broad objective as on-policy distillation: recovering
complementary specialist capabilities in a single policy.

We focus on specialists trained from the same base checkpoint, with each
expert targeting a distinct domain or capability. Merging already-generalist
policies trained on broad, substantially overlapping mixtures constitutes a
different setting that we do not study here.

\paragraph{Setup.}
Let
$\{W_i\}_{i=1}^{K}$ be the corresponding matrices from $K$ RL experts
fine-tuned from the same base checkpoint $W_0$. We write
$W_0
    =
    U_0\Sigma_0V_0^\top,
    \
    W_i
    =
    U_i\Sigma_iV_i^\top$.

Many data-free merging methods construct
$W_\star
    =
    \operatorname{Comp}
    (W_1,\ldots,W_K;W_0)$
by combining Euclidean task vectors. ISO-Merger instead retains each expert's
learned frames $(U_i,V_i)$ while reusing the shared base spectrum
$\Sigma_0$. Equivalently, it represents expert $i$ by the fixed-spectrum
checkpoint
\[
    U_i\Sigma_0V_i^\top
    \in
    \mathcal F(W_0)
\]
and constructs
\begin{equation}
\label{eq:iso_merger_output}
\small
    W_\star
    =
    U_\star\Sigma_0V_\star^\top.
\end{equation}
The resulting matrix therefore belongs to the fixed-spectrum family
$\mathcal F(W_0)$ up to numerical precision.

\paragraph{Expert directions in shared frame coordinates.}
For simple singular values, each singular pair has a joint sign ambiguity,
\[
    (u_k,v_k)
    \mapsto
    (-u_k,-v_k).
\]
We first align each expert's singular pairs with the corresponding base pairs
using the sign-canonicalization rule described in
Appendix~\ref{app:iso_merger_details}. After alignment, we define the frame
displacements
\begin{equation}
\label{eq:iso_merger_frame_displacements}
\small
    \Delta U_i
    :=
    U_i-U_0,
    \qquad
    \Delta V_i
    :=
    V_i-V_0,
\end{equation}
and project them onto the Stiefel tangent spaces at the shared base:
\begin{equation}
\label{eq:iso_merger_tangent_directions}
\small
    \xi_{U,i}
    =
    \Pi_{U_0}(\Delta U_i),
    \qquad
    \xi_{V,i}
    =
    \Pi_{V_0}(\Delta V_i),
\end{equation}
where $\Pi_{U_0}$ and $\Pi_{V_0}$ denote the Stiefel tangent projections
defined in Appendix~\ref{app:iso_merger_details}.

We find empirically that trailing modes, the columns associated with small
singular values, are less stable across experts and that retaining them
degrades the merged model. For a keep ratio
$\rho_{\mathrm{keep}}\in(0,1]$, define
\begin{equation}
\label{eq:iso_merger_mask}
\small
\begin{aligned}
    k_{\mathrm{keep}}
    &:=
    \operatorname{round}
    \bigl(
        \rho_{\mathrm{keep}}q
    \bigr),
    \\
    D_{\mathrm{keep}}
    &:=
    \operatorname{Diag}
    \left(
        \bigl(
            \mathbf 1\{k\leq k_{\mathrm{keep}}\}
        \bigr)_{k=1}^{q}
    \right),
\end{aligned}
\end{equation}
and mask the trailing columns:
\begin{equation}
\label{eq:iso_merger_masked_directions}
\small
    \widetilde{\xi}_{U,i}
    =
    \xi_{U,i}D_{\mathrm{keep}},
    \qquad
    \widetilde{\xi}_{V,i}
    =
    \xi_{V,i}D_{\mathrm{keep}}.
\end{equation}
Unless stated otherwise, we use
$\rho_{\mathrm{keep}}=0.9$.

Masking is a coordinate-selection operation:
$\widetilde{\xi}_{U,i}$ and $\widetilde{\xi}_{V,i}$ need not individually
satisfy the Stiefel tangent constraints. We use them to represent each
expert's change in the dominant shared frame coordinates and impose
feasibility only after aggregating the experts. Because all displacements are
anchored at the same base factors $(U_0,V_0)$, their linear combinations are
well-defined in a common coordinate system.

\paragraph{A unit-retention target.}
Different expert directions may overlap or interfere. To measure these
interactions, we linearize the reconstruction map
$W(U,V)=U\Sigma_0V^\top$ at $(U_0,V_0)$ and associate expert $i$ with the
local first-order effect proxy
\begin{equation}
\label{eq:iso_merger_effect_proxy}
\small
    g_i
    :=
    \widetilde{\xi}_{U,i}\Sigma_0V_0^\top
    +
    U_0\Sigma_0\widetilde{\xi}_{V,i}^\top
    \in
    \mathbb{R}^{d_{\mathrm{out}}\times d_{\mathrm{in}}}.
\end{equation}
We then form the Gram matrix
\begin{equation}
\label{eq:iso_merger_gram}
\small
    \Gamma_{ij}
    :=
    \langle g_i,g_j\rangle_F,
    \qquad
    \Gamma
    \in
    \mathbb{R}^{K\times K}.
\end{equation}

For coefficients $c\in\mathbb{R}^{K}$, let
\begin{equation}
\small
    g(c)
    :=
    \sum_{i=1}^{K}c_i g_i.
\end{equation}
For each nonzero expert proxy $g_i$, define its self-retention in the merged
proxy by
\begin{equation}
\label{eq:iso_merger_retention}
\small
    \operatorname{ret}_i(c)
    :=
    \frac{
        \langle g(c),g_i\rangle_F
    }{
        \|g_i\|_F^2
    }
    =
    \frac{
        (\Gamma c)_i
    }{
        \Gamma_{ii}
    },
    \qquad
    i=1,\ldots,K.
\end{equation}
The ideal target
$\operatorname{ret}_i(c)=1$ for every expert yields
\[
    \Gamma c
    =
    b,
    \qquad
    b
    :=
    \operatorname{diag}(\Gamma)
    \in
    \mathbb{R}^{K}.
\]
Because $\Gamma$ may be ill-conditioned and the exact solution may require
unstable coefficients, we use the ridge-stabilized system
\begin{equation}
\label{eq:iso_merger_retention_system}
\small
    \left(
        \Gamma
        +
        \lambda_{\mathrm{ridge}}I_K
    \right)
    \bar c
    =
    b,
    \qquad
    c^\star
    =
    \operatorname{clip}
    \left(
        \bar c;
        c_{\min},
        c_{\max}
    \right).
\end{equation}
The clipping bounds limit extreme coefficients and excessive amplification.
The complete procedure therefore targets, rather than guarantees, unit
self-retention after ridge stabilization, clipping, tangent projection, and
retraction.

\paragraph{Project, retract, and reconstruct.}
After combining the masked expert directions, we project only the aggregate
displacement back onto the Stiefel tangent spaces:
\begin{equation}
\label{eq:iso_merger_aggregate_projection}
\small
\begin{aligned}
    \xi_{U,\star}
    &=
    \Pi_{U_0}
    \left(
        \sum_{i=1}^{K}
        c_i^\star
        \widetilde{\xi}_{U,i}
    \right),
    \\
    \xi_{V,\star}
    &=
    \Pi_{V_0}
    \left(
        \sum_{i=1}^{K}
        c_i^\star
        \widetilde{\xi}_{V,i}
    \right).
\end{aligned}
\end{equation}
For a thin SVD
\[
    X=P_XS_XQ_X^\top,
\]
we define the polar factor by
\begin{equation}
\label{eq:polar_retraction}
\small
    \operatorname{polar}(X)
    :=
    P_XQ_X^\top.
\end{equation}
When $X$ has full column rank, this is equivalently
\[
    \operatorname{polar}(X)
    =
    X(X^\top X)^{-1/2}.
\]
For rank-deficient $X$, the SVD expression selects one valid nearest Stiefel
factor, which need not be unique.
For ISO-Merger, the retraction arguments are in fact full column rank. For
example, tangent feasibility gives
\[
    (U_0+\xi_{U,\star})^\top(U_0+\xi_{U,\star})
    =I+\xi_{U,\star}^\top\xi_{U,\star}\succ0,
\]
and analogously for $V_0+\xi_{V,\star}$.
We then reconstruct the merged factors and weight matrix as
\begin{equation}
\label{eq:iso_merger_reconstruction}
\small
\begin{aligned}
    U_\star
    &=
    \operatorname{polar}
    (U_0+\xi_{U,\star}),
    \\
    V_\star
    &=
    \operatorname{polar}
    (V_0+\xi_{V,\star}),
    \\
    W_\star
    &=
    U_\star\Sigma_0V_\star^\top.
\end{aligned}
\end{equation}
Because $U_\star$ and $V_\star$ have orthonormal columns,
$W_\star$ shares the base singular values $\Sigma_0$ up to floating-point
error.

We apply the same construction to two-dimensional embedding and unembedding
matrices. One-dimensional parameters, such as normalization scales and
biases, are composed using a simple average. ISO-Merger thus
performs checkpoint-only composition in shared fixed-spectrum coordinates,
requiring no post-merge data, rollouts, gradient updates, or distillation.
Full algorithmic details are provided in
Appendix~\ref{app:iso_merger_details}.

\subsection{ISO-Optimizer: Online Fixed-Spectrum RLVR Training}
\label{sec:iso_optimizer}

\paragraph{A fixed-spectrum transformation of a base optimizer.}
ISO-Optimizer does not replace the numerical update rule used by standard
RLVR. Instead, it applies that rule to the singular-frame variables under a
fixed base spectrum. Let
\[
    \mathsf{Opt}(X,G,s)
\]
denote one update of a base optimizer, such as AdamW or Muon, where $X$ is the
optimized variable, $G$ is its gradient, and $s$ is the corresponding optimizer
state. Standard weight-space training applies
\begin{equation}
\label{eq:standard_optimizer_update}
\small
    (W^+,s_W^+)
    =
    \mathsf{Opt}
    \left(
        W,
        G_W,
        s_W
    \right),
    \qquad
    G_W
    =
    \nabla_W\mathcal L(W).
\end{equation}

$\operatorname{ISO}[\mathsf{Opt}]$ instead uses the fixed-spectrum
parameterization
\begin{equation}
\label{eq:iso_optimizer_parameterization}
\small
    W(U,V)
    =
    U\Sigma_0V^\top,
\end{equation}
where $\Sigma_0$ is inherited from the base matrix
$W_0=U_0\Sigma_0V_0^\top$ and remains fixed throughout training.
The base optimizer supplies the update equations and state machinery, while
ISO supplies the frame parameterization and the feasibility retraction.

\paragraph{Factor gradients and tentative updates.}
Given the gradient with respect to the reconstructed weight matrix,
\[
    G_W
    =
    \nabla_W
    \mathcal L
    \bigl(
        U\Sigma_0V^\top
    \bigr),
\]
the Euclidean gradients with respect to the frame variables follow from the
chain rule:
\begin{equation}
\label{eq:iso_factor_gradients}
\small
    G_U
    =
    G_WV\Sigma_0,
    \qquad
    G_V
    =
    G_W^\top U\Sigma_0.
\end{equation}
A detailed derivation is provided in
Appendix~\ref{app:iso_factor_gradients}.
The factors $\Sigma_0$ arise from the Jacobian of the reconstruction map:
the same frame displacement produces a larger weight-space change for a mode
with a larger singular value.

The base optimizer applies its update rule independently to the two frame
variables and their associated states:
\begin{equation}
\label{eq:base_optimizer_on_factors}
\small
\begin{aligned}
    (\bar U,s_U^+)
    &=
    \mathsf{Opt}
    \left(
        U,
        G_U,
        s_U
    \right),
    \\
    (\bar V,s_V^+)
    &=
    \mathsf{Opt}
    \left(
        V,
        G_V,
        s_V
    \right).
\end{aligned}
\end{equation}
The tentative factors $\bar U$ and $\bar V$ need not satisfy the Stiefel
constraints. ISO restores feasibility using the polar retraction defined in
Equation~\ref{eq:polar_retraction}:
\begin{equation}
\label{eq:iso_optimizer_retraction}
\small
\begin{aligned}
    U^+
    &=
    \operatorname{polar}(\bar U),
    \\
    V^+
    &=
    \operatorname{polar}(\bar V),
    \\
    W^+
    &=
    U^+\Sigma_0(V^+)^\top.
\end{aligned}
\end{equation}
In exact arithmetic,
$W^+\in\mathcal F(W_0)$. In implementation, its singular values match
$\Sigma_0$ up to floating-point error.

\paragraph{A generic optimizer transformation.}
This construction defines
\begin{equation}
\label{eq:iso_optimizer_transformation}
\small
    \mathsf{Opt}
    \longmapsto
    \operatorname{ISO}[\mathsf{Opt}].
\end{equation}
Using AdamW as the base update rule gives ISO-AdamW, while using Muon gives
ISO-Muon. Both variants share the same fixed-spectrum frame parameterization
and retraction, while inheriting the update equations and optimizer-state
dynamics of their respective base rules. AdamW is our primary instantiation
because it is the standard optimizer in the RLVR recipes studied here. 
ISO-Muon tests whether the same construction transfers across base update rules.
Then, the central comparison is
\begin{equation}
\label{eq:adamw_comparison}
\small
    \text{AdamW on }W
    \qquad\text{versus}\qquad
    \operatorname{ISO}[\text{AdamW}]
    \text{ on }(U,V)
    \text{ with fixed }\Sigma_0.
\end{equation}

Importantly, $\operatorname{ISO}[\mathsf{Opt}]$ is not an
$\mathsf{Opt}$ update on $W$ followed by projection onto
$\mathcal F(W_0)$. The update rule and optimizer states are instantiated
directly in the frame coordinates, changing the optimized variables, the
effective factor-space geometry, and the feasible training trajectory.

Although ISO introduces two factor variables, it does not enlarge the
represented weight-space class. After every retraction, the reconstructed
weight remains in the strictly more constrained fixed-spectrum family
$\mathcal F(W_0)$. Thus, the observed gains cannot be attributed to a larger
feasible model class, although the factor-space parameterization and optimizer
dynamics are also part of the method.
Alternative spectrum-preserving optimizers formulated directly in weight space
are also possible. Systematically comparing such formulations with
factor-space ISO is left to future work.

\paragraph{Weight decay.}
We set weight decay to zero for both the weight-space and ISO variants.
Empirically, it provides no measurable benefit in our RLVR runs. At learning
rates of order $10^{-6}$, the standard coefficient
$\lambda=10^{-2}$ induces only
$\eta\lambda\approx10^{-8}$ relative shrinkage per step, below the
BF16-visible update scale analyzed in prior work~\cite{zhu2025path}.

\paragraph{Implementation and overhead.}
The main additional structured operation in ISO-Optimizer is the polar
retraction applied after each factor update. We implement it using an FP64
SVD-based $\operatorname{polar}$ computation for numerical stability.
Numerical precision diagnostics are reported in
Appendix~\ref{sec:svd_precision_issue}. Following a DION-style distributed
implementation~\cite{ahn2025dion}, we distribute the matrix retractions across
multiple GPUs.
In our RLVR profiling, the measured retraction cost corresponds to
approximately $7\%$ of the end-to-end step time, as reported in
Section~\ref{sec:exp_optimizer}. 
In modern asynchronous RL systems, this
non-dominant optimizer work is amenable to overlap with rollout generation,
particularly in long-horizon and agentic settings where environment
interaction can be substantially more expensive.

Representing each two-dimensional weight through both $U$ and $V$ increases
the nominal storage for trainable tensors and optimizer states relative to a
single dense matrix. The fixed spectrum $\Sigma_0$ requires no optimizer state.
In our implementation, the factor tensors are fully sharded,
and optimizer-state offloading can further reduce the per-device memory
footprint. Given the observed optimization and training-step savings, these
overheads are manageable in our setting and motivate further infrastructure
optimization for both retraction scheduling and factor-state storage.

\section{Experiments}
\label{sec:experiments}
We evaluate ISO in two settings corresponding to its two algorithmic
instantiations. First, we test whether ISO-Merger can compose RLVR experts
without additional rollouts while preserving their specialist capabilities,
pursuing an objective similar to that of on-policy distillation. Second, we
test whether ISO-Optimizer improves RLVR convergence when applied to base
optimizers such as AdamW and Muon.

\subsection{ISO-Merger: Composing Heterogeneous RL Experts Without Rollouts}

\definecolor{expertcolor}{HTML}{F2F7FF} 
\definecolor{ourcolor}{HTML}{E9F5E9}    

\begin{table*}[h]
\centering
\small
\renewcommand{\arraystretch}{1.2}
\setlength{\tabcolsep}{4pt}
\caption{
\label{tab:great_merge1}
\small
\textbf{Main results on Qwen2.5-7B-Instruct with 3 RL experts.} Coding: pass accuracy (ACC) and unit-test accuracy (UT) on LiveBench (LB) and LiveCodeBench (LCB). Tool Use: Live and Non-Live averages following the BFCL v4 evaluation protocol. Memory: RULER HotpotQA and SQuAD at 32K–64K context lengths under the Recurrent setting. Best expert/merged result per column in  \textbf{bold}. Shading indicates \colorbox{expertcolor}{Experts} and \colorbox{ourcolor}{Ours}.
}
\resizebox{\textwidth}{!}{
\begin{tabular}{l cccc cc cccc c}
\toprule
& \multicolumn{4}{c}{Coding} & \multicolumn{2}{c}{Tool Use} & \multicolumn{4}{c}{Memory} & \\
\cmidrule(lr){2-5} \cmidrule(lr){6-7} \cmidrule(lr){8-11}
& \multicolumn{2}{c}{LB} & \multicolumn{2}{c}{LCB v2} & Live & Non-Live & \multicolumn{2}{c}{HotpotQA} & \multicolumn{2}{c}{SQuAD} & \\
\cmidrule(lr){2-3} \cmidrule(lr){4-5} \cmidrule(lr){6-6} \cmidrule(lr){7-7} \cmidrule(lr){8-9} \cmidrule(lr){10-11}
Method & ACC & UT & ACC & UT & ACC & ACC & 32K & 64K & 32K & 64K & Avg \\
\midrule
Base~\cite{qwen2.5} & 36.25 $\pm$ 0.86 & 48.27 $\pm$ 1.44 & 28.31 $\pm$ 0.51 & 43.08 $\pm$ 0.82 & 62.96 & 69.26 & 49.87 $\pm$ 1.45 & 45.54 $\pm$ 0.58 & 58.90 $\pm$ 1.07 & 56.77 $\pm$ 0.61 & 49.92 \\
\rowcolor{expertcolor}
RLVR-Coder~\cite{wang2025code} & \textbf{37.42 $\pm$ 1.22} & 49.42 $\pm$ 1.60 & \textbf{30.17 $\pm$ 0.99} & \textbf{44.24 $\pm$ 0.99} & 63.37 & 68.07 & 50.21 $\pm$ 0.16 & 45.61 $\pm$ 0.33 & 60.51 $\pm$ 0.70 & 56.84 $\pm$ 0.42 & 50.59 \\
\rowcolor{expertcolor}
RLVR-Tool~\cite{qian2025toolrl} & 36.87 $\pm$ 1.79 & 48.08 $\pm$ 0.82 & 28.20 $\pm$ 1.95 & 42.43 $\pm$ 1.90 & \textbf{71.83} & \textbf{81.82} & 50.39 $\pm$ 0.68 & 45.33 $\pm$ 0.26 & 59.73 $\pm$ 1.18 & 57.86 $\pm$ 0.52 & 52.25 \\
\rowcolor{expertcolor}
RLVR-Memory~\cite{yu2025memagent} & 37.32 $\pm$ 1.37 & \textbf{50.05 $\pm$ 1.61} & 27.41 $\pm$ 2.99 & 41.70 $\pm$ 4.63 & 63.82 & 72.61 & \textbf{78.60 $\pm$ 0.54} & \textbf{77.95 $\pm$ 0.29} & \textbf{79.98 $\pm$ 0.38} & \textbf{78.52 $\pm$ 0.22} & \textbf{60.80} \\
\midrule
Task Arithmetic~\cite{ta} & 38.85 $\pm$ 0.70 & 52.18 $\pm$ 0.40 & 31.55 $\pm$ 0.18 & 47.06 $\pm$ 0.58 & \textbf{73.73} & \textbf{82.86} & 72.22 $\pm$ 0.22 & 70.49 $\pm$ 0.10 & 75.03 $\pm$ 0.29 & 74.95 $\pm$ 0.32 & 61.89 \\
TIES~\cite{ties} & 40.30 $\pm$ 2.18 & 52.04 $\pm$ 2.50 & 31.46 $\pm$ 0.35 & 47.33 $\pm$ 0.30 & 68.24 & 76.54 & 76.42 $\pm$ 0.36 & 75.24 $\pm$ 0.14 & 78.27 $\pm$ 0.46 & 77.86 $\pm$ 0.59 & 62.37 \\
TSV~\cite{tsv} & 36.96 $\pm$ 0.86 & 49.23 $\pm$ 0.59 & 28.92 $\pm$ 1.79 & 43.37 $\pm$ 3.25 & 73.64 & 82.69 & 75.37 $\pm$ 0.29 & 74.10 $\pm$ 0.30 & 78.30 $\pm$ 0.17 & 77.26 $\pm$ 0.78 & 61.98 \\
RAM~\cite{ram} & 38.59 $\pm$ 0.79 & 50.38 $\pm$ 1.53 & 31.31 $\pm$ 0.28 & 46.33 $\pm$ 0.30 & 72.56 & 81.89 & 76.17 $\pm$ 0.20 & 75.29 $\pm$ 0.49 & 78.42 $\pm$ 0.47 & 77.83 $\pm$ 0.46 & 62.88 \\
OrthoMerge-G-TIES~\cite{yang2026orthogonal}  & 40.74 $\pm$ 2.00 & \textbf{53.22 $\pm$ 1.90} & \textbf{31.98 $\pm$ 0.84} & \textbf{47.48 $\pm$ 0.77} & 66.98 & 78.17 & 74.79 $\pm$ 0.49 & 74.20 $\pm$ 0.49 & 77.75 $\pm$ 0.12 & 76.32 $\pm$ 0.21  & 62.16\\
\midrule
\rowcolor{ourcolor}ISO-Merger & \textbf{41.81 $\pm$ 2.00} & 52.73 $\pm$ 1.57 & 31.06 $\pm$ 1.88 & 45.69 $\pm$ 3.06 & 72.32 & 81.07 & \textbf{79.46 $\pm$ 0.78} & \textbf{76.77 $\pm$ 0.27} & \textbf{79.10 $\pm$ 0.14} & \textbf{78.01 $\pm$ 0.32} & \textbf{63.80} \\
\bottomrule
\end{tabular}
}
\end{table*}

\begin{table}[ht]
\centering
\caption{
\label{tab:great_merge2}
\small
\textbf{Main results on DeepSeek-R1-Distill-Qwen-1.5B with 2 RL experts}. Coding: pass accuracy (ACC) and unit-test accuracy (UT) on LiveBench (LB) and LiveCodeBench (LCB). Math: ACC on AIME 2024, AIME 2025, AMC 2023, Minerva, and OlympiadBench. Avg is the average over all columns. Best expert/merged result per column in \textbf{bold}. Shading indicates \colorbox{expertcolor}{Experts} and \colorbox{ourcolor}{Ours}.}
\resizebox{\textwidth}{!}{
\begin{tabular}{lcccccccccc}
\toprule
& \multicolumn{4}{c}{Coding} & \multicolumn{5}{c}{Math} & \\
\cmidrule(lr){2-5} \cmidrule(lr){6-10}
& \multicolumn{2}{c}{LB} & \multicolumn{2}{c}{LCB v5} & AIME 2024 & AIME 2025 & AMC 2023 & Minerva & OlympiadBench & \multirow{2}{*}{Avg} \\
\cmidrule(lr){2-3} \cmidrule(lr){4-5} \cmidrule(lr){6-6} \cmidrule(lr){7-7} \cmidrule(lr){8-8} \cmidrule(lr){9-9} \cmidrule(lr){10-10}
Method & ACC & UT & ACC & UT & Avg@32 & Avg@32 & Avg@8 & Avg@4 & Avg@4 & \\
\midrule
Base~\cite{deepseek_r1} & 17.99 $\pm$ 0.29 & 26.24 $\pm$ 0.53 & 16.82 $\pm$ 0.28 & 20.85 $\pm$ 0.18 & 30.90 $\pm$ 1.15 & 23.40 $\pm$ 0.86 & 63.15 $\pm$ 0.38 & 27.33 $\pm$ 0.43 & 43.11 $\pm$ 0.26 & 29.98 \\
\rowcolor{expertcolor}
Archer2.0~\cite{archer} & \textbf{26.66 $\pm$ 0.45} & \textbf{37.26 $\pm$ 0.55} & \textbf{26.90 $\pm$ 0.21} & \textbf{38.99 $\pm$ 0.37} & 42.05 $\pm$ 0.43 & 28.02 $\pm$ 0.44 & 73.44 $\pm$ 0.50 & 30.09 $\pm$ 0.23 & 49.99 $\pm$ 0.65 & 39.27 \\
\rowcolor{expertcolor}
JustRL~\cite{he2025justrl} & 22.23 $\pm$ 0.23 & 31.85 $\pm$ 0.74 & 23.05 $\pm$ 0.21 & 30.18 $\pm$ 0.31 & \textbf{53.16 $\pm$ 0.44} & \textbf{36.84 $\pm$ 0.73} & \textbf{82.78 $\pm$ 0.51} & \textbf{34.71 $\pm$ 0.50} & \textbf{55.67 $\pm$ 0.42} & \textbf{41.16} \\
\midrule
Task Arithmetic~\cite{ta} & 26.22 $\pm$ 0.22 & 35.82 $\pm$ 0.11 & 26.70 $\pm$ 0.21 & 36.58 $\pm$ 0.32 & 50.83 $\pm$ 0.39 & 33.09 $\pm$ 1.16 & 80.82 $\pm$ 0.51 & 33.00 $\pm$ 0.34 & 54.31 $\pm$ 0.15 & 41.93 \\
TIES~\cite{ties} & 26.37 $\pm$ 0.73 & 36.75 $\pm$ 0.90 & 27.62 $\pm$ 0.13 & 39.45 $\pm$ 0.15 & 54.51 $\pm$ 0.72 & 35.76 $\pm$ 0.30 & 82.13 $\pm$ 0.14 & 33.76 $\pm$ 0.48 & 55.36 $\pm$ 0.15 & 43.52 \\
TSV~\cite{tsv} & 26.40 $\pm$ 0.40 & 36.83 $\pm$ 0.15 & 27.61 $\pm$ 0.17 & 40.01 $\pm$ 0.26 & 53.16 $\pm$ 0.57 & 35.17 $\pm$ 0.72 & 80.97 $\pm$ 0.92 & 33.88 $\pm$ 0.46 & 54.68 $\pm$ 0.27 & 43.19 \\
RAM~\cite{ram} & \textbf{26.71 $\pm$ 0.29} & \textbf{36.99 $\pm$ 0.47} & 27.18 $\pm$ 0.63 & 38.83 $\pm$ 0.67 & 54.20 $\pm$ 0.55 & 35.80 $\pm$ 0.71 & 82.28 $\pm$ 0.38 & 34.10 $\pm$ 0.34 & 55.54 $\pm$ 0.25 & 43.51 \\
OrthoMerge-G-TIES~\cite{yang2026orthogonal}  & 26.46 $\pm$ 0.62 & 36.73 $\pm$ 0.37 & 27.74 $\pm$ 0.30 & 39.86 $\pm$ 0.64  & 54.55 $\pm$ 1.44 & 35.14 $\pm$ 0.87 & 81.88 $\pm$ 0.63 & 33.55 $\pm$ 0.42 & 55.47 $\pm$ 0.24 &  43.49 \\
\midrule
\rowcolor{ourcolor}ISO-Merger & 25.52 $\pm$ 0.08 & 36.42 $\pm$ 0.55 & \textbf{27.84 $\pm$ 0.13} & \textbf{41.84 $\pm$ 0.31} & \textbf{55.00 $\pm$ 0.52} & \textbf{37.81 $\pm$ 0.31} & \textbf{83.53 $\pm$ 1.02} & \textbf{34.77 $\pm$ 0.23} & \textbf{56.64 $\pm$ 0.05} & \textbf{44.38} \\
\bottomrule
\end{tabular}
}
\end{table}

\paragraph{Setup.}
We evaluate ISO-Merger in the setting it is designed for: composing multiple
RLVR specialists from a shared base without post-merge data, rollouts, or
distillation. We consider two expert-composition settings. The first uses
\texttt{Qwen2.5-7B-Instruct}~\cite{qwen2.5} as the shared base and merges three RL experts
specialized for coding~\cite{wang2025code}, tool use~\cite{qian2025toolrl},
and long-context memory~\cite{yu2025memagent}. The second uses
\texttt{DeepSeek-R1-Distill-Qwen-1.5B} as the shared base and merges two RL
experts specialized for coding~\cite{archer} and math~\cite{he2025justrl}.
We compare against representative training-free merging baselines: Task
Arithmetic~\cite{ta}, TIES~\cite{ties}, TSV-Merge~\cite{tsv},
RAM~\cite{ram}, and the more recent OrthoMerge-G-TIES~\cite{yang2026orthogonal}. All methods merge the same expert checkpoints and are evaluated
directly after merging. 
Unless otherwise stated, generation results use
average@16, and \textbf{stochastic evaluations are repeated over three independent runs
with mean and standard deviation reported}. 
Full benchmark protocols, decoding
settings, baseline hyperparameters, and best@4/worst@4 results are provided in
Appendix~\ref{app:merging_details}.

\paragraph{Results.}
Tables~\ref{tab:great_merge1} and~\ref{tab:great_merge2} show that
ISO-Merger achieves the strongest overall data-free composition performance
in both settings. On Qwen2.5-7B, ISO-Merger reaches an overall average of
$63.80$, compared with $62.88$ for the strongest training-free baseline.
On DeepSeek-R1-Distill-Qwen-1.5B, it reaches $44.38$, compared with $43.52$
for the strongest baseline. 
Appendix~\ref{app:merging_details} shows that ISO-Merger essentially matches
the strongest aggregate best@4 baseline under both backbones while improving
worst@4 by $1.62$ and $1.36$ points, respectively. Thus, the mean improvements
do not sacrifice upper-tail performance, and the worst@4 results suggest more
consistent capability recovery across stochastic generations.
Overall, ISO-Merger remains competitive with the original specialists while
consolidating multiple capabilities into a single model. These results support
the practical utility of spectral inheritance for shared-base expert
composition: specialist capabilities can be combined in fixed-spectrum frame
coordinates while reusing the shared base spectrum.
\subsection{ISO-Optimizer: Optimizing in Fixed-Spectrum Coordinates Improves RLVR}
\label{sec:exp_optimizer}

\begin{table}[!t]
    \centering
    \small
\caption{
\small
\textbf{Math RLVR results.}
We report average@16 accuracy. Because evaluations remain noisy near
convergence, for each run we select the checkpoint with the highest aggregate
score among the final three evaluations and report all benchmark scores from
that checkpoint.
}    \label{tab:math_main_results}
    \resizebox{0.8\linewidth}{!}{
    \begin{tabular}{ll|ccccc|c}
        \toprule
        Model & Method & AIME24 & AIME25 & AMC23 & Minerva & Olympiad & Avg. \\
        \midrule

        \multirow{4}{*}{Qwen3-1.7B-Base}
        & Base  & 3.75 & 1.25 & 23.04 & 18.49 & 18.43 & 12.99 \\
        & AdamW & 13.96 & \textbf{12.92} & 43.45 & 32.17 & \textbf{39.25} & 28.35 \\
        & Muon  & \textbf{16.25} & 8.33 & 43.60 & 32.24 & 38.06 & 27.70 \\
        \rowcolor{ourcolor}
        & ISO-AdamW & 16.04 & 11.04 & \textbf{44.50} & \textbf{33.11} & 39.01 & \textbf{28.74} \\

        \midrule

        \multirow{4}{*}{Qwen3-4B-Base}
        & Base  & 6.88 & 5.00 & 25.45 & 20.97 & 22.43 & 16.15 \\
        & AdamW & 25.21 & 20.42 & 63.55 & 45.40 & 53.87 & 41.69 \\
        & Muon  & 25.42 & 19.58 & 63.63 & \textbf{46.51} & \textbf{56.02} & 42.23 \\
        \rowcolor{ourcolor}
        & ISO-AdamW & \textbf{27.29} & \textbf{23.13} & \textbf{65.74} & 45.15 & 55.99 & \textbf{43.46} \\

        \bottomrule
    \end{tabular}
    }
    \vspace{-10pt}
\end{table}

\begin{figure}[t]
    \centering
    \begin{subfigure}{0.48\textwidth}
        \centering
        \includegraphics[width=\linewidth]{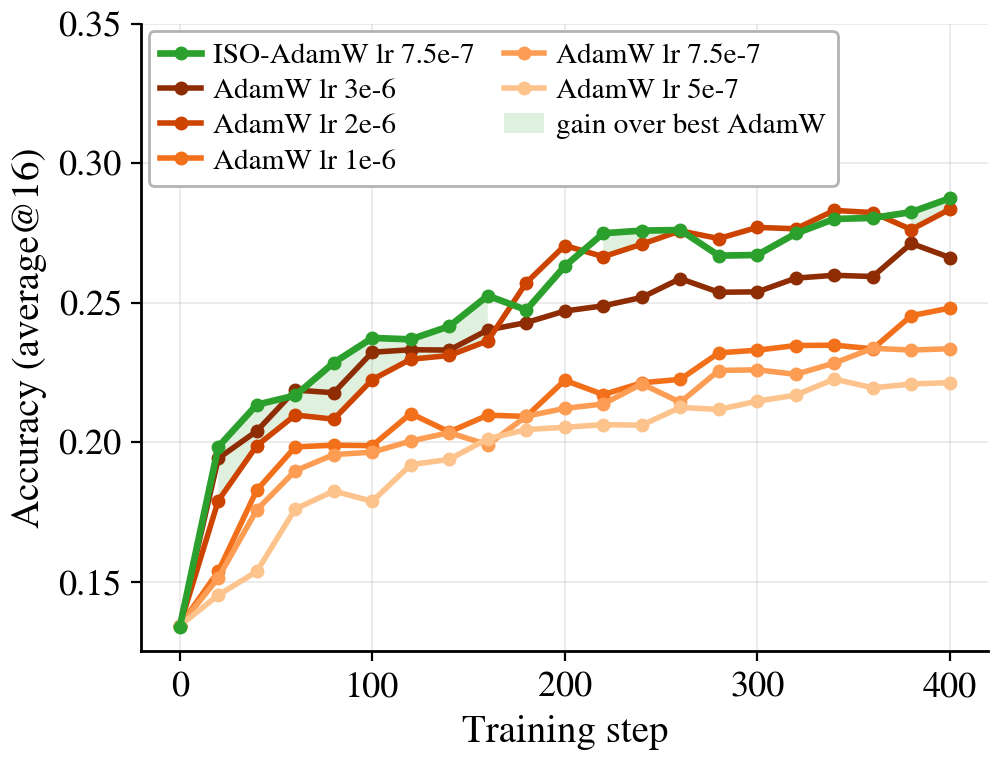}
        \label{fig:math_1.7b_adam}
    \end{subfigure}
    \hfill
    \begin{subfigure}{0.48\textwidth}
        \centering
        \includegraphics[width=\linewidth]{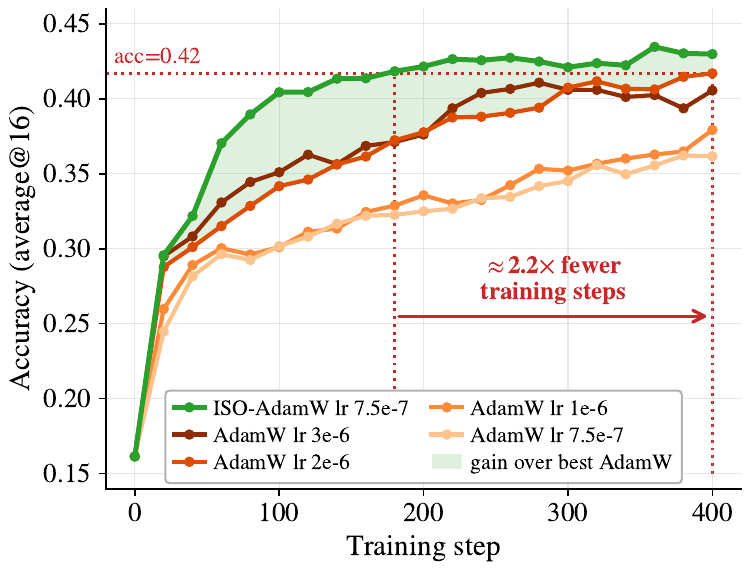}
        \label{fig:math_4b_adam}
    \end{subfigure}
    \vspace{-2em}
\caption{
\small
\textbf{Aggregate math accuracy during RLVR training.}
ISO-AdamW uses a single learning rate of $7.5\times10^{-7}$, whereas the
weight-space AdamW baselines are tuned over the sweep described in the text.
\textbf{Left} (\texttt{Qwen3-1.7B-Base}): ISO-AdamW attains the highest final
accuracy. \textbf{Right} (\texttt{Qwen3-4B-Base}): ISO-AdamW matches the final
aggregate accuracy of the strongest AdamW run while reducing the number of
training steps by a factor of approximately $2.2$, and continues to improve
thereafter.
}
\label{fig:math_adam_curves}
\end{figure}

\paragraph{Setup and evaluation.}
We evaluate ISO-Optimizer by applying conventional base optimizers in
fixed-spectrum frame coordinates rather than directly in weight space.
All runs use \textsc{Verl}~\cite{sheng2025hybridflow}. For mathematical
reasoning, we train
\texttt{Qwen3-1.7B-Base} and
\texttt{Qwen3-4B-Base}~\cite{qwen3technicalreport}
on \textsc{DeepMath}-103K~\cite{deepmath} for $400$ training steps with a
global batch size of $256$ and a mini-batch size of $256$.
Because the actor batch and PPO mini-batch have the same size and we use one
PPO epoch, \textbf{each actor update corresponds to a single training step}.
The 1.7B and 4B runs use $16$ and $12$
rollouts per prompt, respectively, together with online filtering of
all-correct and all-incorrect prompt groups.
ISO-AdamW uses a single learning rate of
$7.5\times10^{-7}$ for $(U,V)$.\footnote{On
\texttt{Qwen3-1.7B-Base}, we compare
$7.5\times10^{-7}$ with $1\times10^{-6}$ and use the better-performing
setting throughout the math experiments.}
For the weight-space baselines, we sweep AdamW learning rates over
$\{5\times10^{-7},\,7.5\times10^{-7},\,1\times10^{-6},
      \,2\times10^{-6},\,3\times10^{-6}\}$
and Muon learning rates over
$\{2.5\times10^{-5},\,5\times10^{-5},
      \,7.5\times10^{-5},\,1\times10^{-4}\}.$

We evaluate AIME 2024, AIME 2025, AMC 2023, Minerva, and OlympiadBench
using $16$ samples
\begin{wraptable}{r}{0.4\textwidth}
    \centering
    \vspace{-0.8em}
    \caption{
    \small \textbf{Coding results on DS-1.5B.}
    We report average@8 LiveCodeBench accuracy after 220 training steps.
    }
    \label{tab:code_main_results}
    \resizebox{0.30\textwidth}{!}{
    \begin{tabular}{l|cc|c}
        \toprule
        Method & LCB v5 & LCB v6 & Avg. \\
        \midrule
        DS-1.5B & 17.43 & 18.41 & 17.92 \\
        AdamW   & 24.01 & 26.24 & 25.13 \\
        \rowcolor{ourcolor}
        ISO-AdamW     & \textbf{25.49} & \textbf{26.91} & \textbf{26.20} \\
        \bottomrule
    \end{tabular}
    }
    \vspace{-2em}
\end{wraptable}
per problem. Evaluations are performed every $20$ training
steps ($10$ training steps on 8B) under
identical decoding settings.
Full training and evaluation
details are provided in Appendix~\ref{app:coding_details}.

\paragraph{Math results.}
Table~\ref{tab:math_main_results} shows that ISO-AdamW achieves the strongest
aggregate score at both model scales. On
\texttt{Qwen3-1.7B-Base}, it reaches $28.74$, compared with $28.35$ for the
best AdamW run and $27.70$ for Muon. On
\texttt{Qwen3-4B-Base}, it reaches $43.46$, compared with $41.69$ for AdamW
and $42.23$ for Muon.

Figure~\ref{fig:math_adam_curves} shows that these gains emerge early rather
than only at the selected endpoint. At 1.7B,
\begin{wrapfigure}{r}{0.4\textwidth}
  \vspace{-0.1em}
  \centering
  \includegraphics[width=\linewidth]{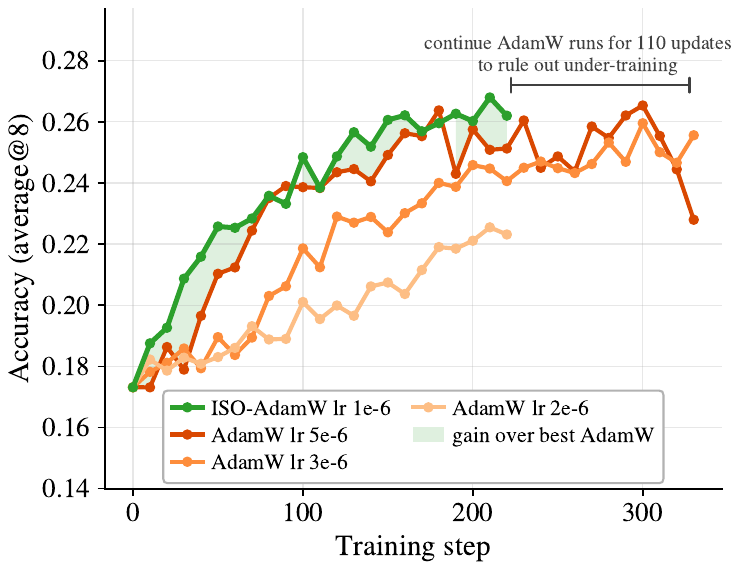}
  \vspace{-2em}
\caption{
\small
\textbf{Coding accuracy on DS-1.5B.}
ISO-AdamW leads the tuned AdamW baselines on LiveCodeBench.
}
\vspace{-15pt}
\label{fig:curve_coding}
\end{wrapfigure}
ISO-AdamW attains the highest
final aggregate accuracy. At 4B, it reaches the strongest AdamW run's final
accuracy with approximately $2.2\times$ fewer training
steps and continues to
improve thereafter. These gains are obtained without enlarging the feasible
weight-space model class: every ISO iterate remains in the constrained
fixed-spectrum family $\mathcal F(W_0)$.

\paragraph{Coding results.}
Beyond math reasoning, we train
\texttt{DS-1.5B} on \textsc{ArcherCodeR} using a DAPO-style recipe and
evaluate on LiveCodeBench v5 and v6. The ISO-AdamW run is limited to 220 training
steps, by which point the validation curve has plateaued. Further training
would require repeated passes over the relatively small coding set and may
exacerbate overfitting. Table~\ref{tab:code_main_results} shows that
ISO-AdamW improves over AdamW on
both LiveCodeBench splits. Figure~\ref{fig:curve_coding} shows the corresponding
training curves: ISO-AdamW outperforms the
\begin{wrapfigure}{r}{0.4\textwidth}
  \vspace{-0.5em}
  \centering
  \includegraphics[width=\linewidth]{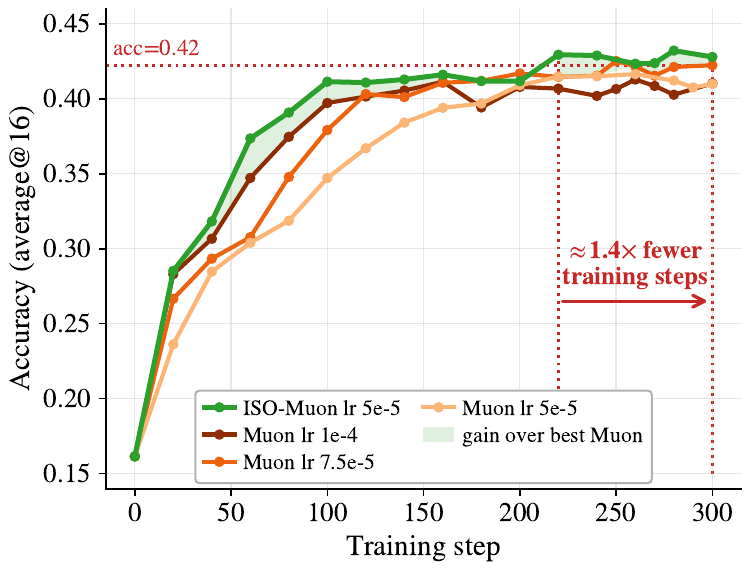}
  \vspace{-1.5em}
\caption{
\small
\textbf{ISO-Muon versus Muon on \texttt{Qwen3-4B-Base}.}
ISO-Muon reaches the strongest Muon baseline's final accuracy with
220 rather than 300 training steps and finishes with higher accuracy.
}
\vspace{-1em}
\label{fig:math_4b_muon}
\end{wrapfigure}
strongest AdamW setting
($5\times10^{-6}$) at most evaluation
checkpoints. To determine whether longer
training closes the gap, we extend the two strongest AdamW runs to 330 steps.
The $5\times10^{-6}$ run peaks at $0.265$ and then declines, while the
$3\times10^{-6}$ run ends at $0.256$, a level that ISO-AdamW reaches by step
130. Neither extended baseline matches ISO-AdamW's peak accuracy of $0.268$
within its 220-step budget. Full details are provided in
Appendix~\ref{app:coding_details}.

\paragraph{ISO-Muon variant.}
Because ISO is agnostic to the base optimizer, we also instantiate ISO-Muon,
which applies the Muon update in fixed-spectrum Stiefel coordinates.
Figure~\ref{fig:math_4b_muon} compares ISO-Muon, using a learning rate of
$5\times10^{-5}$, with a Muon sweep over
$\{5\times10^{-5},\,7.5\times10^{-5},\,10^{-4}\}$ on
\texttt{Qwen3-4B-Base}. We train all runs for 300 training steps. Muon with a learning rate of $2.5\times10^{-5}$ performs strictly worse and is omitted from
the figure for clarity. The qualitative pattern in
Figure~\ref{fig:math_adam_curves} persists: ISO-Muon matches the final accuracy
of the strongest Muon run ($0.422$ at step 300) by step 220 and attains a
higher final accuracy ($0.428$ versus $0.422$). We omit ISO-Muon from
Table~\ref{tab:math_main_results} because it uses the shorter 300-step budget.

\begin{figure}[!t]
  \centering
  \includegraphics[width=0.58\linewidth]{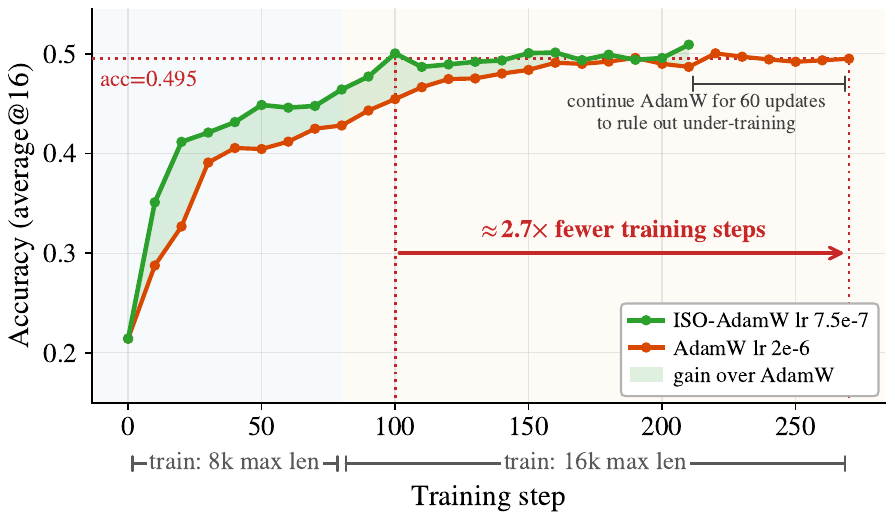}
\caption{
\small
\textbf{Math accuracy on \texttt{Qwen3-8B-Base}.}
Shading marks the 8K and 16K max response-length phases in training.
ISO-AdamW remains above AdamW and reaches its final accuracy earlier.
}
\label{fig:math_8b}
\end{figure}

\paragraph{Scaling to larger models.}
Beyond the 1.5B--4B models considered above, we test whether ISO's benefits
extend to a larger scale. We train
\texttt{Qwen3-8B-Base} on \textsc{DeepMath}-103K~\cite{deepmath} using a
global batch size of 256. We compare ISO-AdamW, with a learning rate of
$7.5\times10^{-7}$, against weight-space AdamW using
$2\times10^{-6}$, the strongest AdamW learning rate identified in the
\begin{wrapfigure}{r}{0.42\textwidth}
  \vspace{-1.5em}
  \centering
  \includegraphics[width=\linewidth]{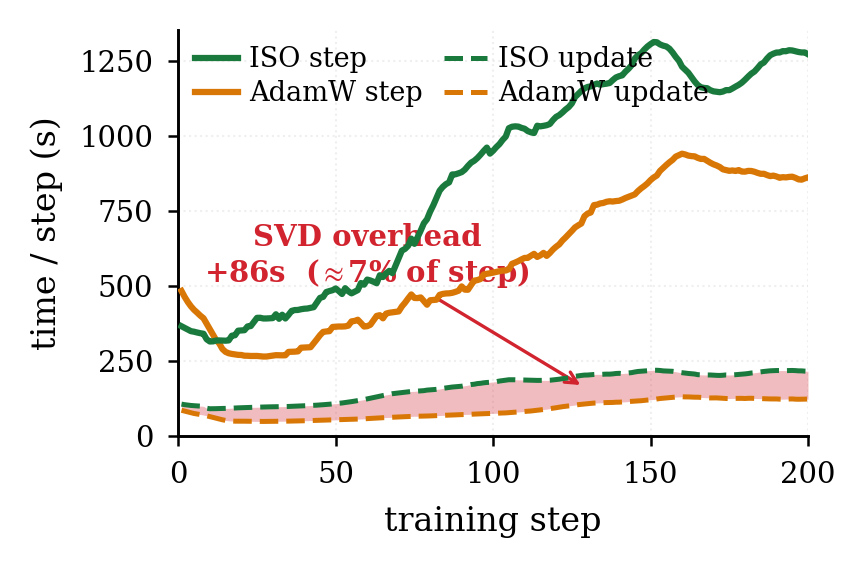}
  \vspace{-2em}
\caption{
\small
\textbf{Optimizer-update and end-to-end training-step time for ISO and AdamW on
\texttt{Qwen3-4B-Base}.}\vspace{-1em}
}
  \label{fig:iso_overhead}
\end{wrapfigure}
smaller-model sweeps. Neither method is further tuned at 8B. Training begins
with an 8K maximum response length. At this scale, ISO-AdamW produces
noticeably longer responses than AdamW, so we increase the cap to 16K for both
runs after step 80 to reduce truncation. Evaluation uses a 16K cap throughout.
As shown in Figure~\ref{fig:math_8b}, ISO-AdamW reaches $0.509$ after 210
training steps, whereas AdamW reaches $0.487$ under the same training-step
budget. To test whether AdamW is simply undertrained, we continue it for 60
additional training steps. Its accuracy improves only to $0.495$ and then
plateaus, whereas ISO-AdamW reaches the same level by step 100. This corresponds
to a reduction in the training-step count by a factor of approximately $2.7$. The
persistent gap across evaluation checkpoints, obtained without additional
tuning at 8B, suggests that the benefits of fixed-spectrum coordinates extend
to the larger model scale.

\paragraph{Retraction overhead.}
ISO introduces additional optimizer-update cost through the SVD-based polar
retraction. Figure~\ref{fig:iso_overhead} profiles this cost on a
representative \texttt{Qwen3-4B-Base} RLVR run with an 8K context window.
Relative to AdamW, ISO increases the optimizer-update time by approximately
$86$ seconds per step, but this increase accounts for only about $7\%$ of the
end-to-end RL step time in this setting. The end-to-end runtime remains
dominated by other parts, e.g., rollout generation, rather than by the optimizer update. The
difference in total step time also grows because ISO produces longer
responses, thereby increasing rollout latency. This cost profile differs from
that of pre-training, where optimizer computation lies directly on the
critical path of each step. RLVR includes a substantially longer generation
phase, providing additional opportunities to potentially amortize the overhead or
overlap actor-side computation with rollout generation, especially in modern
asynchronous RL systems.
\section{Related Work}
\label{sec:related_works}

\paragraph{RLVR dynamics and optimization.}
Most advances in RLVR have focused on data and environments,
learning
objectives, or systems infrastructure.
By contrast, the optimization layer that translates reward feedback into
parameter-space motion remains comparatively underexplored and is typically
inherited from pre-training, including the choice of optimizer and
parameterization.
Recent analyses have begun to distinguish RLVR from pre-training and
supervised fine-tuning at both the policy and parameter levels.
At the policy level, RLVR has been characterized through KL-proximal and
conservative policy-improvement views. At the parameter level, its updates
have been reported to be sparse, off-principal, and strongly shared across
runs initialized from the same source
model~\cite{wu2025invisible,shenfeld2025rl,mukherjee2025reinforcement,zhu2025path}.
Building on our prior observation of limited spectral drift in
RLVR~\cite{zhu2025path}, we formalize and test \emph{spectral inheritance}: the
base model's singular-value spectra can remain functionally reusable while the
associated input and output frames adapt.
We then use this principle to guide RLVR optimization algorithm design.

\paragraph{Matrix-geometric optimization and constrained parameterizations.}
A growing line of work treats matrix-valued parameters and updates as
structured objects rather than as flattened Euclidean vectors. These methods
act on different objects and impose different forms of structure.
Muon orthogonalizes matrix-valued momentum updates, while DION develops
scalable distributed approximations to such orthonormalized
updates~\cite{liu2025muon,ahn2025dion}.
Hyperball constrains the Frobenius norms of both weight matrices and optimizer
updates, making the relative angular step size explicit rather than controlling
it indirectly through weight decay~\cite{wen2025fantastic}.
The Modular Manifolds perspective advocates co-designing module-specific
constraints and update norms, including Stiefel-constrained weights and a
manifold version of Muon~\cite{bernstein2025manifolds}.
POET and POET-X preserve weight spectra through two-sided
orthogonal-equivalence reparameterizations, with an emphasis on stable and
efficient LLM training~\cite{qiu2026reparameterized,qiu2026poet}.
Related structure has also been used for parameter-efficient adaptation:
Spectral Adapter modifies a leading spectral subspace, while StelLA learns
orthonormal input and output subspaces for a low-rank
adapter~\cite{zhang2024spectral,li2026stella}.

An independent concurrent work Pion~\cite{shi2026pion} also preserves weight
spectra through left and right orthogonal-equivalence transformations.
Pion is introduced as a spectrum-preserving optimizer for general LLM
training, and further adapts its construction to RLVR, where its motivation
explicitly draws on the spectral observation from our earlier
work~\cite{zhu2025path}.
The overlap with ISO is therefore most direct in online RLVR optimization.
The two approaches nevertheless differ in both construction and scope.
Pion updates each weight matrix directly through a dedicated Lie-algebra
update rule, together with its own scale control, momentum design, and
approximate exponential map.
ISO instead fixes the base spectrum, exposes the associated singular frames
as optimization variables, and applies a chosen base optimizer, including
AdamW or Muon, to those variables, followed by retraction.
More fundamentally, Pion begins from spectrum preservation as an optimizer
design principle for general LLM training, with its construction motivated by
pre-training stability, scale control, and orthogonal-equivalence geometry.
ISO begins from an RLVR-specific empirical and functional characterization:
we identify \emph{spectral inheritance}, challenge raw near-isospectrality
through dimension-aware calibration, and test base-spectrum reuse through
functional interventions and sequential RL stages.
We then promote this evidence to an RLVR-native optimization framework rather
than a single optimizer.
Its online instantiation transforms conventional base optimizers, while its
offline instantiation provides a checkpoint-only method for composing
shared-base RL experts.

\paragraph{From matrix-geometric priors to an RLVR-native post-training stack.}
Taken together, these works demonstrate the value of matrix-geometric
structure in optimization.
Our contribution is not the first use of orthogonality, manifolds, or spectral
constraints.
Rather, ISO follows an evidence-to-design route tailored to RLVR.
We begin by studying unconstrained RLVR and show, through dimension-aware
calibration, that raw near-isospectrality alone is not discriminative.
We then test whether the learned spectral changes are functionally necessary,
whether the base spectrum can remain fixed throughout learning, and how the
remaining checkpoint change is organized in the associated singular frames.
The same learning pattern recurs across sequential RL
stages with distinct objectives.
Together, these analyses establish \emph{spectral inheritance}: RLVR can reuse
the source model's weight spectra while acquiring new behavior through changes
in the associated input and output singular frames.
ISO promotes this empirically and functionally supported regularity from a
descriptive observation to an explicit RLVR inductive bias.

ISO uses this inductive bias to redesign the RLVR post-training stack around a
common fixed-spectrum principle.
Rather than tying the idea to a particular optimizer or to one singular-frame
reparameterization, ISO treats spectral inheritance as a reusable interface
across different stages of post-training.
During online RLVR, it governs how reward-driven updates are represented and
executed under the inherited spectra.
After training, it provides a shared coordinate system for consolidating
independently trained RL experts without additional rollouts or distillation.
ISO-Optimizer and ISO-Merger are therefore not two unrelated algorithms, but
two complementary realizations of the same stack-level design, spanning online
policy learning and offline expert consolidation.
The singular-frame construction studied here is one numerical realization of
this principle rather than the definition of ISO itself.
Alternative enforcement mechanisms and further applications within the same
framework, including geometry-aware adapter initialization, adapter training,
and adapter composition, remain open directions.

\textit{\textbf{In summary, ISO contributes an evidence-to-stack design:
it identifies spectral inheritance from RLVR's own dynamics, validates its
functional relevance, and turns it into a common optimization principle
spanning online policy learning and offline expert consolidation.}}

\paragraph{Geometry-aware model merging.}
Most training-free merging methods combine Euclidean task vectors, with
techniques such as Task Arithmetic, TIES, and TSV-Merge reducing interference
through sign resolution or low-rank
structure~\cite{ta,ties,tsv}.
OrthoMerge instead merges orthogonal fine-tuning transformations in a Lie
algebra and extends to generally fine-tuned models by separating orthogonal
and residual components~\cite{yang2026orthogonal}.
ISO-Merger is specialized to shared-base RL experts: it projects each expert
to the geometry of the shared base spectrum, represents its frame displacement
in a common base-anchored Stiefel tangent space, and performs checkpoint-only composition
without post-merge prompts, rollouts, teacher queries, or optimization.
\section{Conclusion}
\label{sec:conclusion}
We identify \emph{spectral inheritance} as a recurring structure in RLVR:
the base model's weight spectra remain functionally reusable while new behavior
is acquired through changes in the associated singular frames. Restoring the
base spectra after training preserves most acquired performance, and keeping
them fixed throughout training still supports strong reasoning and coding
gains. Among the structural restrictions tested, simpler subspace-remixing and
one-sided transformations leave substantially more of the checkpoint change
unexplained, indicating that both frames must remain adaptable. This pattern
also recurs across sequential RL stages with distinct objectives.

Building on this separation, we introduce Isospectral Optimization (ISO).
ISO-Merger composes shared-base RL experts without post-merge data, rollouts,
gradient updates, or distillation, achieving the strongest aggregate
performance among the compared data-free methods. ISO-Optimizer applies
conventional base optimizers to the frame variables under fixed base spectra,
improving aggregate accuracy and reaching matched accuracy in fewer
training steps, including $2.7\times$ fewer steps on Qwen3-8B-Base.
Together, these results identify the spectral structure learned before RL as a
reusable substrate for post-training: \emph{RLVR can inherit the spectrum and
learn how it acts}.


\bibliography{reference/llm,reference/ref}
\bibliographystyle{unsrt}


\appendix

\newpage
\onecolumn
\appendix

\section*{Appendix Outline}
\addcontentsline{toc}{section}{Appendix Outline}

\begingroup
\small
The appendices are organized as follows.
\begin{itemize}[
    leftmargin=*,
    itemsep=0.5em,
    topsep=0.5em,
    parsep=0pt
]
    \item \textbf{Appendix~\ref{app:index_alignment}: Qualitative
    Index-Alignment Check.}
    Provides rank-index singular-direction sanity checks for the
    spectrum-restoration intervention.

    \item \textbf{Appendix~\ref{app:fixed_spectrum_theory}: Fixed-Spectrum
    Theory and Frame-Adaptability Diagnostics.}
    Establishes the fixed-spectrum distance and nearest representatives,
    derives the dimension-aware calibration, formalizes the reconstruction
    classes and unexplained-update ratios, and reports rank sensitivity.

    \item \textbf{Appendix~\ref{app:iso_first_order}: First-Order Properties
    of the ISO Parameterization.}
    Proves first-order spectrum preservation for feasible frame motion and
    derives the ISO factor gradients.

    \item \textbf{Appendix~\ref{app:14b_sft_rl}: A Same-Base SFT--RLVR Case
    Study.}
    Compares SFT and RLVR checkpoints derived from the same 14B backbone and
    reports an additional spectral-interpolation intervention.

    \item \textbf{Appendix~\ref{app:iso_merger_details}: ISO-Merger Details.}
    Specifies sign canonicalization, tangent projection, mode masking,
    retention coefficients, retraction, parameter scope, and the complete
    merging algorithm.

    \item \textbf{Appendix~\ref{app:merging_details}: Data-Free Merging
    Experimental Details.}
    Describes the experts, baselines, hyperparameters, evaluation protocols,
    and additional distributional results for ISO-Merger.

    \item \textbf{Appendix~\ref{sec:svd_precision_issue}: Numerical Precision of the SVD-Based Retraction.}
    Documents the SVD-based retraction, evaluates the accuracy--runtime
    trade-off of PyTorch SVD configurations, and motivates the FP64 GPU
    implementation.

    \item \textbf{Appendix~\ref{app:training_details}: Online RLVR Training
    Details.}
    Gives the shared setup and the mathematical-reasoning and competitive-coding
    RLVR configurations, optimizer settings, training budgets, and evaluation
    protocols.
\end{itemize}
\endgroup

\clearpage

\section{Qualitative Index-Alignment Check}
\label{app:index_alignment}

\paragraph{Why this check is needed.}
The main text avoids using individual singular-vector identities as formal
evidence, since SVD bases are gauge-dependent under sign flips and rotations
inside near-degenerate singular-value clusters. Nevertheless, because our
spectrum-restoration intervention pairs singular values by rank index, it is useful
to verify whether rank-index tracking is qualitatively stable in the RLVR runs
we analyze.

\paragraph{Rank-index singular-vector rotations.}
For each layer, we compute the principal angle between rank-index paired
singular directions from the base model and the post-training checkpoint, after
standard sign alignment. These plots should be interpreted only as qualitative
sanity checks. The formal frame-adaptability conclusions in
Section~\ref{sec:transport} use projector-based reconstructions and
unexplained-update ratios.

Figures~\ref{fig:rl_rotation} and~\ref{fig:sft_rotation} provide a qualitative
contrast between RLVR and SFT. Under RLVR, rank-index paired singular directions
rotate mildly and smoothly, suggesting that the spectrum-restoration intervention
is not dominated by large index-wise mismatch. Under SFT, many matched
directions become nearly orthogonal, indicating that rank-index tracking is much
less stable. We emphasize that these plots are not used as formal evidence for
singular-subspace motion. The main diagnostic evidence is the gauge-invariant
projector and reconstruction analysis in Section~\ref{sec:transport}.

\begin{figure}[!ht]
\centering
\begin{minipage}[b]{0.49\linewidth}
    \centering
    \includegraphics[width=\linewidth]{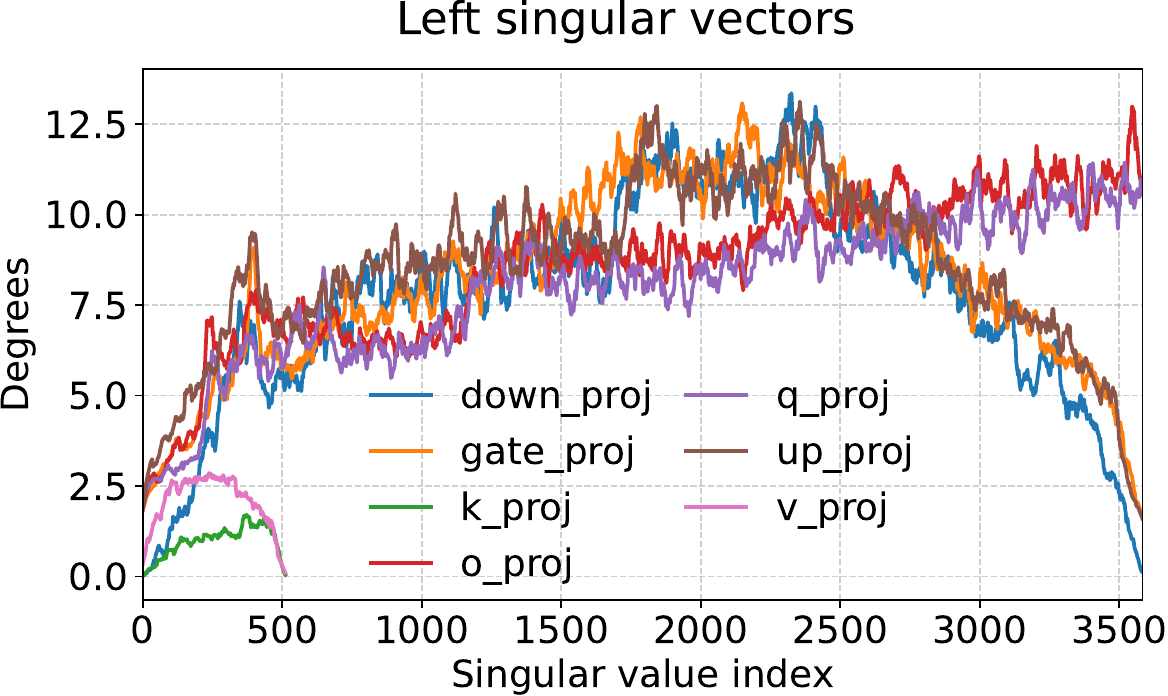}
\end{minipage}\hfill
\begin{minipage}[b]{0.49\linewidth}
    \centering
    \includegraphics[width=\linewidth]{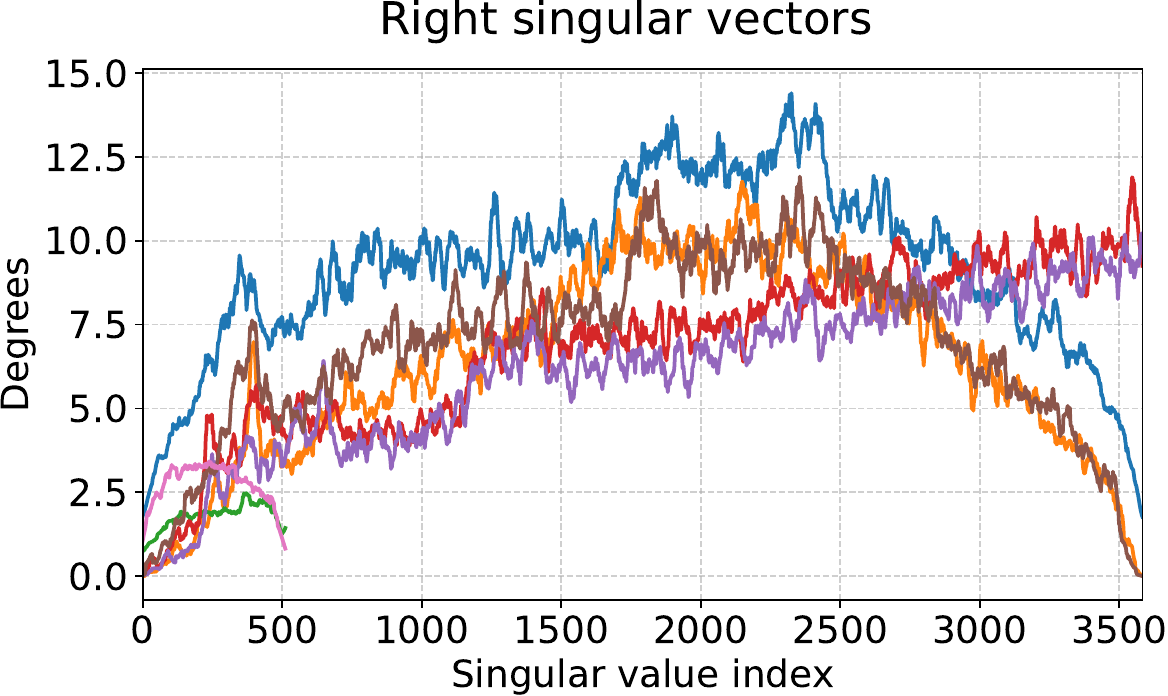}
\end{minipage}
\caption{
\textbf{Qualitative singular-direction rotations under RL post-training.}
Mean principal angles between rank-index paired base and RL-trained singular
directions remain small and smooth across attention and MLP layers. This
suggests that rank-index tracking is qualitatively stable under RLVR in these
runs. These plots are used only as a sanity check. Formal claims use
projector-based diagnostics.
}
\label{fig:rl_rotation}
\vspace{-1em}
\end{figure}

\begin{figure}[!ht]
\centering
\begin{minipage}[b]{0.49\linewidth}
    \centering
    \includegraphics[width=\linewidth]{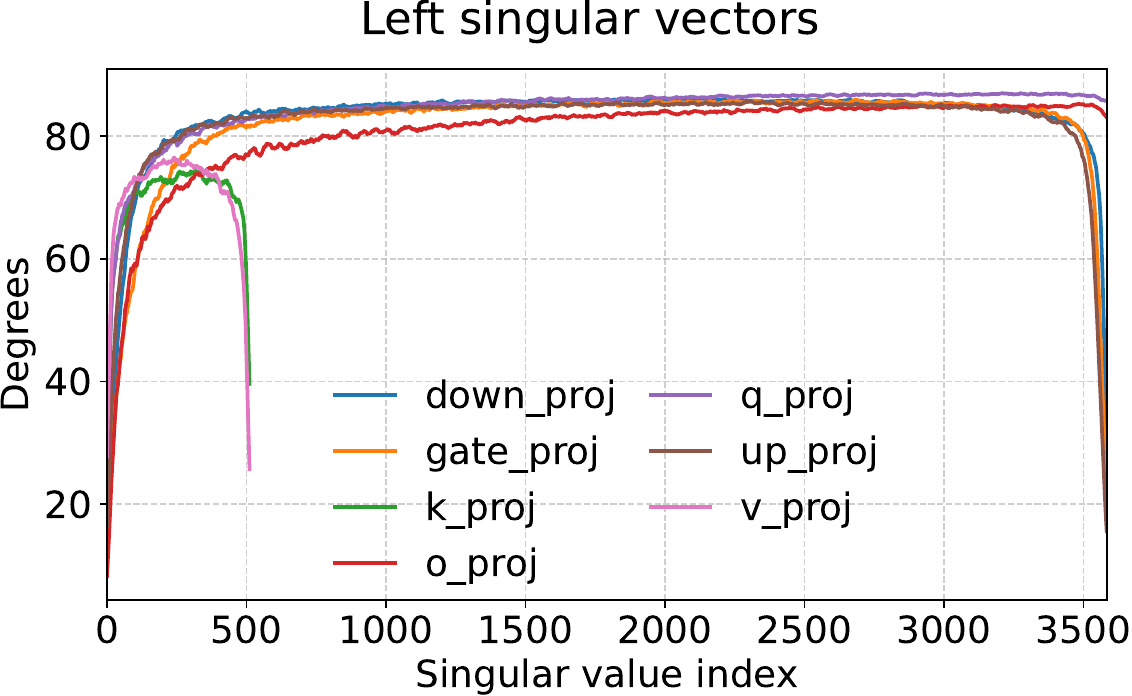}
\end{minipage}\hfill
\begin{minipage}[b]{0.49\linewidth}
    \centering
    \includegraphics[width=\linewidth]{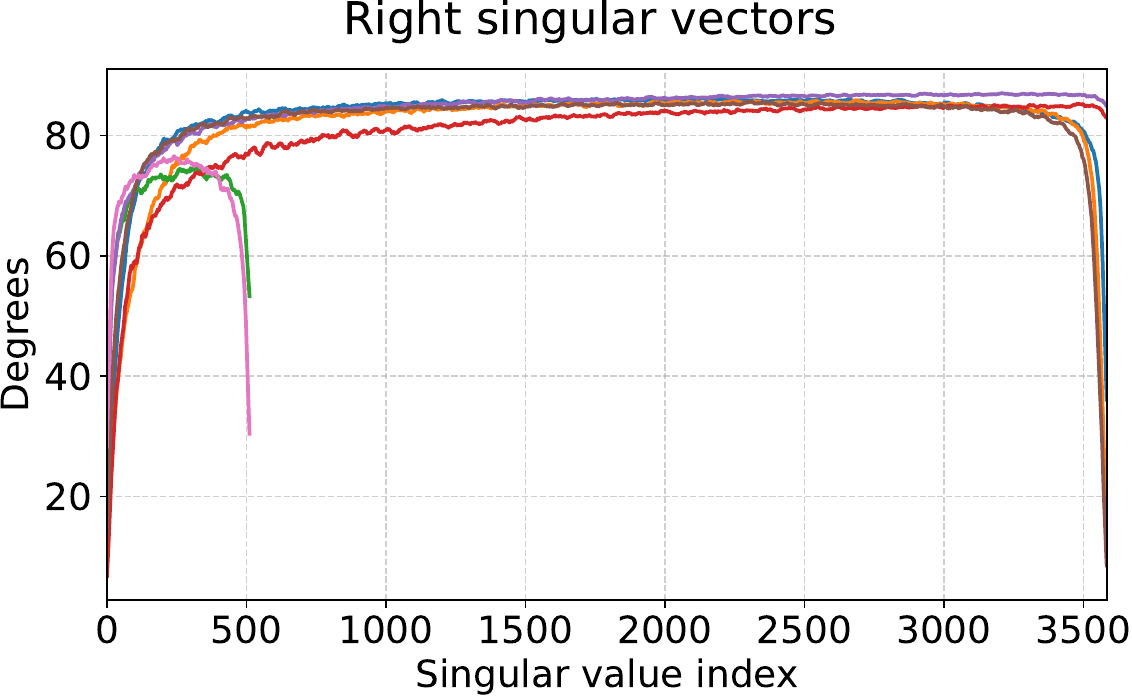}
\end{minipage}
\caption{
\textbf{Qualitative singular-direction rotations under SFT.}
Mean angles between rank-index paired base and SFT singular directions increase
toward near-orthogonality for many ranks. In high-dimensional spaces, unrelated
directions are typically nearly orthogonal, so this indicates that simple
rank-index tracking becomes much less informative under SFT.
}
\label{fig:sft_rotation}
\vspace{-1em}
\end{figure}

\section{Fixed-Spectrum Theory and Frame-Adaptability Diagnostics}
\label{app:fixed_spectrum_theory}

\paragraph{Notation.}
Let
$W\in\mathbb{R}^{d_{\mathrm{out}}\times d_{\mathrm{in}}}$ and
$q=\min\{d_{\mathrm{out}},d_{\mathrm{in}}\}$. We write
\[
    \sigma(W)
    =
    \bigl(\sigma_1(W),\ldots,\sigma_q(W)\bigr)
\]
for all singular values in nonincreasing order, including zeros with
multiplicity. A thin SVD is
\[
    W=U\Sigma V^\top,
    \qquad
    U\in\mathrm{St}(d_{\mathrm{out}},q),
    \quad
    V\in\mathrm{St}(d_{\mathrm{in}},q),
    \quad
    \Sigma=\operatorname{Diag}(\sigma(W)).
\]
Here $\operatorname{Diag}(a)$ constructs a diagonal matrix from a vector,
whereas $\operatorname{diag}(A)$ extracts the diagonal of a matrix.

\subsection{The Fixed-Spectrum Family and Nearest Representatives}
\label{app:fixed_spectrum_distance}

\begin{proposition}[Two-sided characterization of the fixed-spectrum family]
\label{prop:app_orbit_characterization}
For any source matrix $W_0$,
\begin{equation}
\label{eq:app_source_orbit}
\begin{aligned}
    \mathcal F(W_0)
    &:=
    \left\{
        Z\in
        \mathbb{R}^{d_{\mathrm{out}}\times d_{\mathrm{in}}}
        :
        \sigma(Z)=\sigma(W_0)
    \right\}
    \\
    &=
    \left\{
        Q_LW_0Q_R^\top:
        Q_L\in\mathrm O(d_{\mathrm{out}}),
        \;
        Q_R\in\mathrm O(d_{\mathrm{in}})
    \right\}.
\end{aligned}
\end{equation}
\end{proposition}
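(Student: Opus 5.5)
We prove the equality by double inclusion, using only the full SVD and the invariance of singular values under orthogonal multiplication.

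\emph{Inclusion $\supseteq$.} Take $Z=Q_LW_0Q_R^\top$ with $Q_L\in\mathrm O(d_{\mathrm{out}})$ and $Q_R\in\mathrm O(d_{\mathrm{in}})$. Then
\[
    Z^\top Z=Q_RW_0^\top W_0Q_R^\top .
\]
So $Z^\top Z$ is orthogonally similar to $W_0^\top W_0$, and the two have the same eigenvalues with multiplicity. The singular values in $\sigma(\cdot)$ are the square roots of the top $q$ of these eigenvalues, sorted nonincreasingly, with zeros counted. Hence $\sigma(Z)=\sigma(W_0)$ and $Z\in\mathcal F(W_0)$.

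\emph{Inclusion $\subseteq$.} Take $Z$ with $\sigma(Z)=\sigma(W_0)$. We work with full rather than thin SVDs, because thin frames cannot be completed canonically when $d_{\mathrm{out}}\neq d_{\mathrm{in}}$. Write
\[
    W_0=\bar U_0\bar\Sigma\bar V_0^\top,
    \qquad
    Z=\bar U_Z\bar\Sigma\bar V_Z^\top .
\]
Here $\bar U_0,\bar U_Z\in\mathrm O(d_{\mathrm{out}})$ and $\bar V_0,\bar V_Z\in\mathrm O(d_{\mathrm{in}})$. The rectangular matrix $\bar\Sigma\in\mathbb R^{d_{\mathrm{out}}\times d_{\mathrm{in}}}$ carries $\sigma(W_0)$ on its leading diagonal. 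Both factorizations share this middle factor because $\sigma(Z)=\sigma(W_0)$ and both vectors are sorted identically, including zero entries.

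Set $Q_L:=\bar U_Z\bar U_0^\top$ and $Q_R:=\bar V_Z\bar V_0^\top$; each is a product of orthogonal matrices, hence orthogonal. Then
\[
    Q_LW_0Q_R^\top=\bar U_Z\bar U_0^\top\bar U_0\bar\Sigma\bar V_0^\top\bar V_0\bar V_Z^\top=\bar U_Z\bar\Sigma\bar V_Z^\top=Z .
\]
So $Z$ lies in the two-sided orbit of $W_0$.

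The only point needing care is the full SVD. Its existence with a common rectangular $\bar\Sigma$ holds even when $W_0$ is rank-deficient or has repeated singular values. In that case the frames are non-unique, but any valid choice works, since we never need uniqueness.

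For consistency with the thin parameterization in Equation~\ref{eq:fixed_spectrum_parameterization}, note that truncating $\bar U_Z$ and $\bar V_Z$ to their first $q$ columns gives $Z=U\Sigma_0V^\top$. Conversely, any Stiefel pair $(U,V)$ extends to full orthogonal matrices, which recovers the two-sided form.
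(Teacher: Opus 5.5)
Your proof is correct and follows essentially the same route as the paper. One inclusion is invariance of singular values under orthogonal multiplication; for the other, you complete the singular frames of $W_0$ and $Z$ to full orthogonal matrices (your full SVDs) and take $Q_L=\bar U_Z\bar U_0^\top$, $Q_R=\bar V_Z\bar V_0^\top$. Your $Z^\top Z$ check and your remark that any completion works fill in details the paper's two-line proof leaves implicit (no canonical completion is needed in either version).
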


\begin{proof}
Orthogonal multiplication preserves singular values, proving one inclusion.
For the reverse inclusion, write
\[
    W_0=U_0\Sigma_0V_0^\top,
    \qquad
    Z=U_Z\Sigma_0V_Z^\top.
\]
Extend $U_0,U_Z$ and $V_0,V_Z$ to complete orthogonal bases and choose
$Q_L,Q_R$ mapping the source bases to the target bases. Then
$Q_LW_0Q_R^\top=Z$.
\end{proof}

\begin{proposition}[Exact distance to the fixed-spectrum family]
\label{prop:app_isospectral_projection}
For
$W,W_0\in\mathbb{R}^{d_{\mathrm{out}}\times d_{\mathrm{in}}}$,
\begin{equation}
\label{eq:app_exact_orbit_distance}
    \operatorname{dist}_F
    \bigl(
        W,\mathcal F(W_0)
    \bigr)
    =
    \|\sigma(W)-\sigma(W_0)\|_2.
\end{equation}
If
$W=U\Sigma V^\top$ and
$\Sigma_0=\operatorname{Diag}(\sigma(W_0))$, then
\begin{equation}
\label{eq:app_nearest_orbit_point}
    U\Sigma_0V^\top
    \in
    \operatorname*{arg\,min}_{Z\in\mathcal F(W_0)}
    \|W-Z\|_F.
\end{equation}
\end{proposition}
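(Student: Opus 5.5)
The plan is to prove a lower bound valid for every element of $\mathcal F(W_0)$, then exhibit a member that attains it.

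\textbf{Lower bound.} Take any $Z\in\mathcal F(W_0)$. By definition $\sigma(Z)=\sigma(W_0)$. Mirsky's inequality, the Frobenius-norm version of the von Neumann trace inequality, gives
\[
\|W-Z\|_F\;\ge\;\|\sigma(W)-\sigma(Z)\|_2\;=\;\|\sigma(W)-\sigma(W_0)\|_2 .
\]
For completeness I would derive Mirsky directly. Expand the square:
\[
\|W-Z\|_F^2=\|W\|_F^2+\|Z\|_F^2-2\langle W,Z\rangle_F .
\]
Use $\|W\|_F^2=\sum_k\sigma_k(W)^2$, and the same identity for $Z$. Then apply von Neumann's trace inequality,
\[
\langle W,Z\rangle_F=\operatorname{tr}(W^\top Z)\le\sum_k\sigma_k(W)\sigma_k(Z),
\]
with both spectra sorted in nonincreasing order. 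Substituting, the right-hand side becomes exactly $\sum_k(\sigma_k(W)-\sigma_k(Z))^2$. Since $\sigma(W_0)$ is already sorted and matches $\sigma(Z)$, this is $\|\sigma(W)-\sigma(W_0)\|_2^2$.

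\textbf{Attainment.} Let $W=U\Sigma V^\top$ be the thin SVD in the statement, and set $Z^\star=U\Sigma_0V^\top$.
\begin{itemize}
\item $Z^\star$ lies in $\mathcal F(W_0)$. Its factors $U,V$ have orthonormal columns and $\Sigma_0$ is diagonal, nonnegative and nonincreasing, so $U\Sigma_0V^\top$ is itself an SVD and $\sigma(Z^\star)=\sigma(W_0)$. This also matches the parameterization \eqref{eq:fixed_spectrum_parameterization}.
\item $Z^\star$ achieves the bound. By unitary invariance of the Frobenius norm, $\|W-Z^\star\|_F=\|U(\Sigma-\Sigma_0)V^\top\|_F=\|\Sigma-\Sigma_0\|_F=\|\sigma(W)-\sigma(W_0)\|_2$.
\end{itemize}
The lower bound is therefore attained. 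This gives both the distance formula \eqref{eq:app_exact_orbit_distance} and the membership claim \eqref{eq:app_nearest_orbit_point}. The infimum is a minimum, and the arg min may contain other points.

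\textbf{Main obstacle.} The only nontrivial step is the von Neumann trace inequality. I would cite it as standard. If a self-contained argument were wanted, I would proceed in three steps:
\begin{itemize}
\item Write $\operatorname{tr}(W^\top Z)=\sum_{k,l}\sigma_k(W)\sigma_l(Z)\,(u_k^\top u'_l)(v_k^\top v'_l)$, where $u_k,v_k$ and $u'_l,v'_l$ are the singular vectors of $W$ and $Z$.
\item Bound the coefficients $|(u_k^\top u'_l)(v_k^\top v'_l)|$ by a doubly substochastic matrix, using Cauchy--Schwarz together with the orthonormality of the columns.
\item Conclude with Birkhoff's theorem and the rearrangement inequality, which show the sorted pairing maximizes the sum.
\end{itemize}

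\textbf{Repeated singular values.} These need a brief remark. The claim holds for any choice of thin SVD of $W$, because the attainment argument uses only $\Sigma$ and never the particular singular vectors. Repeated singular values only make the nearest point non-unique, which is why the statement says "$\in\arg\min$".
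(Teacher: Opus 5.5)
Your proof is correct and follows essentially the same route as the paper. Both expand $\|W-Z\|_F^2$ using $\sigma(Z)=\sigma(W_0)$, bound the cross term $\langle W,Z\rangle_F$ with von Neumann's trace inequality, and show that $U\Sigma_0V^\top$ attains the bound. You verify attainment by computing $\|U(\Sigma-\Sigma_0)V^\top\|_F=\|\Sigma-\Sigma_0\|_F$ rather than by evaluating the inner product, which is an equivalent check.
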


\begin{proof}
Every $Z\in\mathcal F(W_0)$ has Frobenius norm $\|W_0\|_F$, so
\[
    \|W-Z\|_F^2
    =
    \|W\|_F^2+\|W_0\|_F^2-2\langle W,Z\rangle_F.
\]
By von Neumann's trace inequality,
\[
    \langle W,Z\rangle_F
    \leq
    \sum_{k=1}^q
    \sigma_k(W)\sigma_k(W_0).
\]
The bound is attained by $Z=U\Sigma_0V^\top$. Substitution gives
\[
    \min_{Z\in\mathcal F(W_0)}
    \|W-Z\|_F^2
    =
    \sum_{k=1}^q
    \bigl(
        \sigma_k(W)-\sigma_k(W_0)
    \bigr)^2.
\]
\end{proof}

\begin{remark}[Gauge and nonuniqueness of a rebased representative]
\label{rem:rebasing_gauge}
The distance to the fixed-spectrum family and the set of nearest points are
basis-independent, but a particular representative $U\Sigma_0V^\top$ need not
be unique.

Suppose a block $\mathcal C$ of the target spectrum is exactly repeated.
Replacing
\[
    U_{\mathcal C}\leftarrow U_{\mathcal C}R,
    \qquad
    V_{\mathcal C}\leftarrow V_{\mathcal C}R,
    \qquad
    R\in\mathrm O(|\mathcal C|)
\]
leaves $W$ unchanged but produces the rebased block
\[
    U_{\mathcal C}
    R\Sigma_{0,\mathcal C}R^\top
    V_{\mathcal C}^\top.
\]
Every such representative is a nearest point in $\mathcal F(W_0)$. It is
identical for all $R$ when
$\Sigma_{0,\mathcal C}$ is proportional to the identity. More generally, with
$\bar\sigma_{\mathcal C}$ denoting the block mean,
\begin{equation}
\label{eq:gauge_sensitivity_bound}
    \left\|
        R\Sigma_{0,\mathcal C}R^\top-\Sigma_{0,\mathcal C}
    \right\|_F
    \leq
    2
    \left\|
        \Sigma_{0,\mathcal C}
        -
        \bar\sigma_{\mathcal C}I
    \right\|_F.
\end{equation}
Thus the ambiguity is small when $\Sigma_{0,\mathcal C}$ is nearly flat.
Near-degenerate but distinct singular values do not create exact gauge
freedom, but can make the numerical singular vectors ill-conditioned. Our
rebasing experiment evaluates the representative selected by the numerical
SVD. Appendix~\ref{app:index_alignment} reports a qualitative stability check.
\end{remark}

\begin{corollary}[Finite-horizon spectral drift]
\label{cor:finite_horizon_spectral_drift}
Let $W_0,\ldots,W_T$ be a matrix trajectory and
$E_t=W_{t+1}-W_t$. Then
\begin{equation}
    \|\sigma(W_t)-\sigma(W_0)\|_2
    =
    \operatorname{dist}_F
    \bigl(
        W_t,\mathcal F(W_0)
    \bigr)
    \leq
    \|W_t-W_0\|_F
    \leq
    \sum_{\tau=0}^{t-1}\|E_\tau\|_F.
\end{equation}
\end{corollary}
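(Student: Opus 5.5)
The corollary is a chain of one equality and two inequalities, so I will establish each link in turn.

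\emph{The equality.} Apply Proposition~\ref{prop:app_isospectral_projection} with $W=W_t$ and base $W_0$. This gives $\|\sigma(W_t)-\sigma(W_0)\|_2=\operatorname{dist}_F(W_t,\mathcal F(W_0))$ directly. No extra assumption is needed here, because that proposition holds for arbitrary matrices of the same shape. It allows repeated or zero singular values, since $\sigma(\cdot)$ lists them with multiplicity.

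\emph{The first inequality.} I only need to show that $W_0$ itself lies in $\mathcal F(W_0)$. This holds trivially because $\sigma(W_0)=\sigma(W_0)$; equivalently, take $Q_L=I$ and $Q_R=I$ in Proposition~\ref{prop:app_orbit_characterization}. A distance to a set is at most the distance to any one of its members, so $\operatorname{dist}_F(W_t,\mathcal F(W_0))\le\|W_t-W_0\|_F$. Note that the infimum in $\operatorname{dist}_F$ is attained, by \eqref{eq:app_nearest_orbit_point}, but the argument does not need this.

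\emph{The second inequality.} I telescope $W_t-W_0=\sum_{\tau=0}^{t-1}(W_{\tau+1}-W_\tau)=\sum_{\tau=0}^{t-1}E_\tau$. The triangle inequality for the Frobenius norm then gives $\|W_t-W_0\|_F\le\sum_{\tau=0}^{t-1}\|E_\tau\|_F$. For $t=0$ both sides are zero and the sum is empty, so that case needs no separate treatment.

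None of these steps is a genuine obstacle: all the content sits in the already-established Proposition~\ref{prop:app_isospectral_projection}. The only care needed is in the equality. It must be read with the base spectrum fixed at $\sigma(W_0)$ for every $t$, not re-anchored at $W_{t-1}$, so the family $\mathcal F(W_0)$ has to be the same throughout the chain.
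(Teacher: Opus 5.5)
Your proof is correct and matches the paper's intended argument. The paper states the corollary without a separate proof, as an immediate consequence of Proposition~\ref{prop:app_isospectral_projection}. Under that reading, the first inequality follows from $W_0\in\mathcal F(W_0)$, and the second follows from telescoping $W_t-W_0=\sum_{\tau=0}^{t-1}E_\tau$ and applying the triangle inequality.
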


\subsection{Spectral-Distance Metrics and Dimension-Aware Calibration}
\label{app:dimension_calibration}

For matrix $\ell$, let
$\Delta W^{(\ell)}=W_1^{(\ell)}-W_0^{(\ell)}$. We use
\begin{equation}
\begin{aligned}
\delta_{\Sigma}^{(\ell)}
&=
\frac{
    \operatorname{dist}_F
    \left(
        W_1^{(\ell)},
        \mathcal F(W_0^{(\ell)})
    \right)
}{
    \|W_0^{(\ell)}\|_F
},
\\
\rho_{\Sigma}^{(\ell)}
&=
\frac{
    \operatorname{dist}_F
    \left(
        W_1^{(\ell)},
        \mathcal F(W_0^{(\ell)})
    \right)
}{
    \|\Delta W^{(\ell)}\|_F
}.
\end{aligned}
\end{equation}

\begin{proposition}[First-order spectrum-changing subspace and isotropic calibration]
\label{prop:spectrum_changing_calibration}
Assume $W_0$ is full rank with simple singular values and write
$W_0=U_0\Sigma_0V_0^\top$. The first-order spectrum-changing subspace at
$W_0$, equivalently the normal space of its fixed-spectrum family, is
\begin{equation}
\label{eq:first_order_spectrum_changing_subspace}
    N_{W_0}\mathcal F(W_0)
    =
    \left\{
        U_0\operatorname{Diag}(a)V_0^\top:
        a\in\mathbb R^q
    \right\}.
\end{equation}
The orthogonal projection of a perturbation $H$ onto this spectrum-changing
subspace is
\begin{equation}
\label{eq:spectrum_changing_projection}
    \Pi_{\mathrm{spec}}(H)
    =
    U_0
    \operatorname{Diag}
    \left(
        \operatorname{diag}(U_0^\top H V_0)
    \right)
    V_0^\top.
\end{equation}
If $\operatorname{vec}(H)/\|H\|_F$ is isotropic, then
\begin{equation}
\label{eq:isotropic_dimension_fraction}
    \mathbb E
    \left[
        \frac{
            \|\Pi_{\mathrm{spec}}(H)\|_F^2
        }{
            \|H\|_F^2
        }
    \right]
    =
    \frac{
        q
    }{
        d_{\mathrm{out}}d_{\mathrm{in}}
    }.
\end{equation}
Consequently,
\begin{equation}
\label{eq:app_kappa_spec}
    \kappa_{\mathrm{spec}}
    =
    \frac{
        d_{\mathrm{out}}d_{\mathrm{in}}
    }{
        q
    }
    \frac{
        \|
            \operatorname{diag}
            (U_0^\top H V_0)
        \|_2^2
    }{
        \|H\|_F^2
    }
\end{equation}
satisfies $\mathbb E[\kappa_{\mathrm{spec}}]=1$. Thus, one is the isotropic
dimensional reference under this normalization.
\end{proposition}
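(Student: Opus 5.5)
The plan is to identify the tangent space of $\mathcal F(W_0)$ at $W_0$ explicitly and take its orthogonal complement, then compute the projection and the isotropic expectation directly. Since $W_0$ is full rank with simple singular values, I will work with the thin SVD $W_0=U_0\Sigma_0V_0^\top$. When $d_{\mathrm{out}}\neq d_{\mathrm{in}}$, I complete the factor on the larger side to a full orthogonal basis, so that every $H$ can be written in coordinates $\widetilde H=[U_0\;U_\perp]^\top H[V_0\;V_\perp]$. By Proposition~\ref{prop:app_orbit_characterization}, $\mathcal F(W_0)$ is the orbit $\{Q_LW_0Q_R^\top\}$. Differentiating curves $e^{tA}W_0e^{-tB}$ with $A,B$ skew-symmetric then shows that the tangent space contains $\{AW_0-W_0B\}$. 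In the rotated coordinates these are the matrices $\widetilde A\,\widetilde\Sigma_0-\widetilde\Sigma_0\widetilde B$, where $\widetilde\Sigma_0$ is $\Sigma_0$ padded with zero blocks.

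The key step is to show that this tangent space is exactly the set of $H$ with $\operatorname{diag}(U_0^\top HV_0)=0$. One inclusion follows from Proposition~\ref{prop:iso_tangent}: the diagonal of $A\Sigma_0-\Sigma_0B$ with $A,B$ skew is zero. For the converse, I count entries in the rotated coordinates.

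\textbf{Off-diagonal entries in the $q\times q$ block.} For each pair $k\ne l$, the entries $(k,l)$ and $(l,k)$ of $\widetilde A\Sigma_0-\Sigma_0\widetilde B$ equal $a_{kl}\sigma_l-\sigma_kb_{kl}$ and $-a_{kl}\sigma_k+\sigma_lb_{kl}$. The determinant of this $2\times2$ linear map in $(a_{kl},b_{kl})$ is $\sigma_l^2-\sigma_k^2\neq0$ by simplicity, so any pair of target values can be matched.

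\textbf{Off-block entries.} When $d_{\mathrm{out}}>q$, the rows indexed by $U_\perp$ receive $a_{\perp k}\sigma_k$, which is surjective because $\sigma_k>0$ by full rank. The analogous argument handles columns when $d_{\mathrm{in}}>q$.

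This shows the tangent space has codimension exactly $q$, and its orthogonal complement (the normal space) is $\{U_0\operatorname{Diag}(a)V_0^\top\}$. The matrices $U_0e_ke_k^\top V_0^\top=u_kv_k^\top$ are orthonormal in the Frobenius inner product. Consequently the projection is $\sum_k\langle H,u_kv_k^\top\rangle u_kv_k^\top$, which gives \eqref{eq:spectrum_changing_projection}, and $\|\Pi_{\mathrm{spec}}(H)\|_F^2=\|\operatorname{diag}(U_0^\top HV_0)\|_2^2$.

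For the isotropic expectation, let $x=\operatorname{vec}(H)/\|H\|_F$ be uniform on the sphere in $\mathbb R^{D}$ with $D=d_{\mathrm{out}}d_{\mathrm{in}}$. For any orthogonal projector $P$ of rank $q$, we have $\mathbb E[x^\top Px]=\operatorname{tr}(P\,\mathbb E[xx^\top])=\operatorname{tr}(P)/D=q/D$, since isotropy gives $\mathbb E[xx^\top]=I/D$. Multiplying by $D/q$ yields $\mathbb E[\kappa_{\mathrm{spec}}]=1$.

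The main obstacle is the surjectivity or codimension count, specifically handling the rectangular padding cleanly. A faster alternative avoids this: cite the singular-value derivative \eqref{eq:singular_value_directional_derivative}. The map $W\mapsto\sigma(W)$ is smooth near $W_0$ with differential $H\mapsto\operatorname{diag}(U_0^\top HV_0)$, which is surjective onto $\mathbb R^q$. So $\mathcal F(W_0)$ is locally a regular level set, its tangent space is the kernel of this differential, and the normal space is the span of the gradients $u_kv_k^\top$.
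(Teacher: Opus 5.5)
Your proof is correct, but your main route differs from the paper's. The paper gives only a sketch. It asserts that fixing $q$ simple, positive singular values imposes $q$ independent first-order constraints, observes that $\{u_{0,k}v_{0,k}^\top\}_{k=1}^q$ is an orthonormal basis of the complementary subspace, and appeals to the fact that an isotropic direction puts expected energy into a fixed subspace in proportion to its dimension.

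That sketch is essentially your ``faster alternative'': a local regular level set of $\sigma$ whose differential $H\mapsto\operatorname{diag}(U_0^\top HV_0)$ is onto $\mathbb{R}^q$ by \eqref{eq:singular_value_directional_derivative}. Your identity $\mathbb{E}[x^\top Px]=\operatorname{tr}(P)/D$ is the precise version of the paper's last sentence.

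Your primary route instead builds tangent directions from the orbit description of Proposition~\ref{prop:app_orbit_characterization}. You then check by hand that two-sided infinitesimal rotations $AW_0-W_0B$ realize every zero-diagonal direction. This is constructive and shows exactly where each hypothesis is used: $\sigma_k\neq\sigma_l$ for the $2\times2$ blocks, and $\sigma_k>0$ for the padded rows or columns. The level-set route is shorter but leans on smoothness of $\sigma$ near $W_0$ and on the regular value theorem.

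One bookkeeping point in the primary route needs fixing. The zero-diagonal property of $A\Sigma_0-\Sigma_0B$, together with your surjectivity count, gives $\{AW_0-W_0B\}=\ker(d\sigma_{W_0})$. Since the tangent space contains this set, you only get codimension \emph{at most} $q$. To get exactly $q$, you also need every tangent vector of $\mathcal F(W_0)$ at $W_0$ to lie in $\ker(d\sigma_{W_0})$. This follows from the first claim of Proposition~\ref{prop:iso_tangent}, or directly by differentiating the constant map $t\mapsto\sigma(W(t))=\sigma(W_0)$. State it explicitly, since it is what upgrades ``contains'' to ``equals'' and makes the normal space exactly $\{U_0\operatorname{Diag}(a)V_0^\top\}$.
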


\begin{proof}
For a simple full-rank spectrum, fixing the $q$ singular values imposes $q$
independent first-order constraints. The matrices
$\{u_{0,k}v_{0,k}^\top\}_{k=1}^q$ form an orthonormal basis for this
spectrum-changing subspace, which gives
Equation~\eqref{eq:spectrum_changing_projection}. An isotropic direction places
expected squared energy in a fixed subspace in proportion to its dimension.
The spectrum-changing and ambient dimensions are
$q$ and $d_{\mathrm{out}}d_{\mathrm{in}}$, respectively.
\end{proof}

\subsection{Reconstruction Classes and Unexplained-Update Ratios}
\label{app:frame_adaptability}

For $r<q$, write
\[
    W_t^{(r)}
    =
    U_t^{(r)}\Sigma_t^{(r)}
    \bigl(V_t^{(r)}\bigr)^\top,
    \qquad
    P_t^{(r)}
    =
    U_t^{(r)}\bigl(U_t^{(r)}\bigr)^\top,
    \qquad
    Q_t^{(r)}
    =
    V_t^{(r)}\bigl(V_t^{(r)}\bigr)^\top.
\]
We call a transition $W_j\rightarrow W_i$ admissible at rank $r$ when
\[
    \sigma_r(W_t)>\sigma_{r+1}(W_t),
    \qquad
    t\in\{i,j\},
\]
so that the retained projectors are uniquely defined.

Let $A=W_i^{(r)}$. Define
\begin{equation}
\begin{aligned}
\mathcal C_{\mathrm{mix}}
&=
\left\{
U_j^{(r)}C\bigl(V_j^{(r)}\bigr)^\top:
C\in\mathbb R^{r\times r}
\right\},
\\
\mathcal C_L
&=
\left\{
U_j^{(r)}B:
B\in\mathbb R^{r\times d_{\mathrm{in}}}
\right\},
\\
\mathcal C_R
&=
\left\{
B\bigl(V_j^{(r)}\bigr)^\top:
B\in\mathbb R^{d_{\mathrm{out}}\times r}
\right\},
\\
\mathcal F_j^{(r)}
&=
\left\{
U\Sigma_j^{(r)}V^\top:
U\in\mathrm{St}(d_{\mathrm{out}},r),
V\in\mathrm{St}(d_{\mathrm{in}},r)
\right\}.
\end{aligned}
\end{equation}

\begin{proposition}[Optimal checkpoint reconstructions]
\label{prop:app_frame_adaptability}
The Frobenius-optimal reconstructions are
\begin{equation}
\begin{aligned}
\widehat W_{\mathrm{mix}}
&=
P_j^{(r)}AQ_j^{(r)},
&
\widehat W_L
&=
P_j^{(r)}A,
\\
\widehat W_R
&=
AQ_j^{(r)},
&
\widehat W_{\mathrm{iso}}
&=
U_i^{(r)}\Sigma_j^{(r)}
\bigl(V_i^{(r)}\bigr)^\top.
\end{aligned}
\end{equation}
The remix reconstruction can equivalently be written as
\begin{equation}
\label{eq:app_incoming_subspace_remix}
    \widehat W_{\mathrm{mix}}
    =
    U_j^{(r)}
    B_{ij}^{(r)}
    \bigl(V_j^{(r)}\bigr)^\top,
    \qquad
    B_{ij}^{(r)}
    =
    \bigl(U_j^{(r)}\bigr)^\top
    W_i^{(r)}
    V_j^{(r)}.
\end{equation}
The core $B_{ij}^{(r)}$ is unconstrained and may rotate, mix, rescale, and
change the represented spectrum. Thus, $\mathcal C_{\mathrm{mix}}$ fixes only
the incoming input and output spans, not the individual frames or spectrum.
Moreover,
\begin{equation}
    \widehat W_{\mathrm{iso}}
    \in
    \operatorname*{arg\,min}_{Z\in\mathcal F_j^{(r)}}
    \|A-Z\|_F,
\end{equation}
and the corresponding normalized residuals are
\begin{equation}
    e_h
    =
    \frac{
        \|A-\widehat W_h\|_F
    }{
        \|A\|_F
    },
    \qquad
    h\in\{\mathrm{mix},L,R,\mathrm{iso}\},
\end{equation}
while the displacement from the unchanged incoming checkpoint, used as a scale
reference, is
\begin{equation}
\label{eq:app_frame_adaptability_drift}
    e_{\mathrm{drift}}
    :=
    \frac{
        \|W_i^{(r)}-W_j^{(r)}\|_F
    }{
        \|W_i^{(r)}\|_F
    }.
\end{equation}
This reference is not an additional reconstruction hypothesis. The
isospectral residual has the closed form
\begin{equation}
\label{eq:app_fixed_spectrum_residual}
    e_{\mathrm{iso}}
    =
    \frac{
        \|\Sigma_i^{(r)}-\Sigma_j^{(r)}\|_F
    }{
        \|\Sigma_i^{(r)}\|_F
    }.
\end{equation}
The corresponding unexplained-update ratios are
\begin{equation}
\label{eq:app_unexplained_update_ratio}
    u_h
    :=
    \frac{e_h}{e_{\mathrm{drift}}}
    =
    \frac{
        \|W_i^{(r)}-\widehat W_h\|_F
    }{
        \|W_i^{(r)}-W_j^{(r)}\|_F
    },
    \qquad
    h\in\{\mathrm{mix},L,R,\mathrm{iso}\}.
\end{equation}
If $e_{\mathrm{drift}}>0$, then
\[
    0\leq u_h\leq1,
    \qquad
    h\in\{\mathrm{mix},L,R,\mathrm{iso}\}.
\]

\end{proposition}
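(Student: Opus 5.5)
The plan is to treat the four classes separately. Each of the first three is a linear subspace of $\mathbb{R}^{d_{\mathrm{out}}\times d_{\mathrm{in}}}$, so its optimal reconstruction is an orthogonal projection. The fixed-spectrum class $\mathcal F_j^{(r)}$ should reduce to the von Neumann argument of Proposition~\ref{prop:app_isospectral_projection}. The ratio bounds then follow from feasibility of $W_j^{(r)}$.

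\textbf{Linear classes.} For $\mathcal C_L$, write $\|A-U_j^{(r)}B\|_F^2$ and use $(U_j^{(r)})^\top U_j^{(r)}=I_r$. This splits the objective as
\[
\|(I-P_j^{(r)})A\|_F^2+\|(U_j^{(r)})^\top A-B\|_F^2 .
\]
The second term vanishes at $B=(U_j^{(r)})^\top A$, which gives $\widehat W_L=P_j^{(r)}A$. The case $\mathcal C_R$ is symmetric, with $B=AV_j^{(r)}$ and $\widehat W_R=AQ_j^{(r)}$.

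For $\mathcal C_{\mathrm{mix}}$, the map $X\mapsto P_j^{(r)}XQ_j^{(r)}$ is an idempotent, self-adjoint operator for the Frobenius inner product. Its range is exactly $\mathcal C_{\mathrm{mix}}$. Hence the minimizer is $P_j^{(r)}AQ_j^{(r)}$, and expanding the projectors gives the core form \eqref{eq:app_incoming_subspace_remix} with $C=B_{ij}^{(r)}$. Admissibility only matters here to ensure that $P_j^{(r)}$ and $Q_j^{(r)}$ are well defined independently of the SVD gauge.

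\textbf{Isospectral class.} Every $Z=U\Sigma_j^{(r)}V^\top$ with Stiefel $U,V$ has $\|Z\|_F=\|\Sigma_j^{(r)}\|_F$. Therefore
\[
\|A-Z\|_F^2=\|A\|_F^2+\|\Sigma_j^{(r)}\|_F^2-2\langle A,Z\rangle_F .
\]
Von Neumann's trace inequality bounds the cross term by $\sum_{k\le r}\sigma_k(A)\sigma_k(W_j)$. The singular values of $Z$ are those of $\Sigma_j^{(r)}$ padded with zeros, and $A$ has rank at most $r$ with $\sigma_k(A)=\sigma_k(W_i)$ for $k\le r$. The bound is attained at $Z=U_i^{(r)}\Sigma_j^{(r)}(V_i^{(r)})^\top$. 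This gives the argmin claim, and the minimum value is $\|\Sigma_i^{(r)}-\Sigma_j^{(r)}\|_F$.

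Dividing that minimum by $\|A\|_F=\|\Sigma_i^{(r)}\|_F$ gives the closed form \eqref{eq:app_fixed_spectrum_residual}. Dividing by $\|W_i^{(r)}-W_j^{(r)}\|_F$ instead gives $u_{\mathrm{iso}}$ as stated in Proposition~\ref{prop:checkpoint_transport_test}. The identity $u_h=e_h/e_{\mathrm{drift}}$ is immediate because $\|A\|_F$ cancels.

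\textbf{Bounds $0\le u_h\le 1$.} Nonnegativity is trivial. For the upper bound, it suffices to show $W_j^{(r)}$ lies in each class, since each $\widehat W_h$ minimizes distance to $A$ over its class:
\begin{itemize}
\item $W_j^{(r)}=U_j^{(r)}\Sigma_j^{(r)}(V_j^{(r)})^\top$ lies in $\mathcal C_{\mathrm{mix}}$ with $C=\Sigma_j^{(r)}$.
\item It lies in $\mathcal C_L$ and $\mathcal C_R$, since each contains $\mathcal C_{\mathrm{mix}}$.
\item It lies in $\mathcal F_j^{(r)}$ by taking $U=U_j^{(r)}$ and $V=V_j^{(r)}$.
\end{itemize}
Therefore $\|A-\widehat W_h\|_F\le\|A-W_j^{(r)}\|_F$, and dividing by the positive drift gives $u_h\le 1$.

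The only delicate step is the isospectral minimization with rectangular rank-$r$ frames. There one must check that von Neumann's inequality applies with zero-padded spectra and that no nonzero singular values of $A$ beyond index $r$ contribute. I expect this to go through directly, since $A$ is itself a rank-$r$ truncation.
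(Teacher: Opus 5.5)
Your proposal is correct and follows the paper's proof. The paper treats the three linear classes as Frobenius-orthogonal projections and gets the isospectral case by applying Proposition~\ref{prop:app_isospectral_projection} to the rank-$r$ truncations; this implicitly uses that $\mathcal F_j^{(r)}$ is the fixed-spectrum family of $W_j^{(r)}$, which your zero-padded von Neumann check makes explicit. It then obtains $u_h\le 1$ from the feasibility of $W_j^{(r)}$ in all four classes, exactly as you do.
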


\begin{proof}
The first three expressions are orthogonal projections onto
$\mathcal C_{\mathrm{mix}}$, $\mathcal C_L$, and $\mathcal C_R$. In
particular,
\[
    P_j^{(r)}W_i^{(r)}Q_j^{(r)}
    =
    U_j^{(r)}
    \left[
        \bigl(U_j^{(r)}\bigr)^\top
        W_i^{(r)}
        V_j^{(r)}
    \right]
    \bigl(V_j^{(r)}\bigr)^\top,
\]
which gives Equation~\eqref{eq:app_incoming_subspace_remix}.
The isospectral expression follows by applying
Proposition~\ref{prop:app_isospectral_projection} to the rank-$r$ matrices
$W_i^{(r)}$ and $W_j^{(r)}$.

Finally, $W_j^{(r)}$ belongs to all four reconstruction classes. Therefore,
the optimality of $\widehat W_h$ gives
\[
    \|W_i^{(r)}-\widehat W_h\|_F
    \leq
    \|W_i^{(r)}-W_j^{(r)}\|_F,
\]
which proves $0\leq u_h\leq1$.

\end{proof}

\begin{remark}[Interpretation of the reconstruction test]
The reconstruction classes have different dimensions, so we do not interpret
their residual ordering as a complexity-normalized model-selection result.
The purpose of the test is parameterization sufficiency: whether freezing the
corresponding incoming span still permits a low-residual endpoint
description. Because $\mathcal C_{\mathrm{mix}}$, $\mathcal C_L$, and
$\mathcal C_R$ are more permissive than the associated fixed-spectrum
restrictions, their residuals are optimistic lower bounds on the error caused
by freezing that frame structure.
\end{remark}

\begin{remark}[Gauge and interpretation]
The projectors $P_t^{(r)},Q_t^{(r)}$ and all four residual values are invariant
to basis changes inside the retained subspaces. If $\Sigma_i^{(r)}$ has
repeated blocks, $\widehat W_{\mathrm{iso}}$ may not be the unique nearest
point, but $e_{\mathrm{iso}}$ remains unique.

The remix class contains the internal fixed-spectrum slice
\[
    \left\{
        U_j^{(r)}R_U\Sigma_j^{(r)}R_V^\top
        \bigl(V_j^{(r)}\bigr)^\top:
        R_U,R_V\in\mathrm O(r)
    \right\}.
\]
Hence, failure of the more permissive remix class also rules out an explanation
based solely on internal fixed-spectrum rotations.
\end{remark}

\begin{remark}[Why $r<q$]
At full thin rank, at least one of $P_t^{(r)},Q_t^{(r)}$ is the identity. For
square matrices both are. The remix and one-sided tests would therefore
degenerate.
We use $r<q$ so that both retained subspaces are proper and informative.
\end{remark}

\subsection{Rank Sensitivity of the Frame-Adaptability Test}
\label{app:rank}

\paragraph{Rank sensitivity.}
For
$\alpha\in\{0.2,0.4,0.6,0.8,0.9\}$,
we use the integer truncation rank
\begin{equation}
\small
    r_\alpha
    =
    \max\{1,\lfloor\alpha q\rfloor\}.
\end{equation}
At each $\alpha$, a matrix--transition pair is included only when
$r_\alpha$ is admissible at both endpoints. Inadmissible pairs are omitted at
that value of $\alpha$. For each included pair, we compute
\begin{equation}
\small
    u_h
    =
    \frac{e_h}{e_{\mathrm{drift}}},
    \qquad
    h\in\{\mathrm{mix},L,R,\mathrm{iso}\}.
\end{equation}

\begin{figure}[!ht]
  \centering
  \includegraphics[width=\linewidth]{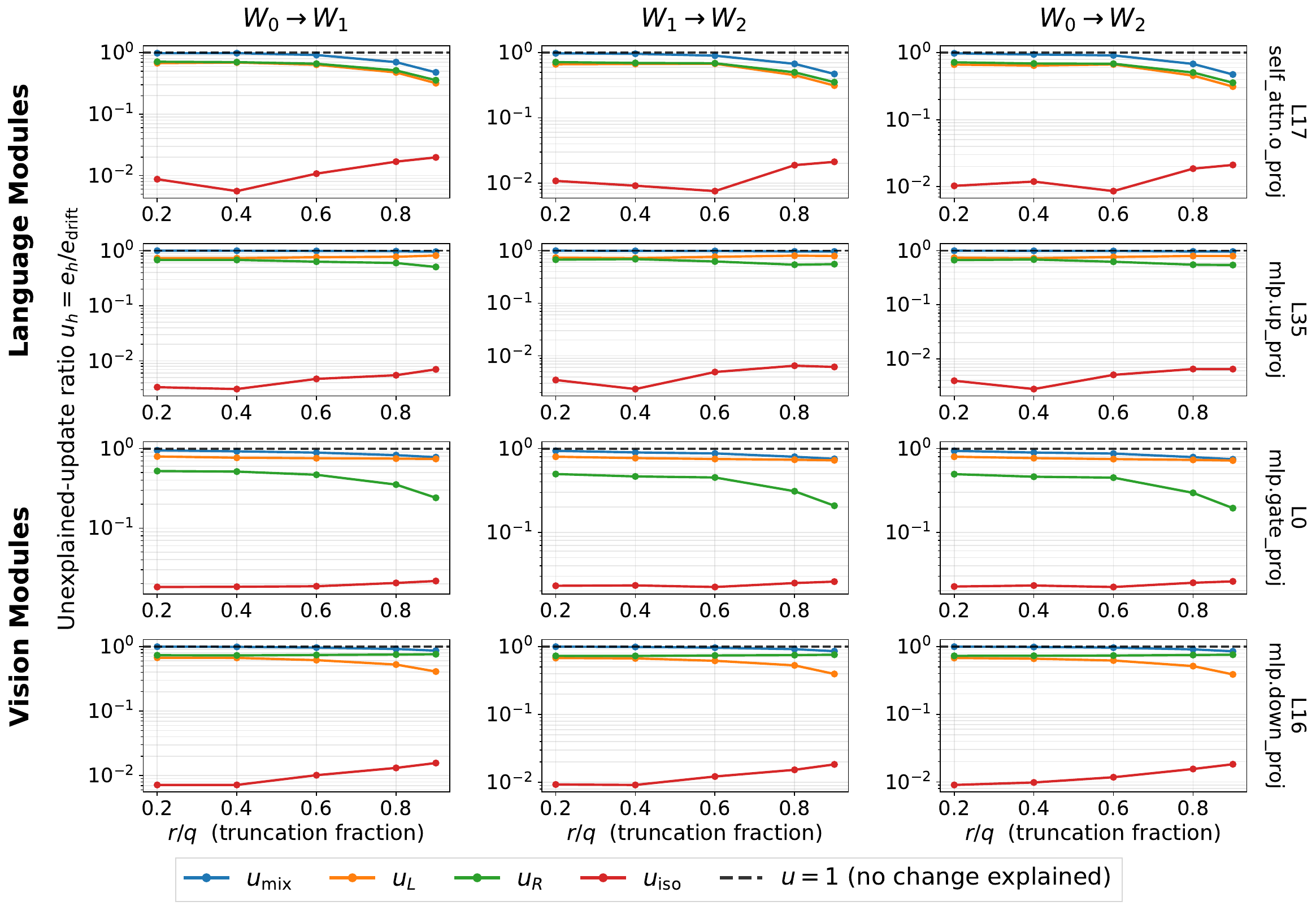}
  \caption{
  \textbf{Rank sensitivity of the unexplained-update ratio.}
  Unexplained-update ratio $u_h$ versus the truncation fraction $r/q$ for
  representative language and vision modules and all three transitions.
  The dashed line $u=1$ corresponds to retaining the unmodified incoming
  checkpoint. The ordering in Figure~\ref{fig:frame_adaptability} is
  stable across ranks: $u_{\mathrm{iso}}$ stays below $2\%$ throughout
  the sweep, while the subspace-retaining alternatives leave tens of
  percent unexplained at every rank. At $r/q=0.2$, the median across the
  twelve displayed matrix--transition panels is
  $u_{\mathrm{mix}}=0.99$. The gap is already pronounced at small $r$:
  restricting the reconstruction to the dominant incoming input and output
  spans explains little of the observed rank-$r$ checkpoint change, whereas
  retaining the incoming spectrum and adapting both frames remains
  low-residual.
  }
  \label{fig:rank_sensitivity}
\end{figure}

\section{First-Order Properties of the ISO Parameterization}
\label{app:iso_first_order}

\subsection{Proof of Proposition~\ref{prop:iso_tangent}}
\label{app:proof_iso_tangent}

\begin{proof}
We prove the two claims in Proposition~\ref{prop:iso_tangent}.

First, consider a differentiable curve
$W(\varepsilon)\in\mathcal F(W_0)$ with
$W(0)=W=U\Sigma_0V^\top$. Since all $q$ singular values are positive and
simple, we may locally choose differentiable
Stiefel representatives $U(\varepsilon)$ and $V(\varepsilon)$ such that
\begin{equation}
\small
    W(\varepsilon)
    =
    U(\varepsilon)\Sigma_0 V(\varepsilon)^\top,
    \qquad
    U(0)=U,\quad V(0)=V .
\end{equation}
Let
\begin{equation}
\small
    \dot U
    =
    \left.\frac{d}{d\varepsilon}U(\varepsilon)\right|_{\varepsilon=0},
    \qquad
    \dot V
    =
    \left.\frac{d}{d\varepsilon}V(\varepsilon)\right|_{\varepsilon=0},
    \qquad
    \dot W
    =
    \left.\frac{d}{d\varepsilon}W(\varepsilon)\right|_{\varepsilon=0}.
\end{equation}
Differentiating the fixed-spectrum representation gives
\begin{equation}
\small
    \dot W
    =
    \dot U\Sigma_0 V^\top
    +
    U\Sigma_0 \dot V^\top .
\end{equation}
Multiplying on the left by $U^\top$ and on the right by $V$ yields
\begin{equation}
\small
    U^\top \dot W V
    =
    U^\top\dot U\,\Sigma_0
    +
    \Sigma_0\,\dot V^\top V .
\end{equation}
Because $U(\varepsilon)$ and $V(\varepsilon)$ remain on the Stiefel manifold,
differentiating
$U(\varepsilon)^\top U(\varepsilon)=I$ and
$V(\varepsilon)^\top V(\varepsilon)=I$ at $\varepsilon=0$ gives
\begin{equation}
\small
    U^\top\dot U+\dot U^\top U=0,
    \qquad
    V^\top\dot V+\dot V^\top V=0 .
\end{equation}
Thus $U^\top\dot U$ and $\dot V^\top V$ are skew-symmetric. Since $\Sigma_0$
is diagonal, both $U^\top\dot U\,\Sigma_0$ and
$\Sigma_0\dot V^\top V$ have zero diagonal. Therefore
\begin{equation}
\small
    \operatorname{diag}(U^\top \dot W V)=0 .
\end{equation}

Second, let $H$ be an arbitrary perturbation and define
\begin{equation}
\small
    W(\varepsilon)=W+\varepsilon H .
\end{equation}
For each singular value, choose differentiable singular vectors
$u_k(\varepsilon)$ and $v_k(\varepsilon)$ with singular value
$\sigma_k(\varepsilon)$. Then
\begin{equation}
\small
    \sigma_k(\varepsilon)
    =
    u_k(\varepsilon)^\top
    W(\varepsilon)
    v_k(\varepsilon).
\end{equation}
Differentiating at $\varepsilon=0$ gives
\begin{equation}
\small
\begin{aligned}
    \sigma_k'(0)
    &=
    \dot u_k^\top W v_k
    +
    u_k^\top H v_k
    +
    u_k^\top W \dot v_k .
\end{aligned}
\end{equation}
Using $Wv_k=\sigma_k u_k$ and $u_k^\top W=\sigma_k v_k^\top$, this becomes
\begin{equation}
\small
    \sigma_k'(0)
    =
    \sigma_k \dot u_k^\top u_k
    +
    u_k^\top H v_k
    +
    \sigma_k v_k^\top \dot v_k .
\end{equation}
Since $u_k(\varepsilon)$ and $v_k(\varepsilon)$ remain unit vectors,
differentiating
$u_k(\varepsilon)^\top u_k(\varepsilon)=1$ and
$v_k(\varepsilon)^\top v_k(\varepsilon)=1$ gives
\begin{equation}
\small
    \dot u_k^\top u_k=0,
    \qquad
    v_k^\top \dot v_k=0 .
\end{equation}
Hence
\begin{equation}
\small
    \left.
    \frac{d}{d\varepsilon}
    \sigma_k(W+\varepsilon H)
    \right|_{\varepsilon=0}
    =
    u_k^\top H v_k .
\end{equation}
Collecting this identity for $k=1,\ldots,q$ shows that
$\operatorname{diag}(U^\top H V)$ is exactly the vector of first-order
singular-value changes. This proves the proposition.
\end{proof}

\subsection{Factor-Gradient Derivation}
\label{app:iso_factor_gradients}

Let $\mathcal{L}(W)$ be the RLVR loss, let
$G_W=\nabla_W\mathcal{L}(W)$, and use the fixed-spectrum parameterization
\begin{equation}
\small
    W(U,V)=U\Sigma_0V^\top\in\mathcal F(W_0) .
\end{equation}
For infinitesimal Stiefel-factor changes
$\xi_U\in T_U\mathrm{St}(d_{\mathrm{out}},q)$ and
$\xi_V\in T_V\mathrm{St}(d_{\mathrm{in}},q)$, the induced weight motion is
\begin{equation}
\small
    \dot W
    =
    \xi_U\Sigma_0V^\top
    +
    U\Sigma_0\xi_V^\top .
\end{equation}
The first-order change in loss is
\begin{equation}
\small
\begin{aligned}
    d\mathcal{L}[\dot W]
    &=
    \langle G_W,\dot W\rangle_F \\
    &=
    \langle G_WV\Sigma_0,\xi_U\rangle_F
    +
    \langle G_W^\top U\Sigma_0,\xi_V\rangle_F .
\end{aligned}
\end{equation}
Therefore the raw factor gradients used by ISO are
\begin{equation}
\small
    G_U=G_WV\Sigma_0,
    \qquad
    G_V=G_W^\top U\Sigma_0 .
\end{equation}
These are the Euclidean gradients in the ambient factor coordinates.
ISO does not explicitly construct a weight-space projected gradient.
Instead, the fixed-spectrum parameterization and polar retraction restrict
every represented iterate to $\mathcal F(W_0)$ up to numerical precision.

\section{A Same-Base SFT--RLVR Case Study}
\label{app:14b_sft_rl}

The main-text case study in Section~\ref{sec:spectral_stability} compares an SFT
transition and an RLVR transition initialized from different base checkpoints.
To control for this difference, we repeat the analysis using two public
14B checkpoints derived from the same Qwen2.5-14B base:
\[
    \texttt{Qwen2.5-14B}
    \;\longrightarrow\;
    \begin{cases}
        \texttt{DeepSeek-R1-Distill-Qwen-14B}
        & \text{(distilled SFT)},\\[2pt]
        \texttt{Qwen-2.5-14B-SimpleRL-Zoo}
        & \text{(RLVR)}.
    \end{cases}
\]
Both endpoints are measured relative to the shared base checkpoint. We restrict
the comparison to the seven transformer projection matrices whose parameter
names and shapes match the shared Qwen2.5-14B backbone, avoiding differences
introduced by checkpoint-specific tokenizer or configuration changes. This
comparison therefore controls for the starting backbone weights, while the
training data, objectives, and optimization procedures remain intentionally
different. Figure~\ref{fig:14b_samebase} summarizes the results.

\paragraph{Raw spectral proximity.}
The two endpoints differ sharply in their spectral displacement from the shared
base. At the scale of Figure~\ref{fig:14b_samebase}b, the per-rank changes of
the RLVR endpoint are visually near zero. Its base-normalized spectral distance
is approximately
\[
    \delta_{\Sigma}
    \approx
    6\times10^{-6}
\]
across layers and projection types. By contrast, the distilled-SFT endpoint
substantially changes the spectrum:
$\delta_{\Sigma}$ is approximately $10^{-2}$ for most projection types and
reaches approximately $0.4$ for the attention output projection.

\paragraph{Dimension-aware calibration.}
Raw proximity alone does not establish a preferentially spectrum-preserving
update direction, so we additionally report the dimension-normalized
spectrum-changing energy $\kappa_{\mathrm{spec}}$. Across projection types,
RLVR yields mean values between $1.4$ and $3.2$, with an overall mean of
$1.9$. These values remain order-one relative to the isotropic dimensional
reference, with the largest deviations concentrated in the earliest layers.

The distilled-SFT endpoint yields mean
$\kappa_{\mathrm{spec}}$ values between $117$ and $4810$, with an overall
mean of $873$ and the largest concentration in the attention output
projection. The same-base comparison therefore reproduces the main-text
contrast: SFT concentrates its displacement in spectrum-changing coordinates
by roughly two to three orders of magnitude, whereas RLVR exhibits no
comparable concentration. This difference cannot be explained by the two
transitions starting from different backbone checkpoints.

\begin{figure}[t]
    \centering
    \includegraphics[width=0.9\linewidth]{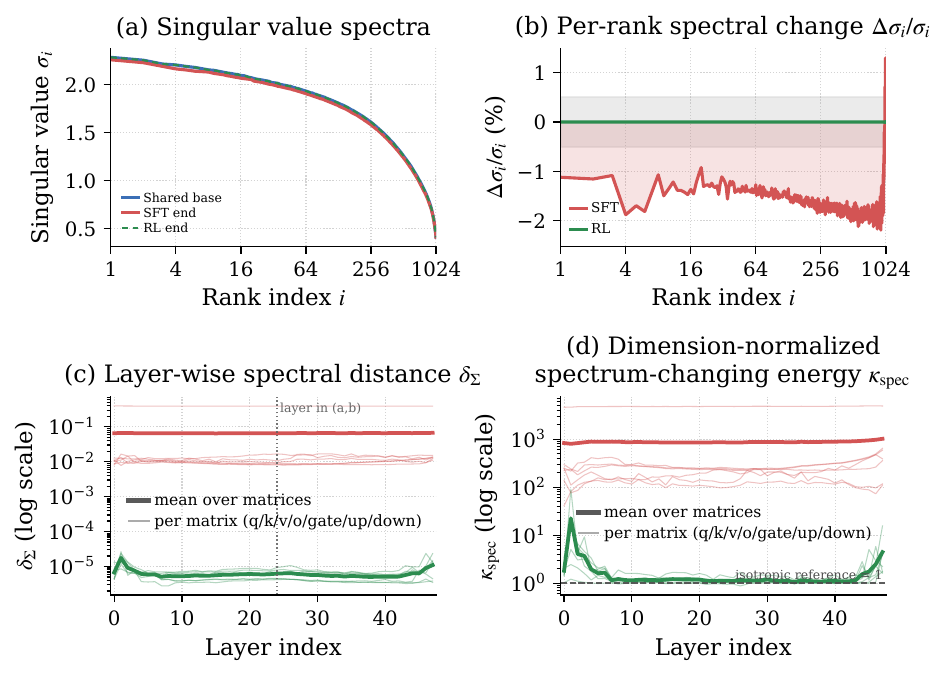}
\caption{
\small
\textbf{Same-base spectral dynamics.}
The distilled-SFT and RLVR endpoints are both compared with their shared
\texttt{Qwen2.5-14B} base.
\textbf{(a)} Singular-value profiles for \texttt{v\_proj} in layer 24.
\textbf{(b)} Rank-wise relative spectral change: the RLVR curve remains near
zero at the plotted scale, whereas distilled SFT systematically reshapes the
spectrum.
\textbf{(c)} Layer-wise base-normalized spectral distance
$\delta_{\Sigma}$, shown per projection matrix (thin) and averaged across
projection types (thick). The RLVR endpoint remains several orders of
magnitude closer to the fixed-spectrum family of the shared base.
\textbf{(d)} Dimension-normalized spectrum-changing energy
$\kappa_{\mathrm{spec}}$: RLVR remains order-one relative to the isotropic
dimensional reference, whereas distilled SFT lies roughly two to three orders
of magnitude above it.
}
 
      \label{fig:14b_samebase}
\end{figure}

\subsection{Reverse Spectral Substitution}
To complement the main-text spectrum-restoration experiment, we reverse the
intervention by keeping the SFT singular frames fixed while interpolating the
spectrum from the SFT endpoint toward the RLVR endpoint. As shown in
Figure~\ref{fig:sft_rebase}, performance remains nearly unchanged along this
path, indicating that substituting the RLVR spectrum alone does not transfer
the acquired behavior into the SFT frames.

\begin{figure}[!ht]
  \centering
  \includegraphics[width=\linewidth]{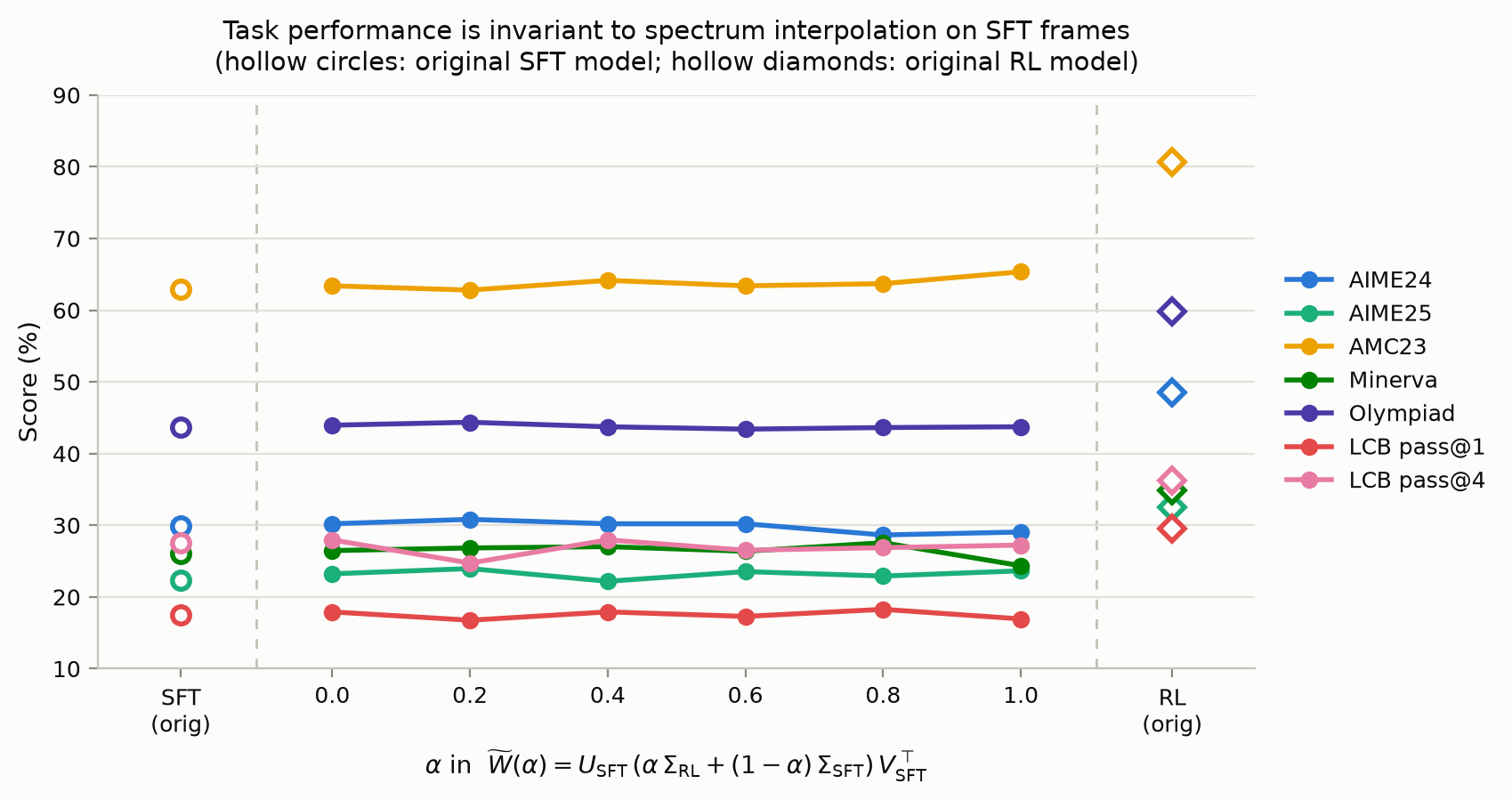}
  \caption{
  \textbf{Task performance is insensitive to spectral interpolation in the
  SFT singular frames.}
  We hold the SFT left and right singular frames fixed and interpolate only
  the spectrum,
  $\widetilde W(\alpha)=U_{\mathrm{SFT}}
  [(1-\alpha)\Sigma_{\mathrm{SFT}}+\alpha\Sigma_{\mathrm{RL}}]
  V_{\mathrm{SFT}}^\top$, for $\alpha\in[0,1]$.
  Performance remains nearly unchanged as an arbitrary proportion of the RL
  spectrum is substituted into the SFT frames. Hollow circles and diamonds
  show the original SFT and RL checkpoints, respectively. This complements
  Figure~\ref{fig:spectrum_functional_tests}(a), which performs the converse
  intervention by restoring the SFT/base spectrum in the RL-trained frames.
  }
  \label{fig:sft_rebase}
\end{figure}

\FloatBarrier

\section{ISO-Merger Details}
\label{app:iso_merger_details}

\subsection{Setup}

For each 2D weight matrix, let
\begin{equation}
\small
    W_0 = U_0\Sigma_0V_0^\top
\end{equation}
be the SVD of the shared base model, and let
\begin{equation}
\small
    W_i = U_i\Sigma_iV_i^\top,\qquad i=1,\ldots,K
\end{equation}
be the corresponding SVDs of $K$ RL experts fine-tuned from the same base.
ISO-Merger uses the base spectrum $\Sigma_0$ as the shared spectrum and merges
expert-specific motion through the Stiefel factors $(U_i,V_i)$.

\subsection{Sign canonicalization}
\label{app:gauge}

The SVD determines each singular-vector pair only up to a joint sign:
$(u_k,v_k)$ and $(-u_k,-v_k)$ represent the same matrix. Numerical routines
choose these signs arbitrarily, so an unaligned displacement $U_i-U_0$ can
contain spurious columns of norm ${\sim}2$ that reflect bookkeeping rather
than learned motion. We canonicalize the gauge column-wise against the base:
\begin{equation}
\small
    s_{i,k}
    :=
    \operatorname{sign}\!\bigl(\langle u_{0,k},u_{i,k}\rangle\bigr),
    \qquad
    \operatorname{sign}(0):=1,
\end{equation}
\begin{equation}
\small
    u_{i,k}\leftarrow s_{i,k}u_{i,k},
    \quad
    v_{i,k}\leftarrow s_{i,k}v_{i,k}.
\end{equation}
The same sign is applied to both frames, so the alignment is a valid gauge
transformation that leaves $W_i$ unchanged. Exactly repeated singular values
additionally admit a joint in-block rotation $U_c\to U_cQ$, $V_c\to V_cQ$.
Generic weight matrices have simple spectra, so we do not canonicalize this
freedom explicitly.

\subsection{Stiefel tangent projection}

For an anchor $U_0\in\mathrm{St}(m,q)$, the tangent space is
\begin{equation}
\small
    T_{U_0}\mathrm{St}(m,q)
    =
    \{\xi\in\mathbb{R}^{m\times q}:
    U_0^\top\xi+\xi^\top U_0=0\}.
\end{equation}
For expert $i$, define the frame displacements
\begin{equation}
\small
    \Delta U_i:=U_i-U_0,
    \qquad
    \Delta V_i:=V_i-V_0.
\end{equation}
ISO-Merger uses their orthogonal projections onto the corresponding tangent
spaces:
\begin{equation}
\small
    \xi_{U,i}
    =
    \Pi_{U_0}(\Delta U_i)
    =
    \Delta U_i
    -
    U_0\,\operatorname{sym}(U_0^\top\Delta U_i),
    \qquad
    \operatorname{sym}(A)=\frac{1}{2}(A+A^\top).
\end{equation}
The right-frame tangent $\xi_{V,i}=\Pi_{V_0}(\Delta V_i)$ is computed
analogously.

\subsection{Top-\texorpdfstring{$k_{\mathrm{keep}}$}{k-keep} singular-mode masking}
\label{app:topk}

Let $\rho_{\mathrm{keep}}\in(0,1]$ and define
\begin{equation}
\small
    k_{\mathrm{keep}}
    :=
    \lfloor\rho_{\text{keep}}q\rceil,
    \qquad
    D_{\mathrm{keep}}
    :=
    \operatorname{Diag}
    \left(
        \bigl(\mathbf 1\{k\leq k_{\mathrm{keep}}\}\bigr)_{k=1}^q
    \right).
\end{equation}
Thus, $D_{\mathrm{keep}}$ keeps the leading $k_{\mathrm{keep}}$ columns.
Empirically, the
trailing modes (columns associated with small singular values) are
estimated noisily and disagree across experts, and retaining them degrades
the merged model. We therefore mask the trailing columns of each
displacement,
\begin{equation}
\small
    \widetilde{\xi}_{U,i}=\xi_{U,i}D_{\mathrm{keep}},
    \qquad
    \widetilde{\xi}_{V,i}=\xi_{V,i}D_{\mathrm{keep}}.
\end{equation}
Masking is a coordinate-selection operation and need not preserve the
Stiefel tangent constraints. We use the masked displacements only to define
local first-order effect proxies and impose feasibility after aggregation.
We use $\rho_{\mathrm{keep}}=0.9$ unless otherwise stated.

\subsection{Retention Coefficients with a Unit-Retention Target}

For expert $i$, define the local first-order effect proxy
\begin{equation}
\small
    g_i
    =
    \widetilde{\xi}_{U,i}\Sigma_0V_0^\top
    +
    U_0\Sigma_0\widetilde{\xi}_{V,i}^\top.
\end{equation}
We then form the Gram matrix
\begin{equation}
\small
    \Gamma_{ij} := \langle g_i, g_j\rangle_F, \qquad \Gamma\in\mathbb{R}^{K\times K}.
\end{equation}
For $c\in\mathbb R^K$, define $g(c):=\sum_{i=1}^K c_i g_i$. For each
nonzero proxy $g_i$, define its self-retention in the merged proxy by
\begin{equation}
\small
    \operatorname{ret}_i(c)
    :=
    \frac{\langle g(c),g_i\rangle_F}{\|g_i\|_F^2}
    =
    \frac{(\Gamma c)_i}{\Gamma_{ii}}.
\end{equation}
For any zero-norm proxy $g_i$, we omit the corresponding row and column from
the Gram system and set its coefficient to zero.
The ideal target $\operatorname{ret}_i(c)=1$ gives
\begin{equation}
\small
    \Gamma c=b,
    \qquad
    b:=\operatorname{diag}(\Gamma)\in\mathbb R^K.
\end{equation}
We solve the ridge-stabilized system
\begin{equation}
\small
    (\Gamma + \lambda_{\mathrm{ridge}} I_K)\bar{c}=b,
\end{equation}
and clip the coefficients to prevent sign reversal or over-amplification
of any expert:
\begin{equation}
\small
    c^\star = \operatorname{clip}(\bar{c},\, c_{\min},\, c_{\max}),
    \qquad
    [c_{\min}, c_{\max}] = [0, 1.5].
\end{equation}
We use $\lambda_{\mathrm{ridge}}=10^{-12}$ as a small numerical stabilizer.
The procedure targets, rather than guarantees, unit self-retention after
ridge stabilization, clipping, aggregate tangent projection, and retraction.

\subsection{Retraction and reconstruction}
\label{app:retract}

We combine the masked displacements and project the result onto the
tangent space at the anchor:
\begin{equation}
\small
    \xi_{U,\star}=\Pi_{U_0}\!\Big(
        \sum_{i=1}^K c_i^\star\widetilde{\xi}_{U,i}
    \Big),
    \qquad
    \xi_{V,\star}=\Pi_{V_0}\!\Big(
        \sum_{i=1}^K c_i^\star\widetilde{\xi}_{V,i}
    \Big).
\end{equation}
For a thin SVD
\[
    X=P_XS_XQ_X^\top,
\]
we define the polar factor by
\begin{equation}
\small
    \operatorname{polar}(X):=P_XQ_X^\top.
\end{equation}
When $X$ has full column rank, this is equivalently
\[
    \operatorname{polar}(X)=X(X^\top X)^{-1/2}.
\]
For rank-deficient $X$, the SVD expression selects one valid nearest Stiefel
factor, which need not be unique. Here the aggregate displacements have been
projected into the tangent spaces, so
\[
    (U_0+\xi_{U,\star})^\top(U_0+\xi_{U,\star})
    =I+\xi_{U,\star}^\top\xi_{U,\star}\succ0,
\]
with the analogous identity for $V_0+\xi_{V,\star}$. Thus the ISO-Merger
retraction arguments are full column rank. The final merged factors and matrix
are
\begin{equation}
\small
    U_\star=\operatorname{polar}(U_0+\xi_{U,\star}),
    \qquad
    V_\star=\operatorname{polar}(V_0+\xi_{V,\star}),
    \qquad
    W_\star=U_\star\Sigma_0V_\star^\top,
\end{equation}
which carries the base spectrum $\Sigma_0$ up to numerical precision.

\subsection{Merged parameter scope}

All per-layer 2D projection matrices and the embedding/unembedding matrices
are merged with the construction above. One-dimensional parameters
(normalization scales, attention biases) are merged with a standard
task-vector average,
\begin{equation}
\small
    w_\star
    =
    w_0+\frac{1}{K}\sum_{i=1}^{K}(w_i-w_0).
\end{equation}

\subsection{Full ISO-Merger algorithm}

The full procedure is summarized in Algorithm~\ref{alg:iso_merger}.

\begin{algorithm}[H]
\caption{ISO-Merger}
\label{alg:iso_merger}
\begin{algorithmic}[1]
\Require Base model $\theta_0$, expert models $\{\theta_i\}_{i=1}^K$, keep
ratio $\rho_{\mathrm{keep}}$ ($0.9$), ridge $\lambda_{\mathrm{ridge}}$
($10^{-12}$), clip range
$[c_{\min},c_{\max}]$ ($[0,1.5]$)
\Ensure Merged model $\theta_\star$
\For{each 2D weight matrix $W_0$ (per-layer projections and
embedding/unembedding)}
    \State Compute $W_0=U_0\Sigma_0V_0^\top$ with $q$ singular values
    \State $k_{\mathrm{keep}}\leftarrow
           \operatorname{round}(\rho_{\mathrm{keep}}q)$
    \State $D_{\mathrm{keep}}\leftarrow
           \operatorname{Diag}
           ((\mathbf 1\{k\leq k_{\mathrm{keep}}\})_{k=1}^q)$
    \For{each expert $i$}
        \State Compute $W_i=U_i\Sigma_iV_i^\top$
        \For{each singular mode $k$}
            \Comment{joint sign canonicalization (App.~\ref{app:gauge})}
            \State $s_{i,k}\leftarrow\operatorname{sign}
                   (\langle u_{0,k},u_{i,k}\rangle)$
            \State $u_{i,k}\leftarrow s_{i,k}u_{i,k}$,\quad
                   $v_{i,k}\leftarrow s_{i,k}v_{i,k}$
        \EndFor
        \State $\Delta U_i\leftarrow U_i-U_0$,\quad
               $\Delta V_i\leftarrow V_i-V_0$
        \State $\xi_{U,i}\leftarrow\Pi_{U_0}(\Delta U_i)$,\quad
               $\xi_{V,i}\leftarrow\Pi_{V_0}(\Delta V_i)$
        \State $\widetilde{\xi}_{U,i}\leftarrow
               \xi_{U,i}D_{\mathrm{keep}}$,\quad
               $\widetilde{\xi}_{V,i}\leftarrow
               \xi_{V,i}D_{\mathrm{keep}}$
        \Comment{mask trailing modes (App.~\ref{app:topk})}
        \State $g_i\leftarrow
        \widetilde{\xi}_{U,i}\Sigma_0V_0^\top
        +U_0\Sigma_0\widetilde{\xi}_{V,i}^\top$
    \EndFor
    \State $\mathcal I\leftarrow\{i:\|g_i\|_F>0\}$
    \State Set $c_i^\star\leftarrow0$ for $i\notin\mathcal I$
    \State $\Gamma_{ij}\leftarrow\langle g_i,g_j\rangle_F$ for
           $i,j\in\mathcal I$
    \State $b\leftarrow\operatorname{diag}(\Gamma)$
    \State Solve
           $(\Gamma + \lambda_{\mathrm{ridge}}I_{|\mathcal I|})
           \bar c_{\mathcal I}=b$
    \State $c_{\mathcal I}^\star \leftarrow
           \operatorname{clip}(\bar c_{\mathcal I},\,c_{\min},\,c_{\max})$
           \Comment{entrywise}
    \State $\xi_{U,\star}\leftarrow\Pi_{U_0}\!\big(\sum_{i=1}^K
           c_i^\star\widetilde{\xi}_{U,i}\big)$,\quad
           $\xi_{V,\star}\leftarrow\Pi_{V_0}\!\big(\sum_{i=1}^K
           c_i^\star\widetilde{\xi}_{V,i}\big)$
    \Comment{project and retract}
    \State $U_\star\leftarrow\operatorname{polar}(U_0+\xi_{U,\star})$,
           \quad
           $V_\star\leftarrow\operatorname{polar}(V_0+\xi_{V,\star})$
    \State $W_\star\leftarrow U_\star\Sigma_0V_\star^\top$
\EndFor
\State Merge 1D parameters by task-vector average:
       $w_\star=w_0+\tfrac{1}{K}\sum_{i=1}^K(w_i-w_0)$
\State \Return $\theta_\star$
\end{algorithmic}
\end{algorithm}

\FloatBarrier

\section{Data-Free Merging Experimental Details}
\label{app:merging_details}

\paragraph{Backbones and experts.}
For the Qwen2.5 setting, we use \texttt{Qwen2.5-7B-Instruct} as the shared base and merge three RLVR experts: CURE for coding~\cite{wang2025code}, ToolRL for tool use~\cite{qian2025toolrl}, and
MemAgent for long-context memory~\cite{yu2025memagent}. For the DeepSeek-R1-Distill setting, we use
\texttt{DeepSeek-R1-Distill-Qwen-1.5B} as the shared base and merge Archer2.0 for coding~\cite{archer} and JustRL for math~\cite{he2025justrl}.

\paragraph{Merging baselines.}
We compare against five data-free baselines: Task Arithmetic~\cite{ta}, TIES~\cite{ties}, TSV-Merge~\cite{tsv}, RAM~\cite{ram}, and OrthoMerge-G-TIES~\cite{yang2026orthogonal}.
Task Arithmetic
linearly combines task vectors. The first four methods operate on Euclidean task-vector representations, while OrthoMerge-G-TIES additionally introduces a geometry-aware orthogonal component and is described separately below. Let $\theta_0$ denote the shared base model and
let $\{\theta_i\}_{i=1}^{K}$ be $K$ expert models fine-tuned from this same
initialization. The task vector of expert $i$ is
\begin{equation}
    \tau_i := \theta_i-\theta_0.
\end{equation}
A data-free merging method constructs a merged update $\tau_{\merge}$ without
using training data and outputs
\begin{equation}
    \theta_{\merge} = \theta_0+\tau_{\merge}.
\end{equation}
For a matrix-shaped parameter in layer $\ell$, we use the layer-wise notation
\begin{equation}
    \Delta W_i^{(\ell)}
    :=
    W_i^{(\ell)}-W_0^{(\ell)},
    \qquad W^{(\ell)} \in \R^{m_\ell \times n_\ell}.
\end{equation}


\textbf{Task Arithmetic} merges experts by linearly combining their task vectors and
adding the resulting update back to the base model:
\begin{equation}
    \tau_{\TA} = \lambda \sum_{i=1}^{K} \tau_i,
    \qquad
    \theta_{\TA} = \theta_0+\tau_{\TA}.
\end{equation}

Task Arithmetic is computationally cheap, requiring only vector additions and
a global scaling coefficient. However, it does not explicitly address sign
conflicts, redundant updates, layer-wise geometry, or sparse task-specific
signals. As a result, interference and signal dilution can occur when expert
updates overlap or when important updates are distributed over disjoint parameter
regions.

\textbf{TIES-Merging} follows three steps: \emph{Trim}, \emph{Elect Sign}, and
\emph{Merge}. It first removes small-magnitude entries from
each task vector, then elects a consensus sign for every coordinate, and finally
averages only the updates whose signs agree with the elected sign.

Let $\rho_{\mathrm{TIES}}\in(0,1]$ be the retained density. The trimmed update
for expert $i$ is
\begin{equation}
    m_i = \TopKMask(|\tau_i|,\rho_{\mathrm{TIES}}),
    \qquad
    \widetilde{\tau}_i = m_i \odot \tau_i,
\end{equation}
where $m_i^p=1$ indicates that coordinate $p$ is among the largest
$\rho_{\mathrm{TIES}}$ fraction of entries of $|\tau_i|$. The elected sign at
coordinate $p$ is
\begin{equation}
    \gamma^p = \sign\!\left(\sum_{i=1}^{K} \widetilde{\tau}_i^p\right).
\end{equation}
TIES then keeps only the experts aligned with this sign:
\begin{equation}
    A_p = \left\{ i : \widetilde{\tau}_i^p \neq 0,
    \; \sign(\widetilde{\tau}_i^p)=\gamma^p \right\}.
\end{equation}
The merged coordinate is
\begin{equation}
    \tau_{\TIES}^p =
    \begin{cases}
        \dfrac{1}{|A_p|}\sum_{i\in A_p}\widetilde{\tau}_i^p, & |A_p|>0,\\[6pt]
        0, & |A_p|=0,
    \end{cases}
\end{equation}
and the final model is
\begin{equation}
    \theta_{\TIES}=\theta_0+\lambda\tau_{\TIES}.
\end{equation}

TIES remains an element-wise method, but it is more robust than naive averaging
when many small updates are irrelevant or when experts update the same coordinate
in opposite directions. Its main limitation is that it does not explicitly model
matrix-level subspace geometry.

\textbf{TSV-Merge}, short for Task Singular Vectors Merge, operates on layer-wise task matrices rather than fully flattened vectors. For each 2D
weight matrix, TSV decomposes each expert update with SVD, keeps the leading
singular components, orthogonalizes the concatenated singular-vector bases, and
reconstructs the merged matrix.

For layer $\ell$, compute
\begin{equation}
    \Delta W_i^{(\ell)}=U_i\Sigma_iV_i^\top.
\end{equation}
After truncating to rank $k_\ell$,
\begin{equation}
    \Delta W_i^{(\ell)} \approx
    U_i^{(k)}\Sigma_i^{(k)}{V_i^{(k)}}^\top.
\end{equation}
The truncated bases and spectra are concatenated as
\begin{equation}
    U=[U_1^{(k)}|\cdots|U_K^{(k)}],\qquad
    V=[V_1^{(k)}|\cdots|V_K^{(k)}],\qquad
    \Sigma=\blockdiag(\Sigma_1^{(k)},\ldots,\Sigma_K^{(k)}).
\end{equation}
TSV reduces singular-vector interference by orthogonalizing $U$ and $V$. In the
whitening form,
\begin{equation}
    U_\perp = U(U^\top U)^{-1/2},
    \qquad
    V_\perp = V(V^\top V)^{-1/2}.
\end{equation}
A numerically stable implementation can use the orthogonal Procrustes form. If
$U=P_U D_U Q_U^\top$ and $V=P_V D_V Q_V^\top$, then
\begin{equation}
    U_\perp=P_UQ_U^\top,
    \qquad
    V_\perp=P_VQ_V^\top.
\end{equation}
The merged task matrix is reconstructed as
\begin{equation}
    \widehat{\Delta}^{(\ell)}=U_\perp\Sigma V_\perp^\top,
    \qquad
    W_{\TSV}^{(\ell)}=W_0^{(\ell)}+\alpha\widehat{\Delta}^{(\ell)}.
\end{equation}
For non-matrix parameters, such as biases or normalization vectors, TSV typically
falls back to Task Arithmetic.

TSV is a spectral, subspace-level method. By decorrelating singular-vector bases,
it can reduce interference that is invisible to element-wise rules. This comes at
a higher computational cost because SVDs are required for matrix-shaped layers.
The rank $k_\ell$ is either treated as a hyperparameter or chosen through an
automatic rank-reduction rule.

\textbf{RAM} is designed for sparse RL updates, where naive averaging can dilute
coordinates that are important for only one expert. For each coordinate $p$, RAM
identifies the experts with non-negligible updates:
\begin{equation}
    A_p = \left\{ i \in \{1,\ldots,K\}: |\tau_i^p| > \epsilon_{\mathrm{act}} \right\},
    \qquad
    c^p = |A_p|,
\end{equation}
where $\epsilon_{\mathrm{act}}$ is an active-update threshold. The merged
coordinate is
\begin{equation}
\tau_{\RAM}^p =
\begin{cases}
0, & c^p = 0, \\
\tau_j^p, & c^p = 1 \text{ and } A_p = \{j\}, \\
\dfrac{1}{c^p}\displaystyle\sum_{i\in A_p}\tau_i^p, & c^p \ge 2.
\end{cases}
\label{eq:ram-coordinate}
\end{equation}
Thus, inactive coordinates are discarded, task-unique coordinates are preserved
without being divided by $K$, and shared coordinates are averaged only over the
experts that actively update them. The final model is
\begin{equation}
    \theta_{\RAM} = \theta_0+\tau_{\RAM}.
\end{equation}

RAM is element-wise like TIES, but its selection criterion is different. Instead
of trimming by global magnitude density and resolving signs, RAM separates
inactive, unique, and shared update regions. This makes it suitable when
RL-trained experts store task-specific behavior in sparse and partially
non-overlapping parameter subsets.

\textbf{OrthoMerge}~\cite{yang2026orthogonal} is a geometry-preserving merging
framework that factors each expert update into an \emph{implicit orthogonal
transformation} of the base weights and a Euclidean residual, merges the
orthogonal components on the rotation manifold, and merges the residuals with a
standard element-wise rule. The released implementation provides six variants:
two Procrustes-target constructions (\textsc{C}, which fits the rotation only to
rows whose update conflicts with the mean task vector, and \textsc{G}, which
fits it to the full expert weights) combined with three residual backends
(Task Arithmetic, TIES, TSV). We report the \textsc{G}-TIES variant for two reasons. First, TIES is a strong
residual backend in both of our merging settings: among the element-wise
baselines it attains the best or near-best overall average on both backbones
(Tables~\ref{tab:great_merge1} and~\ref{tab:great_merge2}), so we pair OrthoMerge with
the classical method that is already competitive in our comparison. Second,
the \textsc{G} construction fits the Procrustes rotation to the \emph{full}
expert weights, so the extracted transformation represents the entire expert
update. The \textsc{C} construction instead fits it only to the rows that
conflict with the mean task vector. Since ISO-Merger likewise composes each
expert's full update, represented as motion of its singular frames under the
fixed base spectrum, \textsc{G}-TIES is the directly comparable variant.

For each 2D weight matrix (excluding layer norms), OrthoMerge first solves a
one-sided orthogonal Procrustes problem per expert,
\begin{equation}
    R_i^{(\ell)}
    = \argmin_{R \in \mathrm{O}(n_\ell)}
      \bigl\lVert W_0^{(\ell)} R - W_i^{(\ell)} \bigr\rVert_F
    = \bar{U}\bar{V}^\top,
    \qquad
    {W_0^{(\ell)}}^\top W_i^{(\ell)} = \bar{U}\bar{S}\bar{V}^\top,
\end{equation}
so that the rotation acts only on the input (column) space of the layer. Each
rotation is mapped to a skew-symmetric matrix by the inverse Cayley transform,
\begin{equation}
    A_i^{(\ell)} = \bigl(R_i^{(\ell)} + I\bigr)^{-1}\bigl(R_i^{(\ell)} - I\bigr),
    \qquad
    A_i^{(\ell)} = -{A_i^{(\ell)}}^\top,
\end{equation}
and the $K$ skew matrices are aggregated with a magnitude-corrected mean that
averages direction and intensity separately,
\begin{equation}
    A_{\mathrm{mrg}}^{(\ell)}
    = \Bigl(\tfrac{1}{K}\textstyle\sum_{i=1}^{K}
      \bigl\lVert A_i^{(\ell)} \bigr\rVert_F\Bigr)
      \cdot
      \frac{\sum_{i=1}^{K} A_i^{(\ell)}}
           {\bigl\lVert \sum_{i=1}^{K} A_i^{(\ell)} \bigr\rVert_F},
\end{equation}
which prevents the norm shrinkage that a plain average of nearly orthogonal
directions would induce. The merged rotation is recovered with the forward
Cayley map and applied to the base weights,
\begin{equation}
    R_{\mathrm{mrg}}^{(\ell)}
    = \bigl(I - A_{\mathrm{mrg}}^{(\ell)}\bigr)^{-1}
      \bigl(I + A_{\mathrm{mrg}}^{(\ell)}\bigr),
    \qquad
    W_{\mathrm{rot}}^{(\ell)} = W_0^{(\ell)}R_{\mathrm{mrg}}^{(\ell)}.
\end{equation}
The part of each expert update not captured by its own rotation is treated as a
Euclidean residual,
\begin{equation}
    \delta_i^{(\ell)} = W_i^{(\ell)}-W_0^{(\ell)}R_i^{(\ell)},
\end{equation}
(for non-matrix parameters $\delta_i = \tau_i$), and the residuals are merged
with the TIES rule of trimming, sign election, and sign-consistent averaging
described above. The final model is
\begin{equation}
    W_{\mathrm{OM}}^{(\ell)}
    = W_{\mathrm{rot}}^{(\ell)} + \lambda\, \tau_{\TIES}\bigl(\{\delta_i\}\bigr).
\end{equation}

OrthoMerge is the geometry-aware baseline closest to ISO-Merger, but the two
methods act in different coordinates. OrthoMerge applies a one-sided
orthogonal transformation on the input side of each matrix:
$W_0^{(\ell)}R$ preserves the left singular directions and singular values of
$W_0^{(\ell)}$, so any output-side frame change must be represented through
its Euclidean residual path.

ISO-Merger instead composes two-sided frame changes under the shared base
spectrum, adapting both $U$ and $V$ in fixed-spectrum Stiefel coordinates.
OrthoMerge's rotation class is contained in the more permissive
output-subspace-retaining class $\mathcal C_L$ analyzed in
Section~\ref{sec:transport}. Even this larger class leaves a substantial
unexplained-update ratio, whereas retaining the incoming spectrum while
adapting both frames gives a low-residual description. This motivates
ISO-Merger's two-sided frame parameterization.

\paragraph{Baseline hyperparameters.}
We follow the official implementations for all baselines. For TIES, we use
$\lambda=1.0$ and $\rho_{\mathrm{TIES}}=0.2$. For TSV-Merge, we use
$\alpha=1.0$ with automatic rank reduction. For RAM, we use
$\epsilon=10^{-5}$. For ISO-Merger, all SVD operations are performed in FP64. For OrthoMerge-G-TIES, we use the official implementation with its default
hyperparameters: FP32 SVD-based Procrustes, $20\%$ retained density and
$\lambda=1.0$ for the TIES residual stage. 

\paragraph{Evaluation protocol.}
For the Qwen2.5 setting, we evaluate coding on LiveBench and LiveCodeBench,
tool use on BFCL, and long-context memory on RULER HotpotQA and SQuAD at
32K--64K context lengths. For the DeepSeek-R1-Distill setting, we evaluate
coding on LiveBench and LiveCodeBench, and math on AIME24, AIME25, AMC23,
Minerva, and OlympiadBench. Unless otherwise specified, we generate 16 rollouts
per prompt and report mean and standard deviation over three independent runs.

\paragraph{Decoding settings.}
For DeepSeek-R1-Distill math and coding evaluations, we use temperature $0.6$
and top-$p=0.95$. For Qwen2.5 LiveBench and LiveCodeBench, we use temperature
$1.0$. 
For BFCL, we use single-sample near-greedy decoding with temperature
$0.001$, following the official evaluation protocol.
For RULER
long-context evaluation, we use temperature $0.7$. Inference is conducted with
vLLM on NVIDIA H100 GPUs.

\paragraph{Additional distributional metrics.}
Among $n$ sampled responses, let $c$ denote the number with binary correctness
equal to one. For $n\geq k$, we report
\[
    \widehat{\mathrm{best@}k}
    =
    1-
    \frac{\binom{n-c}{k}}{\binom nk},
    \qquad
    \widehat{\mathrm{worst@}k}
    =
    \frac{\binom ck}{\binom nk}.
\]

In addition to average@16, we report best@4 and worst@4 to characterize the
upper and lower tails of stochastic decoding performance. 

\begin{itemize}
    \item \textbf{best@k}$=$pass@k, the probability that at least one of $k$ randomly drawn samples solves the problem (best-case behaviour over $k$ draws).
    \item \textbf{worst@k}$=\binom{c}{k}\big/\binom{n}{k}$, the probability that all $k$ randomly drawn samples solve the problem (consistency / worst-case reading).
\end{itemize}

For the Qwen2.5 setting, unit-test accuracy of LB/LCB coding benchmark is the fraction of unit tests passed, a real value in $[0,1]$ rather than a binary outcome. BFCL is evaluated with single-sample, near-greedy decoding with temperature 0.001, following the official leaderboard protocol. We therefore report best@4 and worst@4 only for the remaining benchmarks with binary per-sample correctness.

\begin{table}[h]
\centering
\small
\renewcommand{\arraystretch}{1.2}
\setlength{\tabcolsep}{4pt}
\caption{\textbf{Main results on Qwen2.5-7B-Instruct with 3 RL experts under Best@4.} Coding: pass accuracy (ACC) on LiveBench (LB) and LiveCodeBench (LCB). Memory: RULER HotpotQA and SQuAD at 32K–64K context lengths under the Recurrent setting. Best expert/merged result per column in  \textbf{bold}. Shading indicates \colorbox{expertcolor}{Experts} and \colorbox{ourcolor}{Ours}.}\label{tab:merging_qwen_best}
\resizebox{\textwidth}{!}{
\begin{tabular}{l cc cccc c}
\toprule
& \multicolumn{2}{c}{Coding} & \multicolumn{4}{c}{Memory} & \\
\cmidrule(lr){2-3} \cmidrule(lr){4-7}
& LB & LCB v2 & \multicolumn{2}{c}{HotpotQA} & \multicolumn{2}{c}{SQuAD} & \\
\cmidrule(lr){2-2} \cmidrule(lr){3-3} \cmidrule(lr){4-5} \cmidrule(lr){6-7}
Method & ACC & ACC & 32K & 64K & 32K & 64K & Avg \\
\midrule
Base & 48.35 $\pm$ 1.46 & 39.93 $\pm$ 0.64 & 68.98 $\pm$ 1.44 & 66.54 $\pm$ 1.51 & 79.62 $\pm$ 1.23 & 77.20 $\pm$ 0.30 & 63.44 \\
\rowcolor{expertcolor}RLVR-Coder & 48.97 $\pm$ 1.20 & \textbf{41.13 $\pm$ 1.50} & 70.03 $\pm$ 0.33 & 66.33 $\pm$ 0.79 & 81.90 $\pm$ 0.59 & 77.60 $\pm$ 0.42 & 64.33 \\
\rowcolor{expertcolor}RLVR-Tool & \textbf{51.54 $\pm$ 2.17} & 39.91 $\pm$ 1.66 & 69.31 $\pm$ 0.61 & 67.71 $\pm$ 1.53 & 81.12 $\pm$ 0.80 & 78.74 $\pm$ 0.65 & 64.72 \\
\rowcolor{expertcolor}RLVR-Memory & 49.74 $\pm$ 2.61 & 37.49 $\pm$ 3.49 & \textbf{86.21 $\pm$ 0.26} & \textbf{86.38 $\pm$ 0.90} & \textbf{87.26 $\pm$ 0.32} & \textbf{88.54 $\pm$ 0.39} & 72.60 \\
\midrule
Task Arithmetic & 51.99 $\pm$ 1.78 & 42.46 $\pm$ 0.28 & 81.65 $\pm$ 0.60 & 81.37 $\pm$ 0.37 & 84.13 $\pm$ 0.44 & 87.58 $\pm$ 0.31 & 71.53 \\
TIES & 50.04 $\pm$ 2.18 & 42.34 $\pm$ 1.42 & 85.80 $\pm$ 0.18 & 84.32 $\pm$ 0.04 & 85.84 $\pm$ 0.36 & 88.05 $\pm$ 0.37 & 72.73 \\
TSV & 48.82 $\pm$ 0.29 & 39.04 $\pm$ 2.44 & 84.75 $\pm$ 0.37 & 83.54 $\pm$ 0.37 & 86.39 $\pm$ 0.35 & 87.72 $\pm$ 0.44 & 71.71 \\
RAM & 48.73 $\pm$ 0.54 & 41.06 $\pm$ 0.25 & 85.91 $\pm$ 0.52 & 84.17 $\pm$ 0.61 & 86.62 $\pm$ 0.43 & \textbf{88.39 $\pm$ 0.57} & 72.48 \\
OrthoMerge-G-TIES~\cite{yang2026orthogonal}  & \textbf{52.20 $\pm$ 2.00} & \textbf{42.52$ \pm$ 1.77} & 84.38 $\pm$ 0.37 & 83.47 $\pm$ 0.40 & 86.06 $\pm$ 0.26 & 87.66 $\pm$ 0.41 & 72.72\\
\midrule
\rowcolor{ourcolor}Ours & 50.39 $\pm$ 1.98 & 40.68 $\pm$ 1.99 & \textbf{86.05 $\pm$ 0.78} & \textbf{84.92 $\pm$ 0.37} & \textbf{86.98 $\pm$ 0.21} & 88.23 $\pm$ 0.38 & \textbf{72.88} \\
\bottomrule
\end{tabular}
}
\end{table}
\begin{table}[h]
\centering
\small
\renewcommand{\arraystretch}{1.2}
\setlength{\tabcolsep}{4pt}
\caption{\textbf{Main results on Qwen2.5-7B-Instruct with 3 RL experts under Worst@4.} Coding: pass accuracy (ACC) on LiveBench (LB) and LiveCodeBench (LCB). Memory: RULER HotpotQA and SQuAD at 32K–64K context lengths under the Recurrent setting. Best expert/merged result per column in  \textbf{bold}. Shading indicates \colorbox{expertcolor}{Experts} and \colorbox{ourcolor}{Ours}.}\label{tab:merging_qwen_worst}
\resizebox{\textwidth}{!}{
\begin{tabular}{l cc cccc c}
\toprule
& \multicolumn{2}{c}{Coding} & \multicolumn{4}{c}{Memory} & \\
\cmidrule(lr){2-3} \cmidrule(lr){4-7}
& LB & LCB v2 & \multicolumn{2}{c}{HotpotQA} & \multicolumn{2}{c}{SQuAD} & \\
\cmidrule(lr){2-2} \cmidrule(lr){3-3} \cmidrule(lr){4-5} \cmidrule(lr){6-7}
Method & ACC & ACC & 32K & 64K & 32K & 64K & Avg \\
\midrule
Base & 24.44 $\pm$ 0.38 & 16.99 $\pm$ 0.11 & 30.76 $\pm$ 0.95 & 24.34 $\pm$ 0.24 & 37.71 $\pm$ 1.31 & 35.55 $\pm$ 0.92 & 28.30 \\
\rowcolor{expertcolor}RLVR-Coder & \textbf{25.84 $\pm$ 1.57} & \textbf{19.96 $\pm$ 0.41} & 30.37 $\pm$ 0.84 & 24.72 $\pm$ 0.80 & 38.49 $\pm$ 1.38 & 35.89 $\pm$ 0.65 & 29.21 \\
\rowcolor{expertcolor}RLVR-Tool & 21.87 $\pm$ 0.84 & 16.78 $\pm$ 1.66 & 30.61 $\pm$ 1.23 & 23.48 $\pm$ 1.03 & 37.83 $\pm$ 1.69 & 35.89 $\pm$ 1.27 & 27.74 \\
\rowcolor{expertcolor}RLVR-Memory & 24.26 $\pm$ 0.76 & 17.28 $\pm$ 2.64 & \textbf{70.22 $\pm$ 0.73} & \textbf{68.40 $\pm$ 0.14} & \textbf{69.66 $\pm$ 0.80} & \textbf{67.09 $\pm$ 0.51} & \textbf{52.82} \\
\midrule
Task Arithmetic & 26.24 $\pm$ 2.29 & 21.46 $\pm$ 0.52 & 61.86 $\pm$ 0.16 & 58.54 $\pm$ 0.90 & 64.89 $\pm$ 0.89 & 61.44 $\pm$ 0.56 & 49.07 \\
TIES & 30.48 $\pm$ 2.58 & 21.41 $\pm$ 0.66 & 65.91 $\pm$ 0.74 & 65.12 $\pm$ 0.20 & \textbf{69.79 $\pm$ 0.59} & \textbf{66.66 $\pm$ 0.46} & 53.23 \\
TSV & 26.12 $\pm$ 1.78 & 19.39 $\pm$ 1.36 & 65.33 $\pm$ 0.99 & 63.43 $\pm$ 0.55 & 69.36 $\pm$ 0.90 & 65.32 $\pm$ 0.98 & 51.49 \\
RAM & 27.41 $\pm$ 1.87 & 21.72 $\pm$ 0.57 & 65.30 $\pm$ 0.34 & 65.82 $\pm$ 0.90 & 69.57 $\pm$ 0.43 & 65.75 $\pm$ 0.61 & 52.59 \\
OrthoMerge-G-TIES~\cite{yang2026orthogonal}  & 28.71 $\pm$ 2.11 & 21.65 $\pm$ 0.66 & 64.88 $\pm$ 1.15 & 64.80 $\pm$ 0.59 & 68.31 $\pm$ 0.38 & 63.62 $\pm$ 0.54 & 52.00\\
\midrule
\rowcolor{ourcolor}Ours & \textbf{33.50 $\pm$ 2.28} & \textbf{21.80 $\pm$ 1.61} & \textbf{71.78 $\pm$ 0.96} & \textbf{66.12 $\pm$ 0.31} & 69.75 $\pm$ 0.29 & 66.13 $\pm$ 0.52 & \textbf{54.85} \\
\bottomrule
\end{tabular}
}
\end{table}
\begin{table}[h]
\centering
\small
\renewcommand{\arraystretch}{1.2}
\setlength{\tabcolsep}{3pt}
\caption{\textbf{Main results on DeepSeek-R1-Distill-Qwen-1.5B with 2 RL experts under Best@4}. Coding ACC on LB and LCB v5. Math: ACC on AIME24, AIME25, AMC23, Minerva, and OlympiadBench. Best expert/merged result per column in \textbf{bold}. Shading indicates \colorbox{expertcolor}{Experts} and \colorbox{ourcolor}{Ours}.}\label{tab:merging_ds_best}
\resizebox{\textwidth}{!}{
\begin{tabular}{l cc ccccc c}
\toprule
& \multicolumn{2}{c}{Coding} & \multicolumn{5}{c}{Math} & \\
\cmidrule(lr){2-3} \cmidrule(lr){4-8}
& LB & LCB v5 & AIME24 & AIME25 & AMC23 & Minerva & Olympiad & Total \\
\cmidrule(lr){2-2} \cmidrule(lr){3-3} \cmidrule(lr){4-4} \cmidrule(lr){5-5} \cmidrule(lr){6-6} \cmidrule(lr){7-7} \cmidrule(lr){8-8}
Method & ACC & ACC & ACC & ACC & ACC & ACC & ACC & Avg \\
\midrule
Base~\cite{deepseek_r1} & 29.68 $\pm$ 0.17 & 25.96 $\pm$ 0.42 & 54.49 $\pm$ 1.34 & 33.34 $\pm$ 0.99 & 80.49 $\pm$ 0.80 & 41.67 $\pm$ 2.00 & 56.05 $\pm$ 0.56 & 45.95 \\
\rowcolor{expertcolor}Archer2.0~\cite{archer} & \textbf{36.11 $\pm$ 0.45} & \textbf{36.75 $\pm$ 0.27} & 64.74 $\pm$ 0.72 & 39.88 $\pm$ 1.12 & 86.49 $\pm$ 0.12 & 43.38 $\pm$ 0.52 & 61.19 $\pm$ 0.32 & 52.65 \\
\rowcolor{expertcolor}JustRL~\cite{he2025justrl} & 31.67 $\pm$ 0.42 & 32.63 $\pm$ 0.62 & \textbf{71.62 $\pm$ 0.16} & \textbf{49.25 $\pm$ 1.32} & \textbf{90.83 $\pm$ 0.65} & \textbf{45.22 $\pm$ 1.08} & \textbf{64.49 $\pm$ 0.91} & 55.10 \\
\midrule
Task Arithmetic~\cite{ta} & \textbf{36.10 $\pm$ 0.45} & 37.18 $\pm$ 0.35 & 72.34 $\pm$ 0.54 & 47.12 $\pm$ 2.32 & 90.51 $\pm$ 0.17 & 44.61 $\pm$ 0.46 & 64.20 $\pm$ 0.56 & 56.01 \\
TIES~\cite{ties} & 34.31 $\pm$ 0.69 & 37.66 $\pm$ 0.31 & \textbf{73.40 $\pm$ 0.42} & 47.37 $\pm$ 0.77 & 89.62 $\pm$ 0.38 & 45.10 $\pm$ 0.87 & 64.35 $\pm$ 0.35 & 55.97 \\
TSV~\cite{tsv} & 35.84 $\pm$ 0.34 & \textbf{37.67 $\pm$ 0.39} & 72.41 $\pm$ 1.07 & 46.94 $\pm$ 1.18 & 89.93 $\pm$ 1.10 & \textbf{45.10 $\pm$ 0.92} & 64.94 $\pm$ 0.42 & 56.12 \\
RAM~\cite{ram} & 35.34 $\pm$ 0.87 & 36.96 $\pm$ 1.04 & 73.06 $\pm$ 0.62 & 49.50 $\pm$ 1.81 & 90.36 $\pm$ 0.63 & 44.98 $\pm$ 0.76 & 64.59 $\pm$ 0.55 & 56.40 \\
OrthoMerge-G-TIES~\cite{yang2026orthogonal}  & 35.18 $\pm$ 0.72 & 37.47 $\pm$ 0.50 & 73.95 $\pm$ 1.16 & 47.13 $\pm$ 0.96 & 89.82 $\pm$ 0.51 & 44.36 $\pm$ 0.56 & 64.74 $\pm$ 0.68 & 56.09\\
\midrule
\rowcolor{ourcolor}Ours & 34.04 $\pm$ 0.56 & 36.87 $\pm$ 0.09 & 72.69 $\pm$ 0.34 & \textbf{50.43 $\pm$ 0.71} & \textbf{91.31 $\pm$ 0.61} & 44.73 $\pm$ 0.87 & \textbf{64.99 $\pm$ 0.56} & \textbf{56.44} \\
\bottomrule
\end{tabular}
}
\end{table}

\begin{table}[h]
\centering
\small
\renewcommand{\arraystretch}{1.2}
\setlength{\tabcolsep}{3pt}
\caption{\textbf{Main results on DeepSeek-R1-Distill-Qwen-1.5B with 2 RL experts under Worst@4}. Coding ACC on LB and LCB v5. Math: ACC on AIME24, AIME25, AMC23, Minerva, and OlympiadBench. Best expert/merged result per column in \textbf{bold}. Shading indicates \colorbox{expertcolor}{Experts} and \colorbox{ourcolor}{Ours}.}\label{tab:merging_ds_worst}
\resizebox{\textwidth}{!}{
\begin{tabular}{l cc ccccc c}
\toprule
& \multicolumn{2}{c}{Coding} & \multicolumn{5}{c}{Math} & \\
\cmidrule(lr){2-3} \cmidrule(lr){4-8}
& LB & LCB v5 & AIME24 & AIME25 & AMC23 & Minerva & Olympiad & Total \\
\cmidrule(lr){2-2} \cmidrule(lr){3-3} \cmidrule(lr){4-4} \cmidrule(lr){5-5} \cmidrule(lr){6-6} \cmidrule(lr){7-7} \cmidrule(lr){8-8}
Method & ACC & ACC & ACC & ACC & ACC & ACC & ACC & Avg \\
\midrule
Base & 8.38 $\pm$ 0.38 & 8.95 $\pm$ 0.47 & 12.42 $\pm$ 0.92 & 12.96 $\pm$ 0.56 & 46.02 $\pm$ 0.64 & 14.95 $\pm$ 0.76 & 30.17 $\pm$ 0.73 & 19.12 \\
\rowcolor{expertcolor}Archer2.0 & \textbf{15.95 $\pm$ 0.80} & \textbf{17.39 $\pm$ 0.24} & 19.80 $\pm$ 0.63 & 17.36 $\pm$ 0.36 & 57.83 $\pm$ 1.03 & 18.75 $\pm$ 1.08 & 38.22 $\pm$ 0.60 & 26.47 \\
\rowcolor{expertcolor}JustRL & 14.02 $\pm$ 0.22 & 14.42 $\pm$ 0.14 & \textbf{33.61 $\pm$ 0.93} & \textbf{27.25 $\pm$ 1.15} & \textbf{72.89 $\pm$ 0.65} & \textbf{23.90 $\pm$ 0.60} & \textbf{46.32 $\pm$ 0.14} & \textbf{33.20} \\
\midrule
Task Arithmetic & 15.82 $\pm$ 0.26 & 16.67 $\pm$ 0.50 & 28.49 $\pm$ 0.56 & 20.99 $\pm$ 0.47 & 68.35 $\pm$ 0.85 & 21.94 $\pm$ 0.92 & 43.41 $\pm$ 0.64 & 30.81 \\
TIES & \textbf{17.76 $\pm$ 0.87} & 18.22 $\pm$ 0.27 & 33.40 $\pm$ 1.45 & 25.21 $\pm$ 0.22 & 71.85 $\pm$ 0.97 & 23.28 $\pm$ 1.25 & 45.23 $\pm$ 0.07 & 33.57 \\
TSV & 15.91 $\pm$ 0.52 & 17.46 $\pm$ 0.16 & 32.33 $\pm$ 0.38 & 24.69 $\pm$ 1.48 & 68.36 $\pm$ 1.44 & 24.02 $\pm$ 0.87 & 42.86 $\pm$ 0.81 & 32.23 \\
RAM & 17.41 $\pm$ 1.11 & 17.84 $\pm$ 0.25 & 33.90 $\pm$ 0.86 & 24.10 $\pm$ 0.14 & 71.46 $\pm$ 0.29 & 25.12 $\pm$ 0.35 & 45.48 $\pm$ 0.12 & 33.62 \\
OrthoMerge-G-TIES~\cite{yang2026orthogonal}  & 16.45 $\pm$ 0.67 & 18.27 $\pm$ 0.19 & 33.21 $\pm$ 1.99 & 24.86 $\pm$ 0.60 & 70.85 $\pm$ 0.80 & 23.77 $\pm$ 0.21 & 45.28 $\pm$ 0.60 & 33.24\\
\midrule
\rowcolor{ourcolor}Ours & 16.18 $\pm$ 0.18 & \textbf{18.95 $\pm$ 0.45} & \textbf{36.10 $\pm$ 1.01} & \textbf{27.41 $\pm$ 0.83} & \textbf{73.23 $\pm$ 1.81} & \textbf{25.12 $\pm$ 1.21} & \textbf{47.85 $\pm$ 0.12} & \textbf{34.98} \\
\bottomrule
\end{tabular}
}
\end{table}

Tables~\ref{tab:merging_qwen_best},~\ref{tab:merging_qwen_worst},
~\ref{tab:merging_ds_best}, and~\ref{tab:merging_ds_worst} show that
ISO-Merger achieves the highest upper and lower tails under both backbones.
In particular, for worst@4, which is substantially more challenging,
ISO-Merger reaches an overall average of $54.85$, compared with $53.23$ for
the strongest training-free baseline. On
\texttt{DeepSeek-R1-Distill-Qwen-1.5B}, ISO-Merger reaches a total average of
$34.98$, improving over the best baseline at $33.62$. For best@4, ISO-Merger
remains competitive with the original specialists while composing multiple
skills into a single model. These results further demonstrate that different
RL experts initialized from the same base model can be combined in
fixed-spectrum Stiefel coordinates.

\FloatBarrier

\section{Numerical Precision of the SVD-Based Retraction}
\label{sec:svd_precision_issue}

In exact arithmetic, the ISO reconstruction
\[
    W^+
    =
    U^+\Sigma_0(V^+)^\top
\]
has singular values $\operatorname{diag}(\Sigma_0)$ whenever
$U^+$ and $V^+$ have orthonormal columns. In practice, however, the polar
retraction is computed using a finite-precision SVD, so numerical errors in
the decomposition can weaken this invariant.

We therefore compare four PyTorch SVD configurations:
FP32 and FP64 on CPU and GPU. As a representative sanity check, we extract the
\texttt{q\_proj} matrix from layer 10 of
\texttt{Qwen3-1.7B-Base},
\[
    W_0\in\mathbb R^{2048\times2048},
\]
compute
\[
    W_0=U\Sigma V^\top,
\]
and reconstruct
\[
    W_{\mathrm{rec}}=U\Sigma V^\top.
\]
We report the mean squared spectral reconstruction error
\begin{equation}
\label{eq:svd_spectral_error}
\small
    \epsilon_{\sigma}
    :=
    \frac{1}{q}
    \left\|
        \sigma(W_{\mathrm{rec}})
        -
        \sigma(W_0)
    \right\|_2^2,
\end{equation}
together with the wall-clock decomposition time.

Table~\ref{tab:svd_precision} shows that FP32 GPU SVD introduces substantially
larger numerical error than FP64 GPU SVD:
$7.7899\times10^{-4}$ versus
$2.8924\times10^{-8}$ in this representative matrix.
FP64 reduces the error by approximately
$2.7\times10^4$ while increasing the measured runtime only from
$0.894$ to $0.923$ seconds.
We therefore use FP64 GPU SVD for the polar retraction throughout ISO.

\begin{table}[ht!]
    \centering
    \caption{
        Runtime and mean squared spectral reconstruction error under different
        PyTorch SVD precision and device configurations.
        }
    \label{tab:torchsvd_speed_error}
    \begin{tabular}{cc|cc}
        \toprule
        Precision & Device & Mean squared spectral error & Time (s) \\
        \midrule
        FP32  & CPU & 3.6633e-7 & 2.115  \\
        FP64  & CPU & 1.1126e-8 & 3.699 \\
        FP32  & GPU & \textbf{7.7899e-4} & 0.894 \\
        FP64  & GPU & 2.8924e-8 & 0.923 \\
        \bottomrule
    \end{tabular}\label{tab:svd_precision}
\end{table}

\FloatBarrier

\section{Online RLVR Training Details}
\label{app:training_details}
\label{app:coding_details}

The ISO parameterization, factor gradients, and retraction are described in
Section~\ref{sec:iso_optimizer}. This appendix reports the training and
evaluation configurations used in the online RLVR experiments.

\paragraph{Common setup.}
All experiments are implemented using
\textsc{Verl}~\cite{sheng2025hybridflow} and run on NVIDIA A100 80\,GB GPUs.
We use the DAPO algorithm~\cite{yu2025dapo} and hold all non-optimizer
settings fixed between the weight-space and ISO variants.

We use asymmetric policy-ratio clipping with
\begin{equation}
\small
    (\epsilon_{\mathrm{low}},\epsilon_{\mathrm{high}})
    =
    (0.2,0.28).
\end{equation}
The KL coefficient is set to
\[
    \beta_{\mathrm{KL}}
    =
    10^{-3},
\]
with the KL term applied as a loss-shaping term rather than as part of the
rollout reward. We enable online dynamic filtering by removing prompt groups
whose sampled rollouts are either all correct or all incorrect. Unless stated
otherwise, training rollouts use temperature $1.0$ and top-$p=1.0$.

We set weight decay to zero for both the weight-space and ISO variants. At the
learning rates used here, the standard coefficient $\lambda=10^{-2}$ falls
below the BF16-visible update scale and provides no measurable benefit in our
RLVR runs~\cite{zhu2025path}.

\paragraph{Mathematical reasoning.}
We train
\texttt{Qwen3-1.7B-Base},
\texttt{Qwen3-4B-Base}, and
\texttt{Qwen3-8B-Base}~\cite{qwen3technicalreport}
on \textsc{DeepMath}-103K~\cite{deepmath}, following prior RLVR
post-training recipes~\cite{schulman2025lora}.

For \texttt{Qwen3-1.7B-Base} and \texttt{Qwen3-4B-Base}, we use a global
prompt batch size of $256$ and train for $400$ training steps, corresponding to
approximately one nominal pass over the prompt set before online filtering.
The 1.7B runs sample $16$ rollouts per prompt, whereas the 4B runs sample $12$.
Both use a maximum prompt length of $1{,}024$ tokens and a maximum response
length of $8{,}192$ tokens.

ISO-AdamW uses a learning rate of
$7.5\times10^{-7}$ for the frame variables $(U,V)$, followed by polar
retraction after every tentative factor update.\footnote{On
\texttt{Qwen3-1.7B-Base}, we compare learning rates
$7.5\times10^{-7}$ and $1\times10^{-6}$. The former performs better, so we
use it throughout the math experiments.}
For weight-space AdamW, we sweep
\[
    \left\{
        5\times10^{-7},
        \;7.5\times10^{-7},
        \;1\times10^{-6},
        \;2\times10^{-6},
        \;3\times10^{-6}
    \right\}.
\]
For the weight-space Muon baselines, we sweep
\[
    \left\{
        2.5\times10^{-5},
        \;5\times10^{-5},
        \;7.5\times10^{-5},
        \;1\times10^{-4}
    \right\}.
\]
The ISO-Muon experiment is conducted on
\texttt{Qwen3-4B-Base} for $300$ training steps using a learning rate of
$5\times10^{-5}$.

For \texttt{Qwen3-8B-Base}, we retain the global prompt batch size of $256$ with a rollout size of 12.
We compare ISO-AdamW with learning rate
$7.5\times10^{-7}$ against weight-space AdamW with learning rate
$2\times10^{-6}$, the strongest AdamW setting identified in the smaller-model
sweeps. Neither method is further tuned at 8B. Both runs are trained for
$210$ training steps. To test whether the AdamW baseline is undertrained, we
continue it for an additional $60$ training steps.

The 8B runs begin with a maximum response length of $8{,}192$ tokens. Because
ISO-AdamW produces longer responses at this scale, we increase the maximum
response length to $16{,}384$ tokens for both methods after step $80$.
Evaluation uses a $16{,}384$-token cap throughout.

We evaluate mathematical reasoning on AIME 2024, AIME 2025, AMC 2023,
Minerva, and OlympiadBench. For each problem, we sample $16$ responses using
temperature $1.0$ and top-$p=0.8$. The maximum evaluation length is
$8{,}192$ tokens for the 1.7B and 4B models and $16{,}384$ tokens for the
8B model.
For Table~\ref{tab:math_main_results}, we
select the checkpoint with the highest aggregate score among the final three
evaluations and report all benchmark scores from that checkpoint.

\paragraph{Competitive coding.}
For competitive coding, we train
\texttt{DeepSeek-R1-Distill-Qwen-1.5B}
~\cite{deepseek_r1}
on the \textsc{ArcherCodeR} training split~\cite{wang2025stabilizing}, which
contains $6{,}753$ problems. We use a global prompt batch size of $128$ and
sample $16$ rollouts per problem. The maximum prompt length is $2{,}048$
tokens, and the maximum response length is $32{,}768$ tokens.

ISO-AdamW uses a learning rate of
$1\times10^{-6}$ for $(U,V)$, followed by polar retraction after each factor
update. For the weight-space AdamW baseline, we sweep
\[
    \left\{
        1\times10^{-6},
        \;2\times10^{-6},
        \;3\times10^{-6},
        \;5\times10^{-6}
    \right\},
\]
where $1\times10^{-6}$ is the learning rate used by the original
Archer-style recipe.

We train the primary runs for $220$ training steps. At a prompt batch size of
$128$, this budget corresponds to approximately
\[
    \frac{220\times128}{6753}
    \approx
    4.17
\]
nominal passes over the prompt set before accounting for online filtering.
Because the coding dataset is relatively small, longer training can enter a
multi-pass overfitting regime. To test whether the primary budget truncates the
weight-space baseline prematurely, we continue the two strongest AdamW runs,
with learning rates $3\times10^{-6}$ and $5\times10^{-6}$, to $330$ training
steps.

For evaluation, we report results on
\textsc{LiveCodeBench}~\cite{livecodebench} v5 and v6. We sample $8$
responses per problem using temperature $0.8$ and top-$p=1.0$, and report
average accuracy under the same evaluation protocol for all methods.






\end{document}